\definecolor{SoftGray}{gray}{0.93}
\definecolor{SoftBlue}{rgb}{0.90, 0.95, 1.0}
\definecolor{SoftGreen}{rgb}{0.90, 1.0, 0.90}
\definecolor{SoftLime}{rgb}{0.95, 1.0, 0.90}
\newcommand{\softmax}{\mathrm{softmax}}
\newcommand{\eqfit}[1]{%
  \begin{adjustbox}{max width=\linewidth,center}%
  $\displaystyle #1$%
  \end{adjustbox}%
}
\newcommand{\eqfitalign}[1]{%
  \begin{adjustbox}{max width=\linewidth,center}%
  $\begin{aligned}#1\end{aligned}$%
  \end{adjustbox}%
}
\def\eqref#1{equation~\ref{#1}}
\def\1{\bm{1}}
\DeclareMathAlphabet{\mathsfit}{\encodingdefault}{\sfdefault}{m}{sl}
\SetMathAlphabet{\mathsfit}{bold}{\encodingdefault}{\sfdefault}{bx}{n}
\newtheorem{definition}{Definition}
\newtheorem{remark}{Remark}
\newtheorem{lemma}{Lemma}
\newtheorem{theorem}{Theorem}
\newtheorem{corollary}{Corollary}
\newtheorem{proposition}{Proposition}
\newtheorem{assumption}{Assumption}
\renewcommand{\eqref}[1]{(\ref{#1})}
\definecolor{SoftmaxRed}{HTML}{EB4537}   
\definecolor{HedgehogYellow}{HTML}{FAC230} 
\definecolor{LunaBlue}{HTML}{4286F3}     
\definecolor{PlaceGrey}{RGB}{230,230,230}
\definecolor{SoftmaxRed}{HTML}{EB4537}
\definecolor{HedgehogYellow}{HTML}{FAC230}
\definecolor{LunaBlue}{HTML}{4286F3}
\definecolor{PlaceGrey}{RGB}{230,230,230}
\title{LUNA: Linear Universal Neural Attention with Generalization Guarantees}
\author{
\textbf{Ashkan Shahbazi}$^{\ast,1}$\qquad
\textbf{Ping He}$^{\ast,1}$\qquad
\textbf{Ali Abbasi}$^{1}$\\[0.3em]
\textbf{Yikun Bai}$^{1}$\qquad
\textbf{Xinran Liu}$^{1}$\qquad
\textbf{Elaheh Akbari}$^{1}$\\[0.3em]
\textbf{Darian Salehi}$^{3}$\qquad
\textbf{Navid NaderiAlizadeh}$^{4}$\qquad
\textbf{Soheil Kolouri}$^{1,2}$\\[0.8em]
$^{1}$Department of Computer Science, Vanderbilt University, Nashville, TN, USA\\[0.2em]
$^{2}$Department of Electrical \& Computer Engineering, Vanderbilt University, Nashville, TN, USA\\[0.2em]
$^{3}$Department of Computer Science, Duke University, Durham, NC, USA\\[0.2em]
$^{4}$Department of Biostatistics \& Bioinformatics, Duke University, Durham, NC, USA\\[0.5em]
$^{\ast}$Equal contribution
}
\begin{document}
\maketitle
\begin{abstract}
Scaling attention faces a critical bottleneck: the $\mathcal{O}(n^2)$ quadratic computational cost of softmax attention, which limits its application in long-sequence domains. While linear attention mechanisms reduce this cost to $\mathcal{O}(n)$, they typically rely on fixed random feature maps, such as random Fourier features or hand-crafted functions. This reliance on static, data-agnostic kernels creates a fundamental trade-off, forcing practitioners to sacrifice significant model accuracy for computational efficiency. We introduce \textsc{LUNA}, a kernelized linear attention mechanism that eliminates this trade-off, retaining linear cost while matching and surpassing the accuracy of quadratic attention. \textsc{LUNA} is built on the key insight that the kernel feature map itself should be learned rather than fixed a priori. By parameterizing the kernel, \textsc{LUNA} learns a feature basis tailored to the specific data and task, overcoming the expressive limitations of fixed-feature methods. \textsc{Luna} implements this with a learnable feature map that induces a positive-definite kernel and admits a streaming form, yielding linear time and memory scaling in the sequence length. Empirical evaluations validate our approach across diverse settings. On the Long Range Arena (LRA), \textsc{Luna} achieves state-of-the-art average accuracy among efficient Transformers under compute parity, using the same parameter count, training steps, and approximate FLOPs. \textsc{Luna} also excels at post-hoc conversion: replacing softmax in fine-tuned BERT and ViT-B/16 checkpoints and briefly fine-tuning recovers most of the original performance, substantially outperforming fixed linearizations.
\end{abstract}


\section{Introduction}
\label{sec:intro}


Transformers \citep{vaswani2017attention} underpin state-of-the-art systems across language \citep{grattafiori2024llama}, vision \citep{khan2022transformers}, audio \citep{gulati2020conformer}, multi-modal learning \citep{liu2023visual}, and scientific domains \citep{jumper2021highly,dalla2025nucleotide}. Their core mechanism, attention, models long-range token dependencies but incurs quadratic cost in the sequence length, which limits context scaling. This has motivated a large literature on \emph{linear attention}, which reduces complexity via structured sparsity, low-rank compression, or kernel feature expansions \citep{katharopoulos2020transformers,wang2020linformer,choromanski2020rethinking,chen2021skyformer,xiong2021nystromformer,meng2025polaformer,shen2021efficientattention}. However, these architectures typically commit to a fixed kernel or feature map, whether derived from the softmax exponential kernel or hand-crafted nonlinearities, and thus cannot adapt their inductive bias to the statistics of a given task or dataset. Our work follows this line but asks a different question: rather than fixing the kernel feature map \emph{a priori} (random or engineered), can we \emph{learn} the feature family directly from data—preserving linear complexity while tailoring the kernel to the task?

Learning such kernels while preserving the linear-attention regime is non-trivial. The streaming formulations that enable linear time and memory typically rely on rigid algebraic structure: the kernel must admit a non-negative feature representation and a stable decomposition into key- and query-side statistics. Naively parameterizing the feature map can break positive-definiteness, destroy the streaming factorization, or lead to brittle training dynamics. Instead, we seek a learnable kernel family that preserves the linear computation while exposing enough flexibility to adjust its inductive bias across architectures and domains.

We address this challenge by introducing \textsc{Luna}, a linear-time attention mechanism that replaces hand-crafted random features with a fully learnable kernel feature family. Concretely, \textsc{Luna} parameterizes (i) input projection matrices that capture distributional structure and (ii) a bank of channel functions with a token-wise envelope that together define the kernel nonlinearity. Attention is computed via the standard kernelized factorization in linear time, making \textsc{Luna} a drop-in replacement for softmax or prior linear modules. We train all components end-to-end on each task using the task loss. This separation of representation (the learned nonlinearity) and mixing (the projections) yields expressive efficiency: the model retains linear-time complexity in the sequence length while adapting its feature basis to data rather than sampling from a fixed spectral measure.

Beyond training from scratch, \textsc{Luna} also supports \emph{post-hoc conversion} of quadratic models: given a fine-tuned checkpoint (e.g., BERT-base on GLUE or ViT-B/16 on ImageNet-1K), we replace each softmax attention layer with its \textsc{Luna} linear counterpart and briefly fine-tune to recover accuracy, thereby avoiding reliance on the exponential softmax kernel while remaining compatible with existing architectures. Theoretically, for a single-layer instantiation, we derive a feature-level Rademacher complexity bound showing that, under standard norm and Lipschitz assumptions, the hypothesis class induced by our learned kernel family has complexity scaling as $\tilde{\mathcal{O}}(1/\sqrt{n})$ with controlled dependence on that family.

\noindent\textbf{Contributions~}in this work are summarized as:
\begin{itemize}
\item We introduce a positive-definite, kernelized attention with \emph{learnable} feature maps that subsume fixed random-feature schemes (e.g., random Fourier features) while preserving linear-time/memory structure. This design enables the model to discover kernels auto-adaptive to each modality and task distribution.
\item We provide a concise PD guarantee for our construction and an approximation-error decomposition (parametrization vs.\ sampling), with high-probability bounds under bounded and unbounded feature regimes. This design provides theoretical justification for the method's generalization capabilities.
\item We show that softmax attention in \emph{finetuned} models can be replaced by \textsc{LUNA} with brief task-specific finetuning, recovering most of the original performance for BERT on GLUE and ViT-B/16 on ImageNet-1K, and outperforming exponential-feature linearizations. This post-hoc conversion capability enables practical deployment of linear attention in existing production systems without re-training from scratch.
\item Under matched compute, \textsc{LUNA} sets a new state-of-the-art average accuracy on the Long Range Arena.

\end{itemize}

\section{Related Work}
\begin{figure*}[!t]
  \centering
  \includegraphics[width=\textwidth]{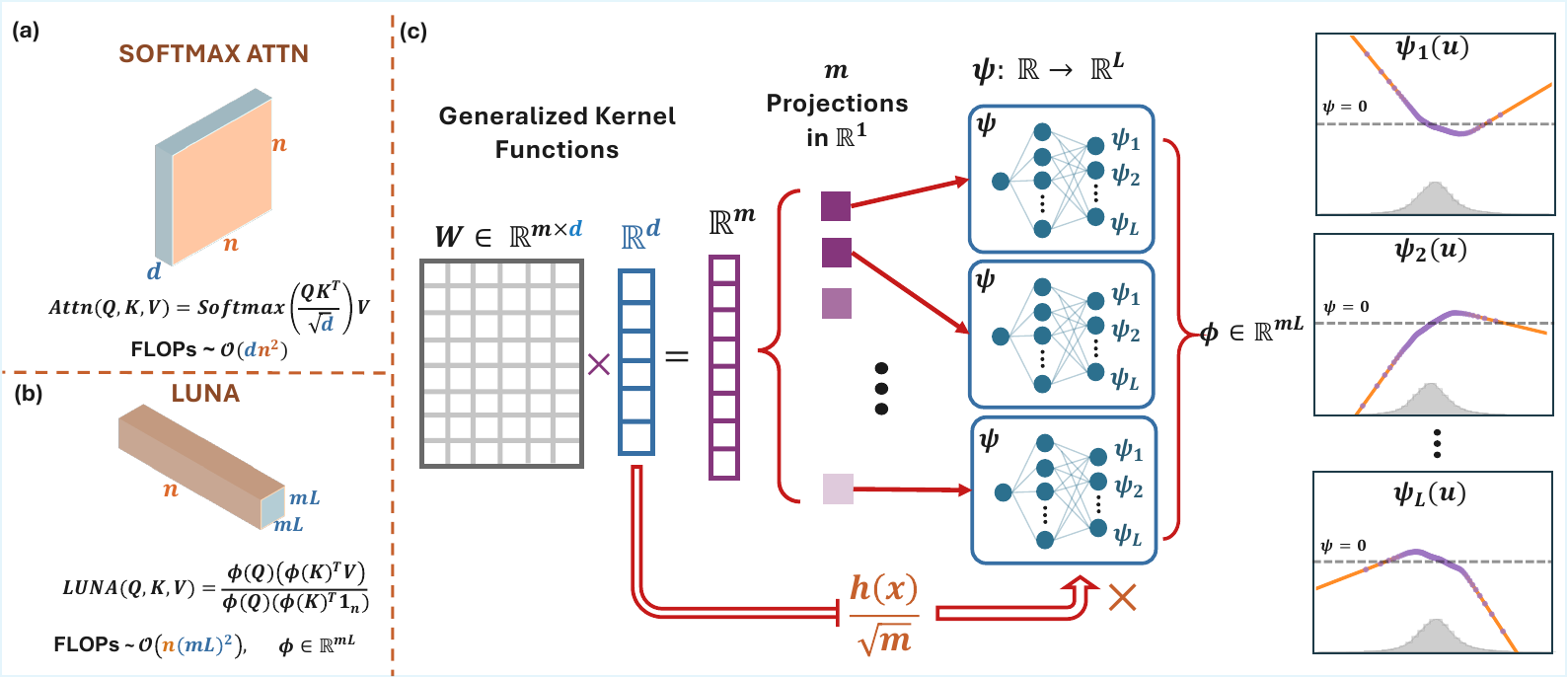}
  \caption{
    (a) Softmax attention requires computing all pairwise interactions among tokens, which causes the cost to grow quadratically with the sequence length. (b) LUNA introduces a learnable kernel method for linearizing the attention mechanism, shifting the expensive step from the sequence length $n$ to the feature map size $mL$. (c) For a given set of tokens, LUNA applies $m$ linear projections $W_i \in \mathbb{R}^{d}$, producing $m$ scalar values. Each scalar is then passed through a shared MLP $\psi \colon \mathbb{R} \rightarrow \mathbb{R}^{L}$. By concatenating the $L$ outputs across all $m$ projections, we obtain the kernel feature map $\phi \in \mathbb{R}^{mL}$. The plots on the right show several learned components $\psi_i$ for the LRA-Text task, illustrating that the resulting scalar functions differ from the commonly used fixed choices such as $\tanh$, $\sin$, or $\exp$. The gray band/histogram represents the empirical distribution of $u$.
    }

  \label{fig:teaser}
\end{figure*}
\subsection{Quadratic Attention and Stochastic Structure}
The original Transformer uses a quadratic-time softmax attention whose cost becomes prohibitive at long context lengths \citep{vaswani2017attention}. Within this quadratic regime, some studies exploit the stochastic structure of attention matrices. Standard softmax attention is row-stochastic but not column-normalized; doubly-stochastic variants enforce approximate bistochasticity via Sinkhorn-style normalization and related transport-inspired constraints, mainly to improve stability and interpretability while keeping $\mathcal{O}(n^2)$ complexity \citep{sander2022sinkformers, shahbazi2025espformer}. These methods act on the normalization of the attention \emph{matrix} and are largely orthogonal to linearization: such constraints can be combined with either quadratic softmax attention or linearized kernels \citep{shahbazi2025lotformerdoublystochasticlinearattention}.

Beyond such normalization-based approaches, Teo and Nguyen view self-attention through kernel PCA and propose a robust quadratic variant (RPC-Attention), whereas we work in the linear-attention setting and learn the kernel feature family itself \citep{teo2025unveiling}.

\subsection{From Quadratic to Efficient Approximations}
To remove the $\mathcal{O}(n^2)$ bottleneck, efficient attention mechanisms replace dense all-to-all interactions with sparse, low-rank, or kernelized surrogates. Structured-sparsity methods restrict each token to a subset of neighbors using local windows, dilations, or block patterns (e.g., local attention, Longformer, and hashing/block-sparse designs) \citep{parmar2018imagetransformer,beltagy2020longformer,Kitaev2020Reformer,NEURIPS2020_c8512d14}. Kernelized and low-rank approaches factorize similarity via feature maps or learned projections: Linear Transformers and Performer re-express softmax attention through (random or deterministic) kernel features, turning $n \times n$ accumulations into $n \times D$ computations \citep{katharopoulos2020transformers,choromanski2020rethinking}, while Linformer, Nystr\"omformer, and related methods learn or approximate a low-rank structure in the keys, values, or kernel matrix \citep{wang2020linformer,xiong2021nystromformer}. Subsequent work proposes orthogonal or structured bases and stabilized feature maps \citep{NEURIPS2021_c0f168ce,chen2021skyformer,zhen2022cosformer}, and Synthesizer departs from query-key matching by learning synthetic mixing weights \citep{tay2021synthesizer}. ``Post-hoc conversion'' techniques replace softmax with an exponential feature map in pretrained models and finetune to recover performance, prioritizing compatibility with existing checkpoints over expressivity of the feature map itself \citep{kasai2021t2r,zhang2024hedgehog}. These studies define the linear-attention paradigm: approximate softmax attention while reducing complexity to linear or near-linear in token length.

\subsection{Positioning \textsc{LUNA}}
\textsc{LUNA} operates on the \emph{feature-map} side of linear attention. Instead of fixing the kernel feature map \emph{a priori} (e.g., via random Fourier features or hand-crafted nonlinearities) or focusing primarily on compressing queries, keys, or values, \textsc{LUNA} learns a task-specific family of kernel features. Concretely, it combines learnable input projections with a shared bank of scalar channel functions, trained end-to-end on a single task. This design separates specialization over inputs from the representation power of the nonlinearity, allowing the feature basis to adapt to the data while preserving the drop-in efficiency of linear attention (linear complexity in the sequence length and feature dimension). In this sense, \textsc{LUNA} differs from fixed or purely random feature maps used in Performer/Cosformer/Skyformer-style designs and from exponential-feature conversions used for post-hoc linearization, providing a learned kernel representation tailored to the task within a standard linear-attention pipeline.

Importantly, this differs from post-hoc conversion approaches such as T2R and Hedgehog \citep{kasai2021t2r,zhang2024hedgehog}, which replace softmax with a fixed exponential feature map and fine-tune to \emph{mimic} the softmax kernel. \textsc{LUNA} does not attempt to match softmax; it learns a task-aligned kernel feature family directly from data. The exponential map is a special case within our parameterization, but it is not a constraint, enabling \textsc{LUNA} to retain linear complexity while searching a richer space of kernels better suited to the downstream task.

\section{Learning Kernels for Linear Attention}
We adopt the kernelized formulation of dot-product attention and its random-feature linearizations. This view makes explicit the sufficient-statistics factorization that underlies linear-time, linear-memory variants and recovers softmax attention as a specific positive-definite kernel induced by an exponential feature map. Building on this formulation, we introduce a parametric family of learnable feature maps that (i) define a valid positive-definite kernel and (ii) retain the linear compute pattern of kernelized attention layers.

\subsection{Preliminaries: Attention as a Kernel Method}

Let \(Q,K,V\in\mathbb{R}^{n\times d}\) denote the query, key, and value matrices, respectively, with sequence length \(n\) and latent dimension \(d\).
Scaled dot-product attention is given by
\begin{equation}
\eqfit{
\mathrm{Attn}(Q,K,V)
=\softmax \Big(\tfrac{QK^\top}{\sqrt d} \Big)V
=\frac{A V}{A \mathbf{1}_n},\quad
A=\exp\!\Big(\tfrac{QK^\top}{\sqrt d} \Big),
}
\end{equation}
where the softmax and the normalization by \(\mathbf{1}_n\in\mathbb{R}^{n}\) act row-wise. This admits a kernel view with
\(
k_{\mathrm{SM}}(x,y)=\exp\!\Big(\frac{x^\top y}{\sqrt{d}}\Big),
\qquad
x=Q[i,:],\;y=K[j,:].
\)
Although \(k_{\mathrm{SM}}\) is not shift-invariant, it is linked to the Gaussian kernel \(k_G(x,y)=\exp(-\tfrac{1}{2\sqrt{d}}\|x-y\|^2)\) via
\begin{equation}
k_{\mathrm{SM}}(x,y)=e^{\|x\|^2/2\sqrt{d}}\,k_G(x,y)\,e^{\|y\|^2/2\sqrt{d}}
\end{equation}
By Bochner’s theorem~\cite{rahimi2007random,rudin2017fourier}, a continuous, shift-invariant, positive-definite kernel admits
\begin{equation}\label{eq:shift_kernel}
\eqfitalign{
k(x-y)
&=\int_{\mathbb{R}^d} e^{i\omega^\top(x-y)}\,d\mu(\omega) \\
&=\mathbb{E}_{\omega\sim\mu}\!\big[e^{i\omega^\top(x-y)}\big] =\mathbb{E}_{\omega\sim\mu,\; b\sim\mathrm{Unif}(0,2\pi)} 
   \!\big[\zeta_{\omega,b}(x)\,\zeta_{\omega,b}(y)\big],
}
\end{equation}
with \(\zeta_{\omega,b}(x)=\sqrt{2}\cos(\omega^\top x+b)\), where \(b\sim\mathrm{Unif}(0,2\pi)\) is added as a variance-reduction trick to get cosine-shifted features~\cite{rahimi2007random}. Writing \(k_\mu(x,y) := k(x-y)\) for the kernel induced by the spectral measure \(\mu\), we approximate the expectation in~\eqref{eq:shift_kernel} with \(m\) Monte Carlo samples \((\omega_i,b_i)\overset{\text{i.i.d.}}{\sim}\mu\times\mathrm{Unif}(0,2\pi)\) obtaining the finite feature map:
\begin{equation}
\phi_m(x)
=
\sqrt{\frac{2}{m}}
\big[\cos(\omega_1^\top x + b_1),\ldots,\cos(\omega_m^\top x + b_m)\big]^\top,
\end{equation}
yielding the empirical kernel estimator
\begin{equation}
\eqfitalign{
\hat{k}_\mu^{(m)}(x,y)
&:= \phi_m(x)^\top \phi_m(y) \\
&= \frac{2}{m}\sum_{i=1}^m \cos(\omega_i^\top x + b_i)\,\cos(\omega_i^\top y + b_i),
}
\end{equation}
which converges to \(k_\mu(x,y)\) as \(m\to\infty\). Specializing to the Gaussian spectral measure \(\mu=\mathcal{N}(0,\tfrac{1}{\sqrt{d}}I_d)\) yields the standard random Fourier features (RFF) approximation to the Gaussian kernel,
\begin{equation}
k_G(x,y)\approx \hat{k}_\mu^{(m)}(x,y)
= \phi_m(x)^\top \phi_m(y),
\end{equation}

\paragraph{Performer features.}
To linearize the exponential dot-product kernel
\(
k_{\exp}(x,y)=\exp(x^\top y/\sqrt d),
\)
Performer~\cite{choromanski2020rethinking} uses the fact that this kernel admits a Gaussian-moment factorization:
\begin{equation}
\eqfitalign{
k_{\exp}(x,y)
&= \exp\!\Big(\tfrac{x^\top y}{\sqrt d}\Big) \\
&= \mathbb{E}_{\omega\sim\mathcal{N}(0,I_d)}
\!\Big[
e^{\frac{2\omega^\top x - \|x\|^2}{2\sqrt d}}\,
e^{\frac{2\omega^\top y - \|y\|^2}{2\sqrt d}}
\Big].
}
\end{equation}
This representation expresses the kernel as an expectation of two separated functions of \(x\) and \(y\), enabling a random-feature approximation.  
Drawing \(\omega_1,\ldots,\omega_m\overset{\text{i.i.d.}}{\sim}\mathcal{N}(0,\frac{1}{\sqrt{d}}I_d)\) produces the Performer feature map
\begin{equation}
\eqfitalign{
\phi^{P}_m(x)
&=
\frac{1}{\sqrt m}
\big[
e^{\frac{-\|x\|^2+2\omega_1^\top x}{2\sqrt d}},\ldots,
e^{\frac{-\|x\|^2+2\omega_m^\top x}{2\sqrt d}}
\big]^\top,
}
\end{equation}
for which
\(
\phi^{P}_m(x)^\top \phi^{P}_m(y)\approx k_{\exp}(x,y).
\)

\paragraph{Linear attention.}
With a (possibly learned) feature map \(\phi:\mathbb{R}^d\to\mathbb{R}^D\) applied row-wise on the query and key matrices, we have
\begin{equation}
\label{eq:kernel-attn}
\eqfit{
\mathrm{Attn}(Q,K,V)
= \frac{\phi(Q)\big(\phi(K)^\top V\big)}
         {\phi(Q)\big(\phi(K)^\top \mathbf{1}_n\big)},
}
\end{equation}
reducing complexity from \(\mathcal{O}(n^2d)\) to \(\mathcal{O}(nD^2)\). The denominator is computed element-wise per row.
\subsection{Our Method: Fully Learnable Kernels}
\label{sec:learned-kernel}

The constructions above show that many efficient Transformers admit a kernel of the form
\(
k(x,y)
= \mathbb{E}_{\omega\sim\mu}\big[\zeta_\omega(x)\,\zeta_\omega(y)\big],
\)
where $\zeta_\omega$ is a \emph{fixed}, scalar feature determined by the choice of spectral measure $\mu$ (Fourier features, Performer, etc.).  
In all such cases the kernel class is hard-coded by $\zeta_\omega$. We generalize this template by replacing the scalar feature $\zeta_\omega(x)$ with a vector-valued, \emph{learnable} feature family $\phi_\omega(x)$.  
Formally, we define
\begin{equation}
\label{eq:kernel}
\eqfitalign{
k(x,y)
&\coloneqq \mathbb{E}_{\omega\sim\mathcal{N}(0,I_d)}
\big\langle \phi_\omega(x),\phi_\omega(y)\big\rangle_{\mathcal{H}} \\
&\approx \frac{1}{m}\sum_{i=1}^m
\big\langle \phi(x;\omega_i),\phi(y;\omega_i)\big\rangle_{\mathcal{H}},
}
\end{equation}
where each $\phi_\omega:\mathbb{R}^d\to\mathcal{H}$ maps an input to a feature in a Hilbert space
$\mathcal{H}\cong\mathbb{R}^{mL}$ endowed with the Euclidean inner product.\footnote{For convenience, we do not distinguish $\phi_\omega(\cdot)$, $\phi(\omega,\cdot)$, and $\phi(\cdot;\omega)$ in this paper.}
This preserves the positive-definite kernel structure and the linear-time, streaming form of kernel attention, while allowing the feature family and the kernel itself to adapt to the data.

\begin{proposition}\label{prop:kernel}
The construction in \eqref{eq:kernel} yields a positive-definite kernel. Conversely, by Mercer’s theorem, any positive-definite kernel admits such a representation for an appropriate \(\phi(\cdot;\omega)\) into an RKHS \(\mathcal{H}\). See Appendix \ref{sec:kernel_intro}.
\end{proposition}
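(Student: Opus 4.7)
The plan is to verify the two directions separately, both of which are routine once the inner-product / Mercer structure is identified.

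For the forward (PD) direction, I would fix an arbitrary finite collection $x_1,\dots,x_N\in\mathbb{R}^d$ and real coefficients $c_1,\dots,c_N$, and compute the associated quadratic form
\begin{equation*}
\sum_{i,j=1}^N c_i c_j\, k(x_i,x_j)
= \sum_{i,j} c_i c_j\, \mathbb{E}_{\omega\sim\mathcal{N}(0,I_d)}\bigl\langle \phi_\omega(x_i),\phi_\omega(x_j)\bigr\rangle_{\mathcal{H}}.
\end{equation*}
Under mild integrability of $\omega\mapsto \|\phi_\omega(x_i)\|_{\mathcal{H}}$ (which holds in our parameterization since $\phi$ is a bounded MLP composition), Fubini/linearity of expectation lets me pull the finite sum inside, giving
\begin{equation*}
\sum_{i,j} c_i c_j\, k(x_i,x_j)
= \mathbb{E}_{\omega}\Bigl\|\sum_{i=1}^N c_i\,\phi_\omega(x_i)\Bigr\|_{\mathcal{H}}^{2}\ \geq\ 0 .
\end{equation*}
Symmetry $k(x,y)=k(y,x)$ follows directly from symmetry of the Hilbert inner product, so $k$ is a (real, symmetric) positive-definite kernel. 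This also shows the Monte-Carlo estimator in~\eqref{eq:kernel} is PD for every realization of $\omega_1,\dots,\omega_m$, since it is just the special case where $\mathcal{N}(0,I_d)$ is replaced by the empirical measure $\tfrac{1}{m}\sum_i \delta_{\omega_i}$.

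For the converse, I would apply Mercer's theorem on a compact input domain $\mathcal{X}\subset\mathbb{R}^d$ to any continuous PD $k$, obtaining an orthonormal basis $\{e_j\}_{j\ge 1}$ of $L^2(\mathcal{X},\nu)$ and non-negative eigenvalues $\{\lambda_j\}$ with the uniformly convergent expansion $k(x,y)=\sum_j \lambda_j\, e_j(x)\,e_j(y)$. I then exhibit a valid $\phi(\cdot;\omega)$ realizing~\eqref{eq:kernel} in the simplest way: let $\mathcal{H}=\ell^2$ and set $\phi_\omega(x) := \bigl(\sqrt{\lambda_j}\,e_j(x)\bigr)_{j\ge 1}$, independent of $\omega$. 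Then $\langle \phi_\omega(x),\phi_\omega(y)\rangle_{\mathcal{H}} = k(x,y)$ for every $\omega$, so taking expectation under any distribution (in particular $\mathcal{N}(0,I_d)$) recovers $k$. The relegation of the full statement to Appendix~\ref{sec:kernel_intro} is consistent with this being a straightforward consequence of Mercer plus a constant-in-$\omega$ choice.

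The main (minor) obstacle is measure-theoretic: justifying the swap of expectation and finite sum in the PD step, and ensuring Mercer's hypotheses (compact $\mathcal{X}$, continuous $k$) in the converse. Both are handled by mild regularity assumptions on $\phi_\omega$ that our construction satisfies, namely $\phi_\omega(x)$ being continuous in $x$ with $\mathbb{E}_\omega\|\phi_\omega(x)\|_{\mathcal{H}}^2<\infty$ uniformly on the input domain; these can be absorbed into a standing assumption before the proposition. No deeper difficulty arises, because we only need \emph{existence} of a representation, not uniqueness, so the trivial constant-$\omega$ realization suffices.
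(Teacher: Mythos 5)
Your forward direction is essentially identical to the paper's: expand the quadratic form, exchange the finite sum with the expectation, and recognize $\mathbb{E}_\omega\bigl\|\sum_i c_i\phi_\omega(x_i)\bigr\|_{\mathcal{H}}^2\ge 0$; your explicit attention to the Fubini/integrability step is a small improvement over the paper, which passes over it silently. The converse, however, is where you genuinely diverge. You invoke Mercer's theorem on a compact domain and then sidestep the randomness entirely by choosing $\phi_\omega(x)=(\sqrt{\lambda_j}\,e_j(x))_{j\ge1}\in\ell^2$ \emph{constant in $\omega$}, so that the expectation is trivial. This is a correct proof of the proposition as literally stated (it only demands existence of \emph{some} $\phi(\cdot;\omega)$ into \emph{some} $\mathcal{H}$), and it is shorter. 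The paper instead avoids Mercer altogether: it uses the canonical RKHS feature map $x\mapsto k(\cdot,x)$ (Lemma~\ref{lem:kernel}), which needs no continuity or compactness, and then builds a $\phi_\omega$ that \emph{does} depend nontrivially on $\omega$ --- in the countable-basis case by partitioning $\mathbb{R}$ into intervals $(i-1,i]$ and setting $\phi_\omega(x)=g_{I(\omega)}(x)/\sqrt{G((i-1,i])}$, and in the uncountable case via a Radon--Nikodym reweighting --- so that the Gaussian expectation over $\omega$ reproduces the sum/integral over basis coordinates. What that extra work buys is the stronger conclusion recorded in Remark~\ref{rk:kernel}: every PD kernel admits a representation with a \emph{scalar-valued} $\phi:\mathbb{R}\times\mathbb{R}^d\to\mathbb{R}$, which is what Lemma~\ref{lem:feature_map} and the subsequent approximation results actually consume. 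Your constant-in-$\omega$, $\ell^2$-valued choice does not deliver that scalar form, and your reliance on Mercer imports compactness and continuity hypotheses the paper's argument does not need. So: correct for the stated proposition, but not a drop-in replacement for the role the converse plays later in the appendix.
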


We replace fixed RFF components with learnable projections and channels.  
Let $W\in\mathbb{R}^{m\times d}$ with rows $\{w_i^\top\}_{i=1}^m$, channel functions
$\{\psi_\ell:\mathbb{R}\!\to\!\mathbb{R}\}_{\ell=1}^L$ (each instantiated as a small MLP on the scalar projection $u = w_i^\top x$), and a tokenwise envelope $h:\mathbb{R}^d\to\mathbb{R}$.  
Our feature map is
\begin{equation}
\label{eq:phi-def-scalarenv}
\eqfit{
\phi(x;W,\psi,h)
= \frac{h(x)}{\sqrt m}\,
\big[\psi_\ell(w_i^\top x)\big]_{\substack{i=1,\dots,m\\ \ell=1,\dots,L}}
\in\mathbb{R}^{mL},
}
\end{equation}
 
This template strictly generalizes RFF and Performer features: both are recovered by fixing $h$ and the $\psi_\ell$ by hand, whereas in our case $W$, $h$, and the channel functions $\{\psi_\ell\}$ are learned from data.





\begin{remark}[Neural Approximation of Multiplicatively Decomposable Kernels]
Let $k:\mathbb{R}^d\times\mathbb{R}^d\to\mathbb{R}$ be a positive-definite kernel, and suppose it admits the multiplicative form
\[
k(x,y)=h(x)\,k'(x-y)\,h(y),
\]
for some continuous functions $h:\mathbb{R}^d\to\mathbb{R}$ and $k':\mathbb{R}^d\to\mathbb{R}$.  
Under suitable regularity assumptions on $h$, $k'$, and the domain $K\subset\mathbb{R}^d$, there exists a neural feature map
$
\phi(x;\omega) \in \mathbb{R}^L,
$
of the form described in~\eqref{eq:phi-def-scalarenv}, such that with high probability over random draws $W=\{\omega_i\}_{i=1}^m$,
\[
\big|\,k(x,y)-k_{NN}(x,y;W)\,\big| \le \varepsilon,
\qquad \forall (x,y)\in K\times K,
\]
where the neural kernel estimator is
\[
k_{NN}(x,y;W)
:=\frac{1}{m}\sum_{i=1}^m \phi(x;\omega_i)^\top \phi(y;\omega_i).
\]
Here each feature map takes the separable form
\[
\phi(x;\omega_i)
= h(x)\,\big[\psi_\ell(\omega_i^\top x)\big]_{\ell=1}^L,
\]
which is one component of the structured feature construction in~\eqref{eq:phi-def-scalarenv}.
\end{remark}

The total approximation error naturally decomposes into two parts:

\begin{itemize}
    \item \textbf{Parametrization error}: the error arising from approximating the ideal feature map $\phi$ using a finite-depth neural network;
    \item \textbf{Sampling (or generative) error}: the error incurred by replacing the population expectation with a finite set of random parameters $\{\omega_i\}_{i=1}^m$.
\end{itemize}

The informal conclusion above summarizes the combined effect of Propositions~\ref{pro:error1_main} and \ref{pro:error2_main}, stated next.


\begin{proposition}[\textbf{Parametrization Error (Informal)}]
\label{pro:error1_main}
Suppose the kernel admits the decomposition $k(x,y)=h(x)\,k'(x-y)\,h(y)$, with $h$ and $k'$ continuous.  
Under suitable regularity assumptions, there exist MLP-based feature maps $\phi(x;\omega)$ of the form~\eqref{eq:phi-def-scalarenv} such that, 
\[
\big|\,k(x,y)-k_{NN}(x,y)\,\big| < \varepsilon,
\qquad \forall x,y\in K,
\]
where
\[
k_{NN}(x,y)
:=\mathbb{E}_{\omega\sim\mathcal{N}(0,I_d)}
\big[\phi(x;\omega)^\top \phi(y;\omega)\big],
\]
and $K\subset\mathbb{R}^d$ is a fixed compact domain. 
\end{proposition}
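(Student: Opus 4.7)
The plan is to reduce the two-point kernel approximation to approximating the shift-invariant factor $k'$ alone, then to approximate $k'$ by a finite scale-mixture of isotropic Gaussian kernels (each of which is exactly representable in the target form), and finally to replace the resulting trigonometric channels by MLPs via universal approximation. First I would record the algebraic factorization of the candidate kernel: since $\phi(x;\omega)^\top\phi(y;\omega) = h(x)h(y)\sum_\ell \psi_\ell(\omega^\top x)\psi_\ell(\omega^\top y)$, we have $k_{NN}(x,y) = h(x)h(y)\,\tilde k(x,y)$ with $\tilde k(x,y) = \E_{\omega\sim\N(0,I_d)}\big[\sum_\ell \psi_\ell(\omega^\top x)\psi_\ell(\omega^\top y)\big]$. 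Because $h$ is continuous on the compact $K$ and thus bounded by some $C_h$, it suffices to produce $\tilde k$ with $\sup_{x,y\in K}|k'(x-y)-\tilde k(x,y)| < \varepsilon/C_h^2$.

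Next, under the standing regularity (continuity, shift invariance, PSD, and an integrable or sufficiently smooth spectrum), I would apply Bochner's theorem to write $k'(z)=\int \cos(\omega^\top z)\,d\mu(\omega)$, and then approximate $\mu$ by a finite scale-mixture of isotropic Gaussians, $\mu_J=\sum_{j=1}^J \pi_j\,\N(0,\alpha_j^2 I_d)$. This yields
\[
k'_J(z) \;=\; \sum_{j=1}^J \pi_j \exp\!\big(-\tfrac{\alpha_j^2}{2}\|z\|^2\big), \qquad \sup_{z\in K-K}\big|k'(z)-k'_J(z)\big| \,<\, \tfrac{\varepsilon}{2 C_h^2},
\]
using density of Gaussian mixtures in $L^1$ of the spectral density together with continuity of the Fourier transform from $L^1$ to $C_b$.

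Each mixture component then admits the exact identity
\[
\exp\!\big(-\tfrac{\alpha_j^2}{2}\|x-y\|^2\big) \;=\; \E_{\omega\sim\N(0,I_d)}\!\big[\cos(\alpha_j\omega^\top x)\cos(\alpha_j\omega^\top y) + \sin(\alpha_j\omega^\top x)\sin(\alpha_j\omega^\top y)\big],
\]
via the Gaussian characteristic function. Setting $L=2J$ and defining the target channels $\psi^*_{2j-1}(u)=\sqrt{\pi_j}\cos(\alpha_j u)$, $\psi^*_{2j}(u)=\sqrt{\pi_j}\sin(\alpha_j u)$, we obtain $k'_J(x-y)=\E_\omega\big[\sum_\ell \psi^*_\ell(\omega^\top x)\psi^*_\ell(\omega^\top y)\big]$ exactly. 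I would then invoke the universal approximation theorem (Cybenko/Hornik/Pinkus) to pick MLPs $\psi_\ell$ that uniformly approximate each bounded, continuous $\psi^*_\ell$ on an interval $[-U,U]$, where $U$ is chosen so that $\Pr(|\omega^\top x|>U)$ is negligibly small uniformly over $x\in K$ by Gaussian tail bounds and compactness of $K$. Combined with controlled growth of the chosen MLPs on the tail, this delivers $\sup_{x,y\in K}|k'_J(x-y)-\tilde k(x,y)| < \varepsilon/(2C_h^2)$, and the triangle inequality closes the argument.

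The main obstacle I expect is the tail-control step: because $\omega\sim\N(0,I_d)$ is unbounded, a uniform MLP approximation of $\psi^*_\ell$ that is tight on $[-U,U]$ does not automatically bound the expectation error, so care is needed either to clip the MLPs or to constrain their growth so that tail mass does not inflate the bilinear integrand $\psi_\ell(\omega^\top x)\psi_\ell(\omega^\top y)$ after averaging against the Gaussian. A secondary subtlety is pinning down precisely the "regularity assumptions" on $k'$ under which the scale-Gaussian mixture densification is uniform on $K-K$: too weak a hypothesis on $\mu$ breaks the mixture approximation, while too strong a one (e.g., $k'$ already Gaussian) trivializes the claim.
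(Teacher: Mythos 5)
Your overall architecture — reduce to the shift-invariant factor $k'$, realize it exactly in the form $\mathbb{E}_{\omega\sim\mathcal{N}(0,I_d)}\big[\sum_\ell\psi^*_\ell(\omega^\top x)\psi^*_\ell(\omega^\top y)\big]$ with explicit trigonometric target channels, then replace the $\psi^*_\ell$ by MLPs via universal approximation with Gaussian tail control — is a genuinely different and in some ways more ambitious route than the paper's. The paper does not construct the target representation at all: its Assumption 1(6) simply \emph{posits} that $k$ admits the form $\mathbb{E}_{\omega\sim\mathcal{N}(0,I_d)}[\phi(\omega,x)^\top\phi(\omega,y)]$ with $\phi(\omega,x)=h(x)\psi(\omega^\top x)$, and the proof (Propositions \ref{pro:error_1_bounded} and \ref{pro:error1_unbounded}) then only has to approximate the given $h$ and $\psi$ by MLPs, truncating $\omega$ to a compact set $A$ and clipping (or $C_0$-extending) $\psi_{NN}$ to keep the tail of the Gaussian from blowing up the bilinear integrand. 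Your second half is essentially the same as theirs, and your flagged ``main obstacle'' (uncontrolled MLP growth on $\{|\omega^\top x|>U\}$) is exactly the issue the paper resolves by clipping to $[-M,M]$ / constructing a compactly supported surrogate $\psi_0$; so that part of your plan is sound and matches the paper.

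The genuine gap is in the mixture step. A finite scale mixture $\mu_J=\sum_j\pi_j\,\mathcal{N}(0,\alpha_j^2I_d)$ of \emph{centered isotropic} Gaussians is a radial measure, so its Fourier transform $k'_J(z)=\sum_j\pi_j e^{-\alpha_j^2\|z\|^2/2}$ is necessarily a radial function of $z$ (indeed a completely monotone function of $\|z\|^2$). Such mixtures therefore cannot approximate any non-radial continuous PD shift-invariant kernel — e.g.\ $k'(z)=\cos(\omega_0^\top z)$ with spectral measure $\tfrac12(\delta_{\omega_0}+\delta_{-\omega_0})$, or an anisotropic Gaussian $e^{-z^\top\Sigma z}$ with $\Sigma\not\propto I$. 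The density fact you invoke ($L^1$-density of Gaussian mixtures) holds for location--scale mixtures with arbitrary means, but adding means $m_j$ breaks your factorization: the resulting channels would have to depend on $x$ through both $m_j^\top x$ and $\omega^\top x$, which no longer fits the single-scalar-projection form $\psi_\ell(\omega^\top x)$ required by~\eqref{eq:phi-def-scalarenv}. So as written your argument proves the claim only for radial $k'$ (equivalently, isotropic spectral measures, on which centered scale mixtures are weakly dense), and you would either need to state that restriction as the ``suitable regularity assumption,'' or fall back to assuming the Gaussian-measure representation outright, as the paper does. You also leave $h$ exact rather than replacing it by $h_{NN}$; that is harmless (UAT on the compact $K$ plus your existing error budget covers it) but should be said, since the proposition asks for an MLP-based feature map of the form~\eqref{eq:phi-def-scalarenv} in which the envelope is learned.
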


This proposition generalizes the classical \emph{universal approximation theorem} from function approximation to \emph{kernel approximation via neural feature maps} in a linear-attention–style construction. The precise statements are deferred to Appendix~\ref{sec:param_error}, where we treat separately:

\begin{itemize}
    \item bounded kernels with bounded activations (Proposition~\ref{pro:error_1_bounded});
    \item unbounded kernels or unbounded activations (Proposition~\ref{pro:error1_unbounded}).
\end{itemize}


\begin{proof}[Sketch of the proof of Proposition \ref{pro:error1_main}]
Under the assumption $k(x,y)=h(x)\,k(x-y)\,h(y)$, we have
\begin{align}
&k(x,y)=\mathbb{E}_{\omega\sim\mathcal{N}(0,I_d)}
\big[\phi(\omega,x)^\top \phi(\omega,y)\big],\nonumber \\
&\phi(\omega,x)=h(x)\psi(x).\nonumber 
\end{align}
By the Universal Approximation Theorem, there exist one-hidden-layer neural networks 
$h_{\mathrm{NN}}$ and $\psi_{\mathrm{NN}}$ such that $h_{\mathrm{NN}}\approx h$ and 
$\psi_{\mathrm{NN}}\approx\psi$ uniformly on $K$. Hence 
\[
\phi_{\mathrm{NN}}(\omega,x):=h_{\mathrm{NN}}(x)\psi_{\mathrm{NN}}(\omega,x)
\]
satisfies $\phi_{\mathrm{NN}}(\omega,x)\approx \phi(\omega,x)$, and therefore the induced kernel
\[
k_{\mathrm{NN}}(x,y)
:=\mathbb{E}_{\omega\sim\mathcal{N}(0,I_d)}
\big[\phi_{\mathrm{NN}}(\omega,x)^\top \phi_{\mathrm{NN}}(\omega,y)\big]
\]
approximates the target kernel $k(x,y)$ uniformly on $K\times K$.
\end{proof}




\begin{proposition}[\textbf{Sampling Error (Informal)}]
\label{pro:error2_main}
Assume the kernel admits the representation
\[
k(x,y)
=\mathbb{E}_{\omega\sim\mathcal{N}(0,I_d)}
\big[\phi(x;\omega)^\top \phi(y;\omega)\big].
\]
Let $\omega_1,\ldots,\omega_m \overset{\text{i.i.d.}}{\sim}\mathcal{N}(0,I_d)$, and define the Monte Carlo estimator
\[
k(x,y;W)
= \frac{1}{m}\sum_{i=1}^m 
\phi(x;\omega_i)^\top \phi(y;\omega_i).
\]
Then, under suitable regularity conditions, for any $x,y\in K$,
\[
\mathbb{P}\!\left(\big|k(x,y)-k(x,y;W)\big|\ge \varepsilon\right)
\le \delta,
\]
where $\delta$ decays exponentially in $m$.
\end{proposition}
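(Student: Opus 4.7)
The plan is to recognize the estimator as an empirical mean of i.i.d.\ random variables and invoke a standard concentration inequality. Define $Z_i(x,y) := \phi(x;\omega_i)^\top \phi(y;\omega_i)$ for $i=1,\dots,m$. Since the $\omega_i$ are i.i.d.\ $\mathcal{N}(0,I_d)$, the variables $Z_i(x,y)$ are i.i.d.\ with common mean $k(x,y)$ by the assumed integral representation, and $k(x,y;W)=\tfrac{1}{m}\sum_i Z_i(x,y)$ is exactly their empirical average. A two-sided tail bound on this average will yield the claim, and the only task is to produce a sub-Gaussian/sub-exponential control on $Z_i(x,y)$ appropriate to the regime of Proposition~\ref{pro:error1_main}.

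In the bounded feature regime (matching Proposition~\ref{pro:error_1_bounded}, where $\|\phi(x;\omega)\|_{\mathcal{H}}\le B$ uniformly), Cauchy--Schwarz gives $|Z_i(x,y)|\le B^2$, and Hoeffding's inequality yields
\[
\mathbb{P}\bigl(|k(x,y)-k(x,y;W)|\ge\varepsilon\bigr)\le 2\exp\!\left(-\tfrac{m\varepsilon^2}{2B^4}\right),
\]
which is exponential in $m$. This handles bounded channel functions $\psi_\ell$ combined with a bounded envelope $h$.

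In the unbounded regime, $Z_i$ is no longer almost surely bounded, so I would swap Hoeffding for Bernstein (or an Orlicz-norm version of it). The preparatory step is to show that $Z_i(x,y)$ is sub-exponential: by Cauchy--Schwarz, $|Z_i(x,y)|\le \|\phi(x;\omega_i)\|_{\mathcal{H}}\cdot \|\phi(y;\omega_i)\|_{\mathcal{H}}$, so it suffices to bound the moments $\mathbb{E}\bigl[\|\phi(x;\omega)\|_{\mathcal{H}}^{2p}\bigr]$. Under the structural form $\phi(x;\omega)=h(x)\,[\psi_\ell(\omega^\top x)]_\ell$ with $\psi_\ell$ of polynomial or controlled exponential growth, these moments follow from standard Gaussian moment formulas (and, in the Performer-like case, pick up an $\exp(\|x\|^2/\sqrt d)$-type prefactor absorbed into $h$). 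Bernstein's inequality then gives
\[
\mathbb{P}\bigl(|k(x,y)-k(x,y;W)|\ge\varepsilon\bigr)\le 2\exp\!\left(-\tfrac{m\varepsilon^2}{2(\sigma^2+\nu\varepsilon)}\right),
\]
with $\sigma^2=\mathrm{Var}(Z_1)$ and $\nu$ controlling the sub-exponential tail, again exponentially decaying in $m$.

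The main obstacle is the unbounded case: certifying a finite sub-exponential norm of $\phi(x;\omega)^\top\phi(y;\omega)$ when $\psi_\ell$ or $h$ grow quickly, and tracking how the constants $\sigma^2,\nu$ depend on $x,y$ through the envelope and through $\|x\|,\|y\|$. Any clean rate will inherit the tightness of those moment bounds. A uniform-in-$(x,y)$ version over a compact $K\times K$ would additionally require an $\varepsilon$-net argument together with a Lipschitz bound on $(x,y)\mapsto Z_i(x,y)$, contributing a $\log\mathcal{N}(K,\cdot)$ factor; but the stated pointwise proposition is already delivered by the Hoeffding/Bernstein step above.
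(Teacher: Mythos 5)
Your proposal is correct in substance and reaches the same conclusion by the same high-level structure (i.i.d.\ empirical mean plus a sub-Gaussian/sub-exponential tail bound, split into bounded and unbounded regimes), but the way you certify the tail behavior differs from the paper in both regimes. In the bounded case you use Cauchy--Schwarz to get $|Z_i|\le B^2$ and invoke Hoeffding, which is more elementary; the paper instead observes that $\omega\mapsto\langle\phi(\omega,x),\phi(\omega,y)\rangle$ is Lipschitz (using the boundedness \emph{and} Lipschitzness of $\phi(\cdot,x)$ from its Assumption~1, conditions (1)--(4)) and applies Gaussian concentration for Lipschitz functions, which additionally yields a variance bound via the Gaussian Poincar\'e inequality. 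In the unbounded case the paper's route is cleaner than yours: since the feature map is a ReLU network, $\omega\mapsto\phi(\omega,x)$ is Lipschitz, hence each coordinate $\phi^i(\omega,x)$ is sub-Gaussian with $\psi_2$-norm bounded by its Lipschitz constant; the product of two sub-Gaussians is sub-exponential (Vershynin, Lemma~2.8.6), the $\psi_1$-norms add over coordinates, and sub-exponential Bernstein finishes. Your alternative of bounding the moments $\mathbb{E}\bigl[\|\phi(x;\omega)\|^{2p}\bigr]$ directly would also work for Lipschitz (e.g.\ ReLU) activations, but your parenthetical about absorbing Performer-style exponential features into the envelope is where the argument would actually break: for $\omega\sim\mathcal{N}(0,I_d)$ the random variable $e^{\omega^\top x}$ is log-normal, its product with another such factor is not sub-exponential, and neither Hoeffding nor Bernstein applies. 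The paper sidesteps this entirely by restricting to Lipschitz, non-polynomial activations (condition (7) of its Assumption~1), and you should make the same restriction explicit rather than gesture at the exponential case. Your closing remark that a uniform-over-$K\times K$ statement would need an $\varepsilon$-net is accurate and goes beyond what the paper proves, which only takes a supremum of the Lipschitz constants over $K$ while keeping the probabilistic statement pointwise in $(x,y)$.
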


\noindent The formal results are provided in Appendix~\ref{sec:error_omega}, where we characterize the sampling error in two regimes:

\begin{itemize}
    \item \textbf{Bounded feature regime.}  
    If $\phi(x;\omega)$ is uniformly bounded for all $x \in K$, then by Gaussian concentration,
    $$
    \delta = \mathcal{O}(\exp(-c m \epsilon^2)),
    $$
    for some constant $c>0$. See Corollary \ref{cor:err2_bound} and Remark~\ref{rm:error2_bound} and related results for details.

    \item \textbf{Unbounded feature regime.}  
    If $\{\phi(x;\omega):x\in K\}$ is not uniformly bounded (e.g., ReLU activations), then an exponential-type concentration bound gives
    $$
    \delta = \mathcal{O}(\exp(-\min(c_1 \epsilon^2 m, c_2 \epsilon m))),
    $$
    where $c_1, c_2 > 0$ are constants. See Corollary~\ref{cor:error2_unbounded} for the precise statement.
\end{itemize}

\begin{proof}[Sketch of proof of Proposition \ref{pro:error2_main}]
Fix arbitrary $x,y\in K$. Since $\omega\sim\mathcal{N}(0,I_d)$, both $\phi(\omega,x)$ and $\phi(\omega,y)$ are sub-Gaussian. Consequently,
\[
A(\omega):=\phi(\omega,x)^\top \phi(\omega,y)
\]
is sub-Gaussian when $\{\phi(\cdot,x):x\in K\}$ is uniformly bounded, and sub-exponential otherwise.  
The probability term in Proposition \ref{pro:error2_main} can therefore be written as
\[
\mathbb{P}\!\left(\left|\frac{1}{m}\sum_{i=1}^m A(\omega_i)-\mathbb{E}[A(\omega)]\right|>\epsilon\right),
\]
which is directly controlled by standard concentration inequalities for sums of i.i.d. sub-Gaussian or sub-exponential random variables.
\end{proof}

\section{Runtime Analysis}
\label{sec:runtime}

Let $n$ be the sequence length, $d$ the key/query width, $d_v$ the value width, and $D=mL$ the feature dimension of $\phi(x;W,\psi,h)\in\mathbb{R}^{D}$. Using the kernelized form of Eq.~\eqref{eq:kernel-attn}, the per-head cost decomposes into (i) evaluating $\phi(\cdot)$ for all tokens and (ii) forming two key-side sufficient statistics followed by a single query-side application:
\[
S_{KV}=\phi(K)^\top V\in\mathbb{R}^{D\times d_v},\qquad
S_{K1}=\phi(K)^\top \mathbf{1}_n\in\mathbb{R}^{D}.
\]
The resulting complexity is
\[
T_{\text{total}}
=\mathcal{O}\!\big(n\,(m d + D\,c_\psi)\big)\;+\;\mathcal{O}\!\big(n D d_v\big),
\]
where $c_\psi$ denotes the per-channel MLP cost in $\psi$. In the common regime $d_v=\Theta(D)$, this simplifies to
\begin{equation}
T_{\text{total}}=\mathcal{O}\!\big(n D^2\big)
=\mathcal{O}\!\big(n\,(mL)^2\big).
\end{equation}
The computation is \emph{linear in $n$} because the $n\times n$ attention matrix is never formed. \textit{Figure~\ref{fig:runtime_linear}} compares the resulting compute flow and empirical scaling against representative linear-attention baselines. Although all linear variants share $\mathcal{O}(nD^2)$ complexity, the additional cost of our learned feature maps is modest in practice: measured runtimes remain comparable to existing linear baselines, while the learned kernel yields performance improvements, as we discuss next.

\begin{figure}[t]
  \centering
  \includegraphics[width=0.5\columnwidth]{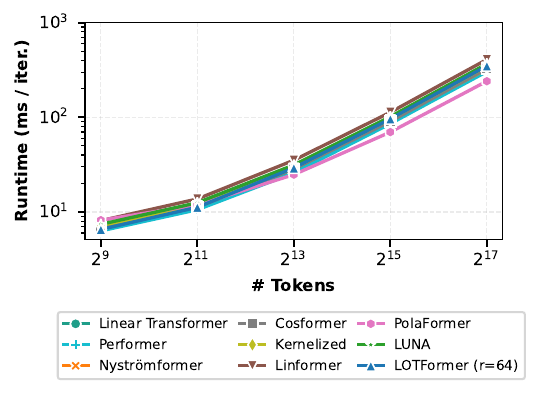}
  \caption{\textbf{Scaling of linear attention variants.} Per-layer runtime as a function of sequence length $n$ (log–log scale) for representative linear-attention baselines and \textsc{LUNA}, measured under identical settings. All methods exhibit approximately linear growth in $n$, with \textsc{LUNA} matching the runtime envelope of existing baselines while using learnable kernel features.}
  \label{fig:runtime_linear}
\end{figure}


\section{Experiments}
\label{sec:experiments}

We evaluate \textsc{LUNA} in two complementary settings: (i) the heterogeneous, long-context \emph{Long Range Arena (LRA)} benchmark, and (ii) \emph{post-hoc} conversion of finetuned quadratic-attention models in language (BERT-base on GLUE) and vision (ViT-B/16 on ImageNet-1K). Across all experiments, we keep the backbone architecture, optimizer, schedule, and training budget fixed within each setting, and vary only the attention module and its feature dimension $D = mL$ so that competing linear-attention variants are compared at matched compute.

\begin{figure*}[ht]
  \centering
  \includegraphics[width=\textwidth]{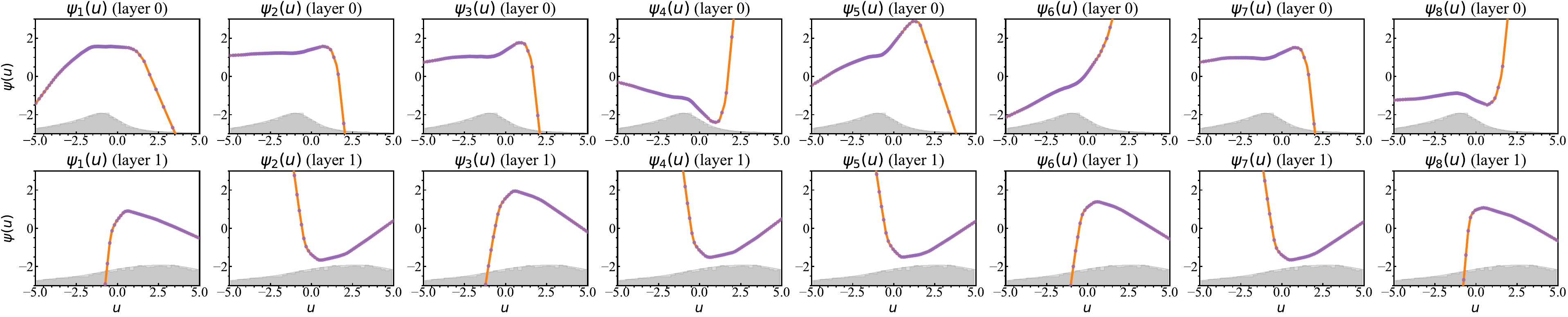}
\caption{Channel-wise visualization of the learned feature map $\phi$ on LRA--Image with $M{=}8$ and $L{=}8$. Each small subplot shows a channel MLP $\psi_\ell(u)$  versus the scalar projection $u{=}w_i^\top x{+}b_i$. Top row: Transformer-layer 0; Bottom row:  Transformer-layer 1; Orange curve: $\psi_\ell(u)$ evaluated on real $u$; Gray histogram: the empirical distribution of $u$.}
\label{fig:lra_channels}
\end{figure*}

\subsection{Long Range Arena}
\label{sec:lra}
The \emph{Long Range Arena (LRA)} evaluates long-context reasoning across heterogeneous modalities (Text, ListOps, Retrieval, Pathfinder, Image) under a shared training protocol. Within this setting, we compare \textsc{Luna} to softmax attention and representative linear-attention baselines at matched compute, keeping the backbone, optimizer, schedule, and training budget fixed and varying only the attention module and its feature dimension \(D = mL\) (number of projections \(m\) and channels \(L\)).

\paragraph{Results.}
Table~\ref{tab:lra_results} summarizes test accuracy by task and averaged across LRA. \textsc{Luna} matches or surpasses prior efficient Transformers on four of five tasks, with the largest gains on Image and strong performance on Text, while operating in the same linear-attention regime as the baselines.

\begin{table}[t]
\caption{Results on the LRA benchmark. Best numbers per column are in bold. Our method achieves the highest overall average accuracy (\%), with particularly strong gains on Text, ListOps, and Image.}
\label{tab:lra_results}
\centering
\scriptsize
\resizebox{0.75\textwidth}{!}{%
\begin{tabular}{lccccc c}
\toprule
Model & Text & ListOps & Retrieval & Pathfinder & Image & Avg. \\
\midrule
Transformer      & 61.55 & 38.71 & 80.93 & 70.39 & 39.14 & 58.14 \\
LocalAttn        & 52.98 & 15.82 & 53.39 & 66.63 & 41.46 & 46.06 \\
LinearTrans.     & 65.90 & 16.13 & 53.09 & 75.30 & 42.34 & 50.55 \\
Reformer         & 56.10 & 37.27 & 53.40 & 68.50 & 38.07 & 50.67 \\
Performer        & 65.40 & 18.01 & 53.82 & \textbf{77.05} & 42.77 & 51.41 \\
Synthesizer      & 61.68 & 36.99 & 54.67 & 69.45 & 41.61 & 52.88 \\
Longformer       & 62.85 & 35.63 & 56.89 & 69.71 & 42.22 & 53.46 \\
Informer         & 62.13 & 37.05 & 79.35 & 56.44 & 37.86 & 54.57 \\
Bigbird          & 64.02 & 36.05 & 59.29 & 74.87 & 40.83 & 55.01 \\
Linformer        & 57.29 & 36.44 & 77.85 & 65.39 & 38.43 & 55.08 \\
Kernelized       & 60.02 & 38.46 & 82.11 & 69.86 & 32.63 & 56.62 \\
Cosformer        & 63.54 & 37.20 & 80.28 & 70.00 & 35.84 & 57.37 \\
Nystrom          & 62.36 & 37.95 & 80.89 & 69.34 & 38.94 & 57.90 \\
Skyformer        & 64.70 & 38.69 & 82.06 & 70.73 & 40.77 & 59.39 \\
Hedgehog         & 64.60 & 37.15 & \textbf{82.24} & 74.16 & 40.15 & 59.66 \\
PolaFormer$_{\alpha=3}$ & 73.06 & 37.35 & 80.50 & 70.53 & 42.15 & 60.72 \\
LOTFormer                  & 71.1 & 38.5 & 80.9 & 69.9 & 54.1 & 62.9 \\
\midrule
\textsc{LUNA}             & \textbf{73.41} & \textbf{38.94} & 81.02 & 69.52 & \textbf{64.32} & \textbf{65.44}  \\
\bottomrule
\end{tabular}
}
\end{table}

\noindent To probe how \textsc{LUNA} achieves its gains, we visualize representative channels of the learned kernel feature map $\phi(\cdot)$ on a held-out LRA--Image example (Figure~\ref{fig:lra_channels}).

\subsection{Post-hoc Conversion of Finetuned Quadratic Transformers}

We study \emph{post-hoc conversion} of finetuned quadratic-attention models to linear attention, followed by brief task-specific finetuning. Concretely, we start from BERT-base models finetuned on GLUE (BERT-FT), replace each softmax attention layer with a kernelized linear attention module, and finetune on the original task using the same optimizer, schedule, and training budget as in the baselines. Within this setting, we compare \textsc{LUNA} to exponential-feature conversions (T2R, T2R-HH) and Hedgehog under identical training recipes and feature dimensions. Table~\ref{tab:glue_conversion} reports dev-set scores and the percentage of the original BERT-FT performance recovered by each converted model.

Across GLUE, \textsc{LUNA} recovers 99.5\% of the BERT-FT score on average, slightly exceeding Hedgehog (99.3\%) and substantially improving over T2R (88.9\%) and T2R-HH (93.5\%). Task-wise, \textsc{LUNA} closely tracks BERT-FT on all benchmarks and modestly improves on SST2 and STS-B. These results indicate that a learned kernel feature map can act as a drop-in replacement for quadratic attention in finetuned BERT models, without any bespoke distillation or changes to the training protocol.

\begin{table*}[h]
\caption{Post-hoc conversion of \textbf{BERT-base} on GLUE. Scores are dev-set metrics in the standard GLUE format. "Recover" is the percentage of the BERT-FT score recovered by each converted model, averaged across tasks.}
\label{tab:glue_conversion}
\centering
\footnotesize
\resizebox{0.9\textwidth}{!}{%
\begin{tabular}{lcccccccccc}
\toprule
Method & CoLA & SST2 & MRPC & STS-B & QQP & MNLI & QNLI & RTE & (\%) Recover \\
\midrule
BERT-FT   & 58.8 & 93.2 & 90.2 & 88.8 & 91.0 & 84.7 & 91.3 & 68.2 & 100.0 \\
\midrule
T2R       & 43.6 & 87.7 & 83.0 & 78.6 & 86.7 & 78.9 & 84.6 & 54.1 & 88.9 \\
T2R-HH    & 56.9 & 90.9 & 89.1 & 77.7 & 90.0 & 77.4 & 84.5 & 56.3 & 93.5 \\
Hedgehog  & 59.2 & 92.6 & 90.1 & 87.4 & 91.0 & 82.6 & 89.6 & 69.3 & 99.3 \\
\midrule
\textsc{LUNA}      & 58.8 & 93.4 & 90.1 & 88.5 & 90.7 & 83.5 & 90.6 & 68.8 & \textbf{99.5} \\
\bottomrule
\end{tabular}
}
\end{table*}

\begin{figure*}[ht!]
  \centering
  \includegraphics[width=\textwidth]{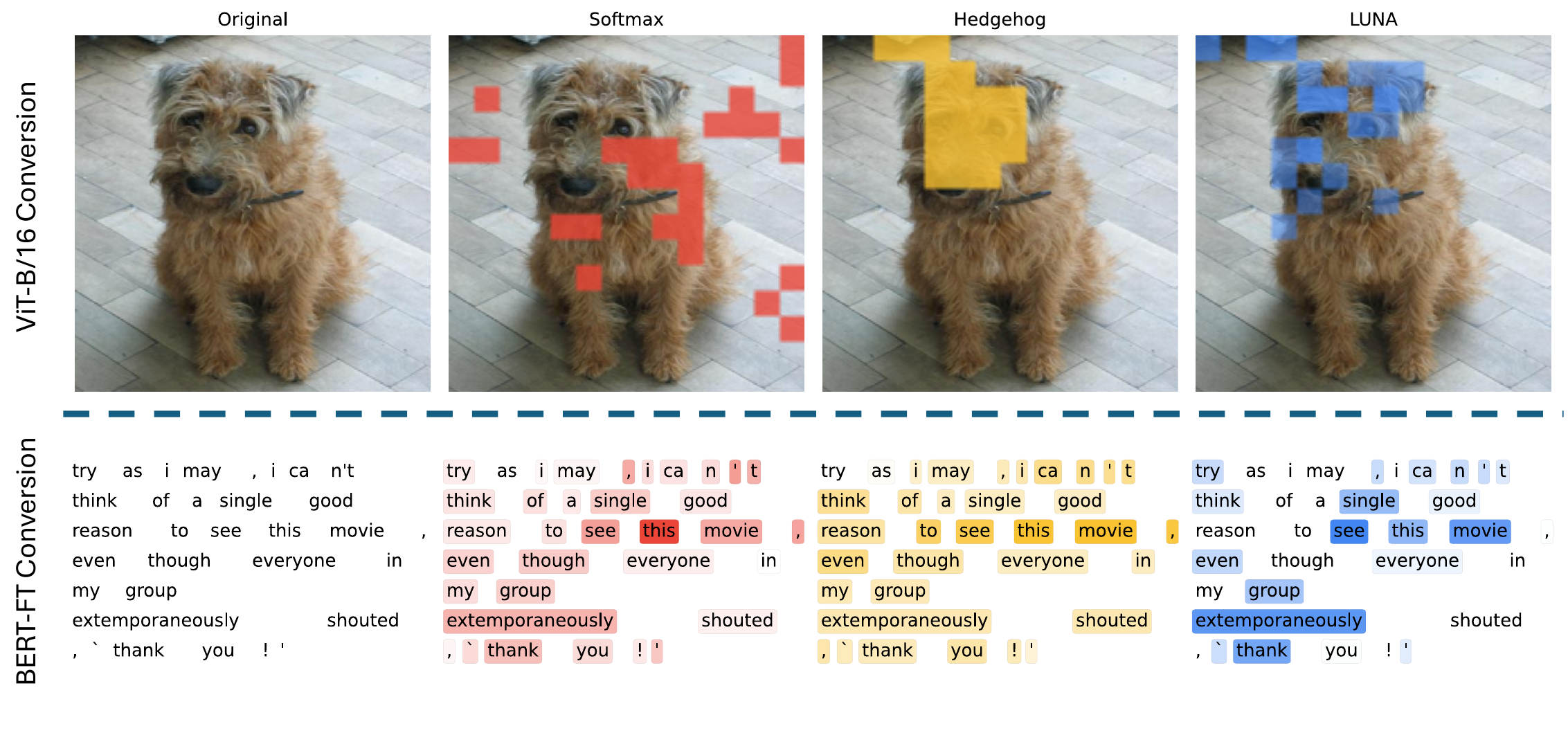}
  \vspace{-1mm}
  \caption{CLS-based attention visualizations across vision and language. Top row (ViT-B/16, ImageNet dog sample): CLS attention on images. Columns: Original, Softmax rollout, Hedgehog (diffuse baseline), \textsc{LUNA} (ours). \textsc{LUNA} concentrates on semantically coherent regions (eyes/snout/collar) and suppresses background, while Softmax is scattered and Hedgehog is overly smooth. Bottom row (BERT on SST-2): \textbf{CLS$\rightarrow$token attention on SST-2.} Columns: Original text, Softmax, Hedgehog, \textsc{LUNA}. Tokens are shaded with an alpha proportional to the normalized attention weight from the CLS token (red for Softmax, yellow for Hedgehog, blue for \textsc{LUNA}). \textsc{LUNA} yields compact, sentiment-aligned highlights comparable to Softmax and sharper than Hedgehog.}
  \label{fig:cls-viz-sst2}
\end{figure*}


\noindent We repeat the conversion experiment in vision using ViT-B/16 trained on ImageNet-1K. Starting from a finetuned softmax model, we swap softmax attention for \textsc{LUNA} (or a baseline linearization) in all layers and briefly finetune with the original training configuration. As shown in Table~\ref{tab:vit_imagenet}, \textsc{LUNA} attains 80.5\% Top-1 accuracy, slightly \emph{above} the original softmax model (80.3\%) and higher than T2R-HH (77.0\%) and Hedgehog (79.5\%). Thus, in this setting as well, the learned feature map preserves or modestly improves performance after conversion.

To probe qualitative behavior across modalities, we inspect CLS-centered attention maps on SST-2 and ImageNet-1K (Figure~\ref{fig:cls-viz-sst2}). On SST-2, \textsc{LUNA} produces sharper, content-aligned highlights than Hedgehog while remaining close to the softmax baseline: attention mass is concentrated on sentiment-bearing spans, with stopwords and punctuation receiving lower weight, whereas Hedgehog tends to spread mass more diffusely across the sequence. On ImageNet-1K validation images, we render CLS$\rightarrow$patch attention weights as alpha-tinted overlays; \textsc{LUNA} again yields compact, object-centric highlights with fewer off-object activations than Hedgehog, while staying qualitatively close to the softmax baseline.


\begin{table}[h]
\caption{Post-hoc conversion of \textbf{ViT-B/16} on ImageNet-1K validation accuracy (\%).}
\label{tab:vit_imagenet}
\centering
\small
\setlength{\tabcolsep}{4.5pt}
\renewcommand{\arraystretch}{1.2}
\begin{tabular}{lcccc}
\toprule
Top\text{-}1 & ViT-B/16 & T2R-HH & Hedgehog & \textsc{LUNA} \\
\midrule
Accuracy (\%) & 80.3 & 77.0 & 79.5 & \textbf{80.5} \\
\bottomrule
\end{tabular}
\end{table}

\subsection{Ablations on Channels and Projections}
\label{sec:lra_ablation}
We isolate the LRA--Image task to study how the number of \emph{projections} $M$ (feature projections per head) and the number of \emph{channels} $L$ (parallel attention channels) affect performance. Unless stated otherwise, the backbone, optimizer, schedule, and training recipe follow Section~\ref{sec:lra}. 

\noindent Table~\ref{tab:lra_ablate_LM} summarizes two 1D sweeps. At fixed $M{=}8$, accuracy is relatively stable as $L$ increases, with a mild peak at $L{=}8$ before degrading at $L{=}16$. Conversely, at fixed $L{=}8$, increasing $M$ beyond $8$ yields no benefit; $M{=}16$ is comparable to $M{=}8$, and performance drops sharply at $M{=}32$. This motivates our default choice of $L{=}8$ and $M{=}8$.


\begin{table}[t]
\centering
\caption{LRA--Image ablations. Top: vary channels $L$ (fix $M{=}8$). Bottom: vary projections $M$ (fix $L{=}8$). Report accuracy (\%).}
\label{tab:lra_ablate_LM}
\setlength{\tabcolsep}{5pt}
\renewcommand{\arraystretch}{1.08}
\small
\begin{tabular}{@{}lcccc@{}}
\toprule
\multicolumn{5}{c}{\textbf{Vary $L$ (channels), fix $M{=}8$}} \\
\midrule
$L$             & 2     & 4     & 8     & 16    \\
Accuracy (\%)   & 62.62 & 62.02 & 64.32 & 60.89 \\
\midrule
\multicolumn{5}{c}{\textbf{Vary $M$ (projections), fix $L{=}8$}} \\
\midrule
$M$             & 4     & 8     & 16    & 32    \\
Accuracy (\%)   & 63.06 & 64.32 & 63.88 & 59.15 \\
\bottomrule
\end{tabular}
\vspace{.2in}
\end{table}




\subsection{Ablations: Neural Kernel Variants}
\label{sec:lra_kernel_ablate}
We next ablate the neural kernel $\phi(\cdot)$ while keeping the backbone, training recipe, $(M,L)$, and DWC fixed. All variants share the same feature dimension $D$ and linear-attention implementation; only the form of $\phi(x)$ changes. We compare:
(i) \textbf{RFF--sin,cos (fixed)}: non-learnable sinusoidal/random features $\phi_{\sin,\cos}(x)$;
(ii) \textbf{Bank+Coef}: a basis $\{\phi_\ell(x)\}_{\ell=1}^L$ with learned \emph{global} coefficients $w_\ell$ (input-agnostic reweighting);
(iii) \textbf{Envelope-gated} $\mathrm{envelope}(x)\!\cdot\! \psi(x)$: a learned map $\psi(x)$ modulated by a tokenwise envelope $\mathrm{envelope}(x)\!\in\![0,1]^L$;
(iv) \textbf{No gate} $\psi(x)$: the same learned map with $\mathrm{envelope}(x)\!\equiv\!\mathbf{1}$. On LRA--Image, learned kernels are necessary: all neural variants outperform fixed RFF features, and the simple ungated map $\psi(x)$ substantially outperforms its envelope-gated counterpart. We therefore use $\phi(x)=\psi(x)$ as the default kernel design in \textsc{LUNA}.

\begin{table}[t]
\centering
\small
\caption{LRA--Image: neural kernel ablations (Top-1 accuracy).}
\label{tab:lra_neural_kernel_ablate}
\setlength{\tabcolsep}{1.8pt}
\renewcommand{\arraystretch}{1.08}
\begin{tabular}{l l c}
\toprule
Variant & Kernel form & Accuracy (\%) \\
\midrule
RFF--sin,cos (fixed) & $\phi(x)=\phi_{\sin,\cos}(x)$ & 35.72 \\
Bank+Coef            & $\phi(x)=\sum_{\ell=1}^{L} w_{\ell}\,\phi_{\ell}(x)$ & 40.30 \\
Envelope-gated       & $\phi(x)=\mathrm{h}(x)\odot \psi(x)$ & 41.52 \\
No gate $\psi(x)$       & $\phi(x)=\psi(x)$ & \textbf{64.32} \\
\bottomrule
\end{tabular}
\vspace{-0.25em}
\end{table}

\section{Conclusion}
We introduced \textsc{Luna}, a linear attention mechanism that replaces fixed, data-agnostic feature maps with a learned kernel feature map while preserving a positive-definite kernel and an associative, streaming formulation. This yields linear time and memory complexity and substantially narrows the accuracy gap to quadratic softmax attention. Under matched compute, \textsc{Luna} achieves state-of-the-art performance among efficient Transformers on Long Range Arena, and in post-hoc conversion experiments, swapping softmax in fine-tuned BERT and ViT-B/16 and briefly fine-tuning, it recovers most of the original performance, clearly outperforming fixed exponential-feature linearizations. Ablations show that jointly learning projections and channel functions is crucial, and that removing multiplicative gating improves optimization stability without degrading accuracy. Qualitative CLS-centric visualizations further indicate that \textsc{Luna} concentrates attention on semantically salient spans and regions in both text and images.

\section*{Acknowledgment}
SK acknowledges support from the NSF CAREER Award No.\ 2339898 and the Wellcome Leap “Surgery: Assess/Validate/Expand (SAVE)” program. AS acknowledges support from Lambda Labs through a Lambda Cloud Research Credit award.

{
    \small
    
    \bibliographystyle{cvpr2026-main/ieeenat_fullname}
    \bibliography{arxiv/template}
}

\clearpage
\onecolumn
\setcounter{page}{1}

\section{Supplementary Material}
\begin{remark}
For convenience, in the appendix, we do not distinguish $\phi(\omega,x),\phi_\omega(x)$ and $\phi(x;\omega)$ where $\phi$ is  the feature mapping used to define the kernel. 
\end{remark}
\section{Proof of Proposition \ref{prop:kernel}}\label{sec:kernel_intro}
For convenience, let $G=\mathcal{N}(0,I_d)$. We first introduce the following fundamental result for Kernel function and the feature space: 

\begin{lemma}\label{lem:kernel}
Let $k(x,y)$ be a positive definite kernel, then there exists a feature mapping $\phi:\mathbb{R}^d \to \mathcal{H}$ such that $k(x,y)=\langle \phi(x),\phi(y) \rangle_{\mathcal{H}}$ and $\text{dim}(\mathcal{H})\leq |\mathbb{R}|$. 
\end{lemma}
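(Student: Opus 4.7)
The plan is to carry out the classical Moore--Aronszajn construction of the reproducing-kernel Hilbert space (RKHS) associated with $k$, and then read off the feature map and the dimension bound from that construction.

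First I would define the pre-Hilbert space
\[
\mathcal{H}_0
:=\mathrm{span}\!\left\{k(\cdot,x):x\in\mathbb{R}^d\right\},
\]
viewed as a vector subspace of $\mathbb{R}^{\mathbb{R}^d}$. On $\mathcal{H}_0$ I would put the candidate bilinear form
\[
\Big\langle \sum_{i}\alpha_i\,k(\cdot,x_i),\;\sum_{j}\beta_j\,k(\cdot,y_j)\Big\rangle
:=\sum_{i,j}\alpha_i\beta_j\,k(x_i,y_j).
\]
The first technical step is to verify that this is well-defined (independent of the chosen representation of an element of $\mathcal{H}_0$). This follows from the identity $\langle f,k(\cdot,y)\rangle=f(y)$ for any $f=\sum_i\alpha_i k(\cdot,x_i)$, which expresses the bilinear form in terms of pointwise evaluation and hence depends only on $f$ and $y$, not on the expansion. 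Symmetry of $k$ gives symmetry of the form, and positive-definiteness of $k$ gives $\langle f,f\rangle\ge 0$. Cauchy--Schwarz, applied to this nonnegative symmetric form, then upgrades nonnegativity to a genuine inner product after quotienting by the null set $\{f:\langle f,f\rangle=0\}$.

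Next I would complete this quotiented pre-Hilbert space in the induced norm to obtain a Hilbert space $\mathcal{H}$, and define the feature map
\[
\phi:\mathbb{R}^d\to\mathcal{H},\qquad \phi(x):=[\,k(\cdot,x)\,].
\]
The reproducing identity then gives
\[
\langle \phi(x),\phi(y)\rangle_{\mathcal{H}}
=\langle k(\cdot,x),k(\cdot,y)\rangle=k(x,y),
\]
which is exactly the required factorization. For the cardinality bound, note that $\{k(\cdot,x):x\in\mathbb{R}^d\}$ has cardinality at most $|\mathbb{R}^d|=|\mathbb{R}|$, so its $\mathbb{Q}$-linear (or $\mathbb{Q}[i]$-linear) span is dense in $\mathcal{H}$ and has cardinality at most $|\mathbb{R}|$. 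By a standard Gram--Schmidt argument on this dense set, any orthonormal basis of $\mathcal{H}$ has cardinality at most $|\mathbb{R}|$, yielding $\dim(\mathcal{H})\le|\mathbb{R}|$.

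The step I expect to be the main obstacle is the well-definedness and positivity of the candidate inner product on $\mathcal{H}_0$: one must show that the value assigned to a pair of elements does not depend on the particular finite representation, and that $\langle f,f\rangle=0$ forces $f\equiv 0$ as a function on $\mathbb{R}^d$ (so the quotient by the null space does not kill anything meaningful). Both points rest crucially on the positive-definiteness of $k$ via Cauchy--Schwarz applied to the reproducing identity; everything else (completion, definition of $\phi$, verification of the factorization, and the dimension bound) is routine once this is settled.
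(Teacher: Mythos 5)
Your proposal is correct and follows essentially the same route as the paper: both construct the RKHS $\overline{\operatorname{span}}\{k(\cdot,x)\}$ via the Moore--Aronszajn construction, take $\phi(x)=k(\cdot,x)$, and bound $\dim(\mathcal{H})$ by the cardinality of the spanning set $\{k(\cdot,x):x\in\mathbb{R}^d\}$. If anything, your cardinality argument (passing through a dense $\mathbb{Q}$-linear span of size at most $|\mathbb{R}|$) is more carefully justified than the paper's one-line comparison of the basis with the spanning set.
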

\begin{remark}
Given a kernel function, its feature mapping and the corresponded feature space are not uniquely determined. Thus, we only claim existence in the above statement. 
\end{remark}
\begin{proof}
We define the RKHS (Reproducing Kernel Hilbert Space): 
$$\mathcal{H}:=\overline{span}\{k(\cdot,x):x\in X\}$$
where the inner product is defined as the following: 
for each $f,g\in \mathcal{H}$, by definition of the RKHS space, there exists sequence $(a_i)_{i=1}^\infty,(b_i)_{i=1}^\infty\subset \mathbb{R}$ such that  $f=\lim_n f_n,g=\lim_n g_n$ where $f_n=\sum_{i=1}^n a_i k(x,x_i),g_n=\sum_{i=1}^n b_i k(x,x_i)$.
Then $$\langle f,g \rangle_{\mathcal{H}}:=\lim_{n}\langle f_n,g_n\rangle_{\mathcal{H}}:= \lim_{n} \sum_{i,j=1}^na_ib_jk(x_i,x_j).$$

\noindent It is clear $\phi(x):x\mapsto k(\cdot,x)$ is a feature mapping with $k(x,y)=\langle \phi(x),\phi(y) \rangle_{\mathcal{H}}$. It remains to show $\text{dim}(\mathcal{H}_\phi)\leq |\mathbb{R}|$. Choose an orthogonal basis of $\mathcal{H}$, denoted as $\{\phi_\alpha\}$, we have: 
$$|\{\phi_\alpha\}|\leq |\{(k(\cdot,x):x\in X\}|\leq |X|=|\mathbb{R}^d|=|\mathbb{R}|.$$
where the last line holds since  $d\in \mathbb{N}$ is finite. 
Thus, we complete the proof. 
\end{proof}

\paragraph{Forward direction.}  
It is immediate that $k(x,y)$ is symmetric. It remains to show positive definiteness.  
For any $c_1,\ldots,c_n\in \mathbb{R}$ and $x_1,\ldots,x_n\in \mathbb{R}^d$, we have
\begin{align}
\sum_{i,j=1}^n c_i c_j k(x_i,x_j) 
&= \mathbb{E}_{\omega\sim G}\!\left[\sum_{i,j=1}^n c_i c_j \phi_\omega(x_i)\cdot \phi_\omega(x_j)\right] \nonumber\\
&= \mathbb{E}_{\omega\sim G}\!\left[\Big\|\sum_{i=1}^n c_i \phi_\omega(x_i)\Big\|^2\right] \nonumber\\
&\ge 0. \nonumber
\end{align}
Hence, $k(x,y)$ defined in \eqref{eq:kernel} is a positive definite kernel.  

\paragraph{Reverse direction.}  
Suppose $k(x,y)$ is a positive definite kernel. Then by the above lemma, there exists a feature map $\phi:\mathbb{R}^d \to \mathcal{H}$ into a Hilbert space $(\mathcal{H},\langle\cdot,\cdot\rangle_\mathcal{H})$ such that
$$
k(x,y) = \langle \phi(x),\phi(y)\rangle_\mathcal{H}.
$$
and there exists an orthogonal basis $U=\{u_\alpha\}_{\alpha\in \mathbb{R}}$ spanning $\mathcal{H}$. Define:
$$
g_\alpha(x) := \langle \phi(x),u_\alpha \rangle_\mathcal{H}.
$$
Then
\begin{align}
k(x,y) = \int_{\alpha\in \mathbb{R}} g_\alpha(x) g_\alpha(y) d\alpha. \label{pf:k}
\end{align}

\noindent We distinguish two cases.\\  

\noindent\textbf{Case 1: $U$ is countable.}  
Without loss of generality, rewrite \eqref{pf:k} as
\begin{align}
k(x,y) = \sum_{i\in \mathbb{Z}} g_i(x) g_i(y). \label{pf:k_count}
\end{align}
For each $\omega\in\mathbb{R}$, let $I(\omega)=i$ be the unique integer such that $i-1<\omega\le i$. Define
$$
\phi_\omega(x) := \frac{1}{\sqrt{G((i-1,i])}} g_{I(\omega)}(x).
$$
Then
\begin{align}
\mathbb{E}_{\omega\sim G}[\phi_\omega(x)\phi_\omega(y)]
&= \sum_{i\in\mathbb{Z}} \frac{g_i(x) g_i(y)}{G((i-1,i])}  \mathbb{P}(\omega\in(i-1,i]) \nonumber\\
&= \sum_{i\in\mathbb{Z}} g_i(x) g_i(y) \nonumber\\
&= k(x,y). \nonumber
\end{align}

\noindent \textbf{Case 2: $U$ is uncountable.}  
Let $\nu$ be the uniform measure on the index set $\{\alpha : u_\alpha \in U\}$. Since $\nu$ is non-discrete, it is a continuous measure. Hence the Radon–Nikodym derivative
$$
\frac{d\nu}{dG}:\mathbb{R}\to \mathbb{R}
$$
is well defined. Extend $g_\alpha$ to all $\alpha\in\mathbb{R}$ by
$$
\bar{g}_\alpha(x) :=
\begin{cases}
g_\alpha(x), & \text{if } u_\alpha\in U,\\[4pt]
0, & \text{otherwise}.
\end{cases}
$$
Now define
$$
\phi_\omega(x) := \bar{g}_\omega(x)\sqrt{\tfrac{d\nu}{dG}(\omega)}.
$$
Then
\begin{align}
\mathbb{E}_{\omega\sim G}[\phi_\omega(x)\phi_\omega(y)]
&= \int_{\mathbb{R}} \bar{g}_\omega(x)\bar{g}_\omega(y)\frac{d\nu}{dG}(\omega) dG(\omega) \nonumber\\
&= \int_{\mathbb{R}} \bar{g}_\omega(x)\bar{g}_\omega(y) d\nu(\omega) \nonumber\\
&= k(x,y). \nonumber
\end{align}

\noindent Thus in both cases the kernel $k(x,y)$ admits the representation \eqref{eq:kernel}, which completes the proof.

\begin{remark}\label{rk:kernel}
In the proof, we can see each $k(x,y):\mathbb{R}^d\times \mathbb{R}^d\to \mathbb{R}$ can be written as 
$$\mathbb{E}_{\omega\sim\mathcal{N}(0,1)}[\phi(\omega,x)\phi(\omega,y)]$$
for some measurable function $\phi:\mathbb{R}\times \mathbb{R}^d\to \mathbb{R}$. 
\end{remark}

\section{Approximation error for learnable kernel.}

In practice, we utilize a finite layer MLP (e.g. the model described in section 2.2) to parametrize the feature mapping, denoted as $\phi^\theta$  and select a finite number of $\{\omega_i\}_{i=1}^m\sim \text{i.i.d.}\mathcal{N}(0,1)$ and let $G_m=\sum_{j=1}^m\delta_{\omega_j}$. 
The induced kernel becomes: 
$$k^{\theta}(x,y)=\mathbb{E}_{\omega\sim G_n}[\phi_{\omega}^\theta(x)\cdot\phi^\theta_{\omega}(y)].$$

\noindent Given a machine learning task, suppose $k^*(x,y)$ is an optimal kernel for this task, that is: 
\begin{align}
k^*(x,y)\in \arg\min_{k\text{ is a kernel}}\mathbb{E}_{Z\sim \mu_t}[\mathcal{L}_t(k,Z)]\label{eq:population_sol}
\end{align}
where $\mu_t$ is the ground truth distribution for task $t$.



\noindent One natural question is: \textit{What is the approximation error: $$|k^*(x,y)-k^{\theta}(x,y)|?$$}
The above error can be decomposed into three type of errors: parameterization error, generative error from the Gaussian sample $\omega_i$.

\subsection{Assumptions and Fundamental Results}\label{sec:param_error}
\begin{assumption}\label{asp:phi_1}
Let $K \subset \mathbb{R}^d$ be a compact set. We impose the following assumptions on $\phi$:
\begin{enumerate}
\item[(1)] For each $x \in \mathbb{R}^d$, the map $\omega \mapsto \phi(\omega, x)$ is $\|\phi(\cdot, x)\|_{\mathrm{lip}}$-Lipschitz.
where $$\|\phi(\cdot,x)\|_{lip}:=\sqrt{\sum_{i=1}^D \|\phi^i(\cdot,x)\|_{lip}^2}.$$
That is, we define the Lipschiz norm as the $L_2$ norm of the Lipschiz norm of each component. 
\item[(2)] For each $x \in \mathbb{R}^d$, the map $\omega \mapsto \phi(\omega, x)$ is bounded, i.e.,
$$
\sup_{\omega \in \mathbb{R}^d} \|\phi(\omega, x)\| < \infty.
$$
\item[(3)] $\phi$ is uniformly bounded when $x$ is restricted to $K$, i.e.,
$$
\max_{\omega \in \mathbb{R}^d,  x \in K} \|\phi(\omega, x)\| < \infty.
$$
\item[(4)] 
$\phi(\cdot,x)$ has a uniform Lipschitz norm, that is:
$$
\max_{x \in K} \|\phi(\cdot, x)\|_{\mathrm{lip}} < \infty.
$$
\item[(5)] We define 
$$
g(\omega) = \sup_{x \in K} |\phi(\omega, x)|^2.
$$
Then $g(\omega) < \infty$ for all $\omega$, and 
$$
\mathbb{E}_{\omega \sim \mathcal{N}(0, I_d)}[g(\omega)] < \infty.
$$
\item[(6)] The kernel can be expressed as
\begin{align*}
k(x,y)
&= h(x)\,k'(x-y)\,h(y) \\
&= \mathbb{E}_{\omega\sim\mathcal{N}(0,I_d)}
   \!\left[\phi(\omega,x)^\top \phi(\omega,y)\right].
\end{align*}
with $$\phi(\omega,x)=h(x)\psi(\omega^\top x);$$
where $h: \mathbb{R}^d \to \mathbb{R}$ and $k': \mathbb{R}^d \times \mathbb{R}^d \to \mathbb{R}$.
The second equality follows from lemma \ref{lem:phi_decompose}.
Moreover, $h(x)$ is assumed to be continuous. 
\item[(7)] Consider MLPs $\psi_{NN},h_{NN}$, we suppose their activation function $\sigma$ is Lipschiz, nonpolynomial function. 

\end{enumerate}
\end{assumption}

\subsection{Parametrization error in the learnable kernel.}

In this section, we define a feature mapping via the model \eqref{eq:phi-def-scalarenv}, i.e., $x\mapsto F(u)$ with $u=\omega^\top x$. Our goal is to analyze the expressive error $|k^\theta(x,y)-k(x,y)|$, where 
$k^\theta(x,y)=\mathbb{E}_{\omega\sim\mathcal{N}(0,I_d)}[\phi(\omega,x)\phi(\omega,y)].$

\paragraph{Properties the feature mapping.}We first define the feature mapping and introduce some fundamental properties of it. 

By Proposition \ref{prop:kernel} and Remark \ref{rk:kernel}, we have the following. 
\begin{lemma}\label{lem:feature_map}
For each $D\in \mathbb{N}$, 
there exists a mapping 
$$\mathbb{R}^d\times \mathbb{R}^d\ni(x,\omega)\to \phi(\omega,x)\in \mathbb{R}^{D}$$
such that  
$$k(x,y)=\mathbb{E}_{\omega\in \mathcal{N}(0,I_d)}[\phi(\omega,x)\cdot\phi(\omega,y)].$$
\end{lemma}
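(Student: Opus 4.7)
The plan is to reduce the vector-valued claim for $D$-dimensional features indexed by $\omega\in\mathbb{R}^d$ to the scalar statement already produced by Proposition~\ref{prop:kernel} (and recorded in Remark~\ref{rk:kernel}). That statement gives me, for every positive-definite $k$, a scalar measurable map $\psi:\mathbb{R}\times\mathbb{R}^d\to\mathbb{R}$ such that
\begin{equation*}
k(x,y)\;=\;\mathbb{E}_{\omega_0\sim\mathcal{N}(0,1)}\!\bigl[\psi(\omega_0,x)\,\psi(\omega_0,y)\bigr].
\end{equation*}
So I have the \emph{right-hand side} of the identity in the correct shape, except that (i) the Gaussian variable is one-dimensional rather than $d$-dimensional, and (ii) the feature takes values in $\mathbb{R}$ instead of $\mathbb{R}^D$. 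Both are purely formal mismatches, and the proposal is to bridge them with a measurable reparametrization plus a trivial isometric embedding.

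First, to move from $\mathcal{N}(0,1)$ to $\mathcal{N}(0,I_d)$, I would fix a measurable $T:\mathbb{R}^d\to\mathbb{R}$ whose pushforward under $\mathcal{N}(0,I_d)$ is $\mathcal{N}(0,1)$; the simplest such choice is the coordinate projection $T(\omega)=\omega_1$, whose marginal is $\mathcal{N}(0,1)$ by definition of the product measure. Next, to land in $\mathbb{R}^D$ rather than $\mathbb{R}$, I would compose $\psi$ with any linear isometry $\iota:\mathbb{R}\to\mathbb{R}^D$, e.g.\ $\iota(t)=t\,e_1$ where $e_1$ is the first standard basis vector; any such embedding preserves inner products, so $\iota(a)\cdot\iota(b)=ab$.

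The candidate feature map is then
\begin{equation*}
\phi(\omega,x)\;:=\;\iota\!\bigl(\psi(T(\omega),x)\bigr)\;=\;\psi(\omega_1,x)\,e_1\;\in\;\mathbb{R}^D.
\end{equation*}
It is measurable as a composition of measurable maps. The verification is then a one-line change-of-variables: by the isometry property and the pushforward identity,
\begin{align*}
\mathbb{E}_{\omega\sim\mathcal{N}(0,I_d)}\!\bigl[\phi(\omega,x)\cdot\phi(\omega,y)\bigr]
&=\mathbb{E}_{\omega\sim\mathcal{N}(0,I_d)}\!\bigl[\psi(\omega_1,x)\,\psi(\omega_1,y)\bigr]\\
&=\mathbb{E}_{\omega_0\sim\mathcal{N}(0,1)}\!\bigl[\psi(\omega_0,x)\,\psi(\omega_0,y)\bigr]\;=\;k(x,y),
\end{align*}
which is the desired identity.

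There is no real obstacle here; the only minor technical point is making sure the scalar feature supplied by Remark~\ref{rk:kernel} is genuinely measurable (so that $\psi\circ T$ is measurable on $\mathbb{R}^d\times\mathbb{R}^d$), which follows from the construction used inside the proof of Proposition~\ref{prop:kernel} (either a countable orthonormal basis of the RKHS, or a Radon--Nikodym density in the uncountable case), and then ensuring that the embedding $\iota$ is an isometry, which it is by construction. Since no distributional property of $\mathcal{N}(0,I_d)$ beyond the marginal of one coordinate is used, the argument is independent of $D$ and the dimension $d$ of $\omega$, so the statement holds for every $D\in\mathbb{N}$ as required.
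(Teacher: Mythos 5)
Your construction $\phi(\omega,x)=\psi(\omega_1,x)\,e_1$ is exactly the paper's $\phi(\omega,x)=[\psi(\omega_1,x);0_{D-1}]$, and your marginalization over the first Gaussian coordinate is the same step the paper carries out via the iterated expectation over $\omega_2,\dots,\omega_d$. The proposal is correct and essentially identical to the paper's proof.
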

\begin{proof}
By Proposition \ref{prop:kernel}, there exists a mapping 
$$\mathbb{R}\times \mathbb{R}^d\ni (\omega_1,x)\mapsto \psi(\omega_1,x)\in \mathbb{R}$$
such that $$k(x,y)=\mathbb{E}_{\omega_1\in\mathcal{N}(0,1)}[\phi(\omega_1,x)\phi(\omega,y)],\forall x,y\in \mathbb{R}^d.$$

\noindent Define:
$$\phi(\omega,x)=[\psi(\omega_1,x);0_{D-1}]$$
and we have 
\begin{align}
&\mathbb{E}_{\omega\sim \mathcal{N}(0,I_d)}[\phi(\omega,x)\cdot\phi(\omega,x)]\nonumber\\
&=\mathbb{E}_{\omega_1,\ldots \omega _d\sim\text{i.i.d. }\mathcal{N}(0,1)}[\psi(\omega_1,x)\psi(\omega_1,y)]\nonumber\\
&=\mathbb{E}_{\omega_2,\ldots \omega_d\sim \text{i.i.d }\mathcal{N}(0,1)}[\mathbb{E}_{\omega_1\sim\mathcal{N}(0,1)}[\psi(\omega_1,x)\psi(\omega_1,y)]]\nonumber\\
&=\mathbb{E}_{\omega_2,\ldots \omega_d\sim \text{i.i.d }\mathcal{N}(0,1)}[k(x,y)]=k(x,y)\nonumber 
\end{align}
This completes the proof. 
\end{proof}

\begin{lemma}\label{lem:phi_decompose}
Suppose $k(x,y)$ satisfies Assumption \ref{asp:phi_1} (6), there exists feature mapping in the form. $$\phi(\omega,x)=h(x)\psi(\omega,x).$$    
\end{lemma}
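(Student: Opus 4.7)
The plan is to reduce this statement to Proposition \ref{prop:kernel} (equivalently Lemma \ref{lem:feature_map}) applied to the shift-invariant factor $k'$ rather than to $k$ itself. By Assumption \ref{asp:phi_1}(6) we already have the multiplicative decomposition $k(x,y)=h(x)\,k'(x-y)\,h(y)$ with $h$ continuous. So if we can produce a Gaussian-expectation representation
\[
k'(x-y) = \mathbb{E}_{\omega\sim\mathcal{N}(0,I_d)}\!\big[\psi(\omega,x)^\top \psi(\omega,y)\big],
\]
then the candidate $\phi(\omega,x):=h(x)\psi(\omega,x)$ will, by linearity of expectation and the fact that $h(x), h(y)$ are scalars independent of $\omega$, recover $k(x,y)$ exactly in the factored form the lemma claims.

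The first substantive step is to verify that $k'$ is itself a positive-definite kernel, so that Proposition \ref{prop:kernel} is applicable. Away from the zero set of $h$, the reweighting $\tilde c_i := c_i/h(x_i)$ converts a generic quadratic form of $k'$ into one of $k$:
\[
\sum_{i,j} c_i c_j\, k'(x_i-x_j) \;=\; \sum_{i,j} \tilde c_i \tilde c_j\, k(x_i,x_j) \;\ge\; 0,
\]
so PDness of $k$ transfers directly to $k'$, while symmetry is automatic. Applying Proposition \ref{prop:kernel} to $k'$ then yields the desired measurable $\psi$, and a one-line computation gives $\mathbb{E}[\phi(\omega,x)^\top\phi(\omega,y)] = h(x)h(y)\,k'(x-y) = k(x,y)$.

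The main obstacle is handling the points where $h$ vanishes, because there the expression $k(x,y)/(h(x)h(y))$ is not well defined and the reweighting trick breaks down. I would address this by observing that the decomposition in Assumption \ref{asp:phi_1}(6) already forces $k(x,\cdot)\equiv 0$ whenever $h(x)=0$, so extending $k'$ arbitrarily (e.g., by zero) on the exceptional set and setting $\phi(\omega,x)=0$ there is consistent with all identities we need. A secondary subtlety to flag is that the lemma only asserts the separable form $\phi(\omega,x)=h(x)\psi(\omega,x)$ with $\psi$ depending jointly on $(\omega,x)$, whereas the template \eqref{eq:phi-def-scalarenv} and the stronger claim in Assumption \ref{asp:phi_1}(6) use $\psi(\omega^\top x)$. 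Promoting the weak form to the strong one would require invoking Bochner's theorem on $k'$ and then reweighting the spectral measure to Gaussian via a Radon--Nikodym derivative (exactly the mechanism used in the proof of Proposition \ref{prop:kernel}); for the lemma as stated, however, the weak joint form suffices and falls out directly from the argument above.
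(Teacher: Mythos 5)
Your proof is correct for the lemma as literally stated, but it takes a genuinely different route from the paper's. The paper's proof is a one-liner: it invokes Bochner's theorem on the shift-invariant factor $k'$ and takes $\psi$ to be trigonometric random-Fourier features, $\psi(\omega^\top x)=[\cos(\omega^\top x),\sin(\omega^\top x)]$ or $\sqrt{2}\cos(\omega^\top x+b)$, then multiplies by $h(x)$. You instead route through Proposition~\ref{prop:kernel}: you first establish that $k'$ inherits positive-definiteness from $k$ via the reweighting $\tilde c_i=c_i/h(x_i)$ (a step the paper silently assumes), then apply the Mercer-type Gaussian-expectation representation to $k'$ and multiply by $h$. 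Your approach buys rigor on two points the paper skips --- why $k'$ is a positive-definite kernel at all, and what happens on the zero set of $h$; on the latter, your fix of setting the feature to zero there is the right one, though note it is $\psi$ (restricted to the complement of the zero set and then extended by zero), not $k'$, that should be modified, since $k'$ lives on differences and a difference $z=x_0-y$ with $h(x_0)=0$ may also be realized by a pair at which $h$ is nonzero. What the paper's Bochner route buys is the \emph{ridge} form $\psi(\omega^\top x)$, which is what Assumption~\ref{asp:phi_1}(6) asserts and what Propositions~\ref{pro:error_1_bounded} and~\ref{pro:error1_unbounded} actually consume (their universal-approximation step approximates a function of the scalar $u=\omega^\top x$ on a compact interval); your construction yields only a jointly measurable $\psi(\omega,x)$, which you correctly flag as sufficient for the statement but weaker than the downstream usage. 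Your proposed promotion to the strong form via a Radon--Nikodym reweighting of the spectral measure does not fully close that gap either, since the factor $\sqrt{d\mu/dG(\omega)}$ depends on $\omega$ itself rather than on $\omega^\top x$, so the resulting feature is not a pure ridge function --- but the paper's own proof has the same unaddressed issue unless the spectral measure of $k'$ is already Gaussian.
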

\begin{proof}
We directly obtain the result from Bochner’s theorem \cite{rahimi2007random,rudin2017fourier} as discussed in the main text. Where $\psi(\omega^\top x)=[\cos(\omega^\top x),\sin(\omega^\top x)]$ or $\psi(\omega,x)=\sqrt{2}\cos(\omega^\top x+b)$.    
\end{proof}

\begin{lemma}\label{lem:kernel_compact}
Suppose the feature mapping $\phi(\omega,x)$ satisfies condition (5) in Assumption \ref{asp:phi_1}. Then there exists a sufficiently large $R$  such that,  
$$\mathbb{E}_{\omega}[\|\phi\|-\|\phi\|1_{A}]\leq \epsilon,$$
where $A=\{\omega: \|\omega\|\leq R\}$. 
\end{lemma}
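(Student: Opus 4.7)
The plan is to unfold the indicator, bound $\|\phi(\omega,x)\|$ by a function of $\omega$ alone, and then invoke dominated convergence (or equivalently Cauchy--Schwarz with a tail bound) to show that the tail contribution vanishes as $R\to\infty$.

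First, I would rewrite
$$\mathbb{E}_\omega\bigl[\|\phi\|-\|\phi\|\,\mathbf{1}_{A}\bigr] \;=\; \mathbb{E}_\omega\bigl[\|\phi(\omega,x)\|\,\mathbf{1}_{\{\|\omega\|>R\}}\bigr],$$
so the task reduces to making the tail integral of $\|\phi\|$ outside a ball of radius $R$ smaller than $\epsilon$. Condition (5) of Assumption~\ref{asp:phi_1} says $\|\phi(\omega,x)\|^2 \le g(\omega) := \sup_{x\in K}|\phi(\omega,x)|^2$ with $\mathbb{E}_\omega[g(\omega)]<\infty$, so I would dominate $\|\phi(\omega,x)\| \le \sqrt{g(\omega)}$ uniformly in $x\in K$. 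Since $\sqrt{g}\le 1+g$, the square-root is also integrable against $\mathcal{N}(0,I_d)$.

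Next, I would apply the dominated convergence theorem to $Y_R(\omega) := \sqrt{g(\omega)}\,\mathbf{1}_{\{\|\omega\|>R\}}$: pointwise $Y_R\to 0$ almost surely as $R\to\infty$, while $|Y_R|\le \sqrt{g(\omega)}$ is integrable; hence $\mathbb{E}[Y_R]\to 0$, and $R$ can be chosen large enough that $\mathbb{E}[Y_R]\le \epsilon$. As a quantitative alternative, a one-line Cauchy--Schwarz estimate
$$\mathbb{E}\bigl[\sqrt{g(\omega)}\,\mathbf{1}_{A^c}\bigr] \;\le\; \sqrt{\mathbb{E}[g(\omega)]}\cdot\sqrt{\mathbb{P}(\|\omega\|>R)},$$
combined with the standard Gaussian tail bound $\mathbb{P}(\|\omega\|>R)\to 0$, delivers the same conclusion with an explicit rate.

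There is no serious obstacle: the statement is essentially a restatement of the integrability hypothesis (5) together with the tail-vanishing property of integrable functions. The only point worth a brief comment is uniformity in $x\in K$, but because the dominating function $\sqrt{g(\omega)}$ does not depend on $x$, the resulting bound on the tail integral is automatically uniform over $x\in K$, which is what subsequent approximation-error arguments will use.
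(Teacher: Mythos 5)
Your proposal is correct and follows essentially the same route as the paper's proof: use condition (5) to obtain the $x$-independent integrable dominating function $g(\omega)=\sup_{x\in K}\|\phi(\omega,x)\|^2$ and apply dominated convergence to the tail $\mathbf{1}_{\{\|\omega\|>R\}}$. If anything, you are slightly more careful than the paper, which quietly bounds $\mathbb{E}\bigl[\|\phi\|^2\mathbf{1}_{A^c}\bigr]$ rather than the unsquared quantity in the statement, whereas your $\sqrt{g}\le 1+g$ (or Cauchy--Schwarz) step handles the unsquared norm directly.
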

\begin{proof}
Let $g(\omega)=\sup_{x\in K}\|\phi(\omega,x)\|^2$. By condition (5) in Assumption \ref{asp:phi_1}, we have $$\mathbb{E}_{\omega\sim\mathcal{N}(0,I_d)}[g(X)]<\infty.$$
Thus, by the dominated convergence theorem and the fact $g1_{B}\leq g$ for any set $B\subset \mathbb{R}^d$, we have 
\begin{align}
\mathbb{E}[g(\omega) 1_{\|\omega\|\ge R}]\searrow 0 \quad \text{as } R\to\infty. \nonumber 
\end{align}

\noindent Pick $R>0$ such that $\mathbb{E}[g(\omega)1_{|\omega|\ge R}]\leq \epsilon$. We then have 
\begin{align}
\mathbb{E}[\|\phi(\omega,x)\|^21_{\omega\ge R}]\leq \mathbb{E}[g 1_{|\omega|\ge R}]\leq \epsilon, 
\end{align}
and the proof is complete. 
\end{proof}

\begin{lemma}\label{lem:phi_compact}
Given a non-empty compact set $K$, 
suppose $k(x,y)$ satisfies Assumption~\ref{asp:phi_1}(6), 
so that we can write the feature mapping as
$\phi(\omega,x) = h(x)\,\psi(\omega^\top x)$.
Assume moreover that $\psi$ satisfies condition (5) in 
Assumption~\ref{asp:phi_1}.
Then there exists a compact set $A$ such that, for all $x,y \in K$,
\begin{equation}\label{eq:kA-approx}
\begin{aligned}
\bigl|k(x,y) - k^{A}(x,y)\bigr|
&\le \epsilon,\\
k^{A}(x,y)
&:= \mathbb{E}_{\omega \sim \mathcal{N}(0,I_d)}
\bigl[\phi(\omega,x)\,\phi(\omega,y)\,\mathbf{1}_{\{\omega\in A\}}\bigr].
\end{aligned}
\end{equation}
\end{lemma}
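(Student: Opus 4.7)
\medskip

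\noindent\emph{Proof plan for Lemma \ref{lem:phi_compact}.} The strategy is to reduce the claim to a direct application of Lemma \ref{lem:kernel_compact} via Cauchy--Schwarz. Since $\phi(\omega,x)$ is vector-valued, the pointwise product $\phi(\omega,x)^\top \phi(\omega,y)$ is not trivially bounded by a single integrable envelope, but condition (5) of Assumption \ref{asp:phi_1} gives us exactly such an envelope after one application of Cauchy--Schwarz.

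\medskip

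\noindent First, I would write
\begin{align*}
k(x,y)-k^{A}(x,y)
&= \mathbb{E}_{\omega\sim\mathcal{N}(0,I_d)}
\bigl[\phi(\omega,x)^\top\phi(\omega,y)\,\mathbf{1}_{\{\omega\notin A\}}\bigr],
\end{align*}
and then bound the integrand pointwise by Cauchy--Schwarz as
\begin{align*}
\bigl|\phi(\omega,x)^\top\phi(\omega,y)\bigr|
&\le \|\phi(\omega,x)\|\,\|\phi(\omega,y)\| \\
&\le \sup_{x\in K}\|\phi(\omega,x)\|^2 \;=\; g(\omega),
\end{align*}
uniformly in $x,y\in K$. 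Taking absolute values inside the expectation and the supremum over $x,y\in K$ outside yields
\begin{align*}
\sup_{x,y\in K}\bigl|k(x,y)-k^{A}(x,y)\bigr|
&\le \mathbb{E}_{\omega\sim\mathcal{N}(0,I_d)}\!\bigl[g(\omega)\,\mathbf{1}_{\{\omega\notin A\}}\bigr].
\end{align*}

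\medskip

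\noindent Second, I would invoke Lemma \ref{lem:kernel_compact}, which, under Assumption \ref{asp:phi_1}(5), guarantees that $\mathbb{E}[g(\omega)\,\mathbf{1}_{\{\|\omega\|\ge R\}}]\searrow 0$ as $R\to\infty$ by dominated convergence (the envelope $g$ itself is $\mathcal{N}(0,I_d)$-integrable). Hence, for any $\epsilon>0$, there exists $R=R(\epsilon)>0$ with
\begin{align*}
\mathbb{E}_{\omega\sim\mathcal{N}(0,I_d)}\!\bigl[g(\omega)\,\mathbf{1}_{\{\|\omega\|\ge R\}}\bigr]\le \epsilon.
\end{align*}
Choosing $A=\{\omega\in\mathbb{R}^d:\|\omega\|\le R\}$, which is compact, and combining with the previous display gives the desired uniform bound $|k(x,y)-k^{A}(x,y)|\le \epsilon$ for all $x,y\in K$.

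\medskip

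\noindent There is no real obstacle here beyond bookkeeping: the multiplicative structure $\phi(\omega,x)=h(x)\psi(\omega^\top x)$ from Assumption \ref{asp:phi_1}(6) is not actually needed in the argument, only the fact that the squared-norm envelope $g$ from Assumption \ref{asp:phi_1}(5) is integrable against the Gaussian measure. The one subtlety to flag is the use of Cauchy--Schwarz to dominate a bilinear quantity by a single scalar $g(\omega)$ rather than the product of two different envelopes, which is what lets us immediately reuse Lemma \ref{lem:kernel_compact} verbatim.
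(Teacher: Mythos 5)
Your proof is correct and follows essentially the same route as the paper's: truncate the Gaussian expectation to a large ball $A=\{\|\omega\|\le R\}$, dominate the tail by an integrable envelope coming from condition (5), and invoke dominated convergence (Lemma~\ref{lem:kernel_compact}) to make the tail smaller than $\epsilon$. Your execution is in fact cleaner than the paper's: the paper's displayed bound drops the $y$-side factor entirely (it bounds $|k(x,y)-k^A(x,y)|$ by $\mathbb{E}[|h(x)\psi(\omega,x)-h(x)\psi(\omega,x)\mathbf{1}_{\omega\in A}|]$, with no $\phi(\omega,y)$ present), whereas your Cauchy--Schwarz step $|\phi(\omega,x)^\top\phi(\omega,y)|\le\sup_{x\in K}\|\phi(\omega,x)\|^2=g(\omega)$ handles the bilinear integrand properly and uniformly in $x,y\in K$. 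One small correction to your closing remark: the lemma assumes condition (5) for $\psi$, not for $\phi$, so the multiplicative structure $\phi(\omega,x)=h(x)\psi(\omega^\top x)$ is not entirely dispensable --- you still need the continuity of $h$ on the compact set $K$ (from Assumption~\ref{asp:phi_1}(6)) to conclude that
\begin{equation*}
\sup_{x\in K}\|\phi(\omega,x)\|^2\;\le\;\Bigl(\max_{x\in K}|h(x)|\Bigr)^2\,\sup_{x\in K}\|\psi(\omega^\top x)\|^2
\end{equation*}
is Gaussian-integrable, which is the envelope your argument feeds into Lemma~\ref{lem:kernel_compact}. With that one-line transfer added, the proof is complete.
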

\begin{remark}

The condition (5) in Assumption \ref{asp:phi_1} is much weaker than the boundedness of $\psi(\omega,x)$. The two widely used $\psi$ functions derived in Bochner's Theorem are bounded and thus satisfy the condition (6). 
\end{remark}
\begin{proof}
From assumption \ref{asp:phi_1} (6), we have  $\phi(\omega,x)=h(x)\psi(\omega^\top x)$.
where $C$ is a constant. It is clear $C \|\omega\|$  is integrable in $\mathbb{R}^d,\mathcal{N}(0,I_d)$. Thus (5) in Assumption \ref{asp:phi_1} are satisfied. 

\noindent From Lemma \ref{lem:phi_compact}, there exists compact set $A$ such that $\mathbb{E}_{\omega\sim\mathcal{N}(0,I_d)}[\|\psi(\omega,x)\|1_{A^c}]\leq \epsilon,\forall x$. Then we have 
\begin{align}
&|k(x,y)-k^A(x,y)|\nonumber\\
&\leq \mathbb{E}_{\omega\sim\mathcal{N}(0,I_d)}[|h(x)\psi(\omega,x)-h(x)\psi(\omega,x) 1_{\omega\in A}|]\nonumber\\
&=|h(x)|\mathbb{E}_{\omega}[|\psi(\omega,x)|1_{A^c}]\nonumber\\
&\leq \max_{x\in K}|h(x)|\epsilon 
\end{align}
Since $x\mapsto h(x)$ is continuous, $\max_{x\in K}|h(x)|$ is finite. Thus we complete the proof. 
\end{proof}

\paragraph{Some useful Lemmas.} 
Now we introduce some fundamental lemma and inequalities. 
\begin{lemma}\label{lem:product_diff}
Pick $a_1,a_2,b_2,b_2\in \mathbb{R}^d$ (or $a_1,a_2\in \mathbb{R}, b_1,b_2\in \mathbb{R}^d$), let $\Delta_a=a_1-a_2,\Delta_b=b_1-b_2$, we have 
\begin{align}
&\|a_1\cdot b_1 -a_2\cdot b_2\| \nonumber\\
&=\|a_1\cdot b_1- (a_1-\Delta_a)\cdot (b_1-\Delta_b)\|\nonumber\\
&=\|\Delta a\cdot b+a_1\Delta_b-\Delta_a\cdot \Delta_b\|\nonumber\\
&\leq \|\Delta a\| \|b_1\|+ \|\Delta b\|\|a_1\|+ \|\Delta_a\|\|\Delta_b\|\nonumber 
\end{align}
\end{lemma}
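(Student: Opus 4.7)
The plan is to prove this by direct algebraic expansion followed by the triangle inequality. The statement is an elementary identity for the difference of two products (scalar–vector or vector–vector via inner product), so no deep machinery is needed; the only thing to get right is the bookkeeping of the expansion and, in the vector–vector case, which norm inequalities to invoke for the cross terms.

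First I would write $a_2 = a_1 - \Delta_a$ and $b_2 = b_1 - \Delta_b$ by definition of $\Delta_a, \Delta_b$, and substitute into $a_1\cdot b_1 - a_2\cdot b_2$. Expanding the product $(a_1-\Delta_a)\cdot(b_1-\Delta_b)$ using bilinearity of the inner product (or of scalar multiplication in the scalar case) yields four terms, three of which cancel with $a_1\cdot b_1$, leaving exactly
\[
a_1\cdot b_1 - a_2\cdot b_2 \;=\; \Delta_a\cdot b_1 \;+\; a_1\,\Delta_b \;-\; \Delta_a\cdot\Delta_b,
\]
which is the claimed identity.

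Next I would take norms on both sides and apply the triangle inequality to split the right-hand side into three pieces. For each piece I apply either Cauchy--Schwarz (in the vector--vector case, where $\cdot$ denotes the inner product and the left-hand norm is $|\cdot|$) or submultiplicativity (in the scalar-vector case), to conclude
\[
\|\Delta_a\cdot b_1\| \le \|\Delta_a\|\,\|b_1\|, \quad \|a_1\,\Delta_b\| \le \|a_1\|\,\|\Delta_b\|, \quad \|\Delta_a\cdot\Delta_b\| \le \|\Delta_a\|\,\|\Delta_b\|.
\]
Summing these three bounds gives the stated inequality.

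There is really no obstacle here: the result is a routine ``add and subtract'' identity combined with triangle/Cauchy--Schwarz. The only minor subtlety is that the lemma is stated in two slightly different typings (scalar-vector and vector-vector), so I would make the covering argument uniform by treating $a_i \cdot b_i$ as either ordinary multiplication (when $a_i\in\mathbb{R}$, $b_i\in\mathbb{R}^d$) or the Euclidean inner product (when both are vectors), and noting that in both cases bilinearity of the map and the appropriate norm inequality apply verbatim.
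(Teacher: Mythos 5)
Your proposal is correct and is essentially the same argument the paper uses: the lemma's displayed chain is exactly the substitution $a_2=a_1-\Delta_a$, $b_2=b_1-\Delta_b$, bilinear expansion, and then triangle inequality with Cauchy--Schwarz (or absolute homogeneity in the scalar case) on each of the three terms. The only nitpick is your phrasing that ``three of which cancel'' --- only the $a_1\cdot b_1$ term cancels and the other three survive with flipped signs --- but the identity you write down is the right one.
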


\paragraph{Universial Approximation Theorem}
Next, we introduce the following classical Universal approximation theorem: 
\begin{theorem}\label{thm:uat_1}[\cite{cybenko1989approximation}]
Consider the following 1-hidden-layer MAP hypothesis class, 
where $\sigma$ is not a polynomial function:
\begin{equation}\label{eq:map-class}
\begin{aligned}
\mathcal{H}
&:=
\Bigl\{
A\sigma(Wx+b)
:
W\in\mathbb{R}^{w\times d},\ 
b\in\mathbb{R}^w,\ 
A\in\mathbb{R}^{D\times w}
\Bigr\},\\
&=
\overline{\operatorname{span}}\bigl\{\sigma(Wx+b)\bigr\}.
\end{aligned}
\end{equation}
Then $\mathcal{H}$ is dense in $\mathcal{C}(K)$ with respect to $\|\cdot\|_{\sup}$ 
for each compact set $K$, and it is dense in $\mathcal{C}(\mathbb{R}^d)$ with respect to 
$\|\cdot\|_{\mu}$, where $\mu$ is a finite positive measure supported on a compact set.
\end{theorem}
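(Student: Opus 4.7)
The plan is to reduce to the scalar output case and then apply the standard Hahn--Banach argument. Since $A \in \mathbb{R}^{D\times w}$ can assemble arbitrary linear combinations of scalar features $\sigma(w_i^\top x + b_i)$ into each output coordinate independently, density of $\mathcal{H}$ in $\mathcal{C}(K;\mathbb{R}^D)$ follows by applying the scalar statement coordinate-wise. Thus it suffices to show that $\mathcal{H}_1 := \operatorname{span}\{\sigma(w^\top x + b) : w \in \mathbb{R}^d,\ b \in \mathbb{R}\}$ is dense in $\mathcal{C}(K;\mathbb{R})$ with respect to the sup-norm.

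Suppose for contradiction that $\overline{\mathcal{H}_1} \neq \mathcal{C}(K)$. By Hahn--Banach there is a continuous linear functional on $\mathcal{C}(K)$ that vanishes on $\mathcal{H}_1$ but is nonzero somewhere, and by the Riesz representation theorem this functional corresponds to a nonzero finite signed Borel measure $\mu$ on $K$ with
\[
\int_K \sigma(w^\top x + b)\,d\mu(x) = 0 \qquad \forall\, w\in\mathbb{R}^d,\ b\in\mathbb{R}.
\]
The aim is to derive $\mu=0$, a contradiction. To handle the fact that $\sigma$ need only be continuous (not smooth), I would first mollify: convolve $\sigma$ with a compactly supported smooth bump $\eta$ on $\mathbb{R}$ to obtain $\sigma_\eta := \sigma * \eta \in C^\infty(\mathbb{R})$. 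By Fubini the displayed identity persists with $\sigma$ replaced by $\sigma_\eta$, and a short argument shows $\eta$ can be chosen so that $\sigma_\eta$ remains non-polynomial (otherwise $\sigma$ itself would be polynomial, as varying the scale of $\eta$ and passing to the limit $\eta\to\delta_0$ gives $\sigma$ as a limit of polynomials of bounded degree).

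The key analytic step is then to show that a non-polynomial $\sigma_\eta \in C^\infty$ has, for every order $k$, a point $b_k$ with $\sigma_\eta^{(k)}(b_k)\neq 0$. Fixing a direction $u\in\mathbb{R}^d$ and considering the pushforward $\mu_u$ of $\mu$ under $x \mapsto u^\top x$, the integral equation reduces to $\int_\mathbb{R}\sigma_\eta(ts + b)\,d\mu_u(s)=0$ for all $t,b$. Differentiating $k$ times in $b$, evaluating at $b=b_k$, and dividing by $\sigma_\eta^{(k)}(b_k)$ shows that each polynomial moment $\int s^k\,d\mu_u(s)$ vanishes; since $\mu_u$ is compactly supported, Stone--Weierstrass forces $\mu_u = 0$. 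Because this holds for every direction $u$, the Cramér--Wold device yields $\mu = 0$, the desired contradiction. Density in $\mathcal{C}(\mathbb{R}^d)$ with respect to $\|\cdot\|_\mu$ for finite compactly supported $\mu$ follows from the same argument, since the dual pairing relevant for Hahn--Banach on that space reduces to integration against another measure absolutely continuous with respect to $\mu$.

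The main obstacle will be the mollification bookkeeping: ensuring that $\sigma_\eta$ is genuinely non-polynomial and that the repeated differentiation under the integral sign is justified. Both are standard but must be handled carefully, and together they constitute the essential technical content beyond the Hahn--Banach framework; the rest of the proof is a clean reduction to one-dimensional moment problems.
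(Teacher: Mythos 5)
The paper does not prove this statement: it is quoted as a classical result (cited to Cybenko, though the non-polynomial form you are proving is really the Leshno--Lin--Pinkus--Schocken theorem), so there is no in-paper argument to compare against. Your sketch is the standard dual-space proof of that theorem: reduce to scalar outputs, apply Hahn--Banach and Riesz representation to obtain an annihilating signed measure, mollify $\sigma$ to a smooth non-polynomial $\sigma_\eta$, kill all moments of each one-dimensional pushforward $\mu_u$, and conclude $\mu=0$ via the Fourier transform / Cram\'er--Wold. The overall architecture is sound, the reduction to $D=1$ is correct (pad the hidden layer and zero out entries of $A$), and the final $\|\cdot\|_\mu$ claim in fact follows trivially from sup-norm density on the compact support of $\mu$, so your separate duality argument there is unnecessary.

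One step as written does not work: differentiating $k$ times \emph{in $b$} only yields $\int_{\mathbb{R}}\sigma_\eta^{(k)}(ts+b)\,d\mu_u(s)=0$, which is the original identity with $\sigma_\eta$ replaced by $\sigma_\eta^{(k)}$ and does not isolate the moment $\int s^k\,d\mu_u(s)$; you cannot "divide by $\sigma_\eta^{(k)}(b_k)$" because the integrand evaluates $\sigma_\eta^{(k)}$ at $ts+b_k$, not at $b_k$. The correct move is to differentiate $k$ times in the scale parameter $t$ and evaluate at $t=0$, which gives
\begin{equation*}
\frac{\partial^k}{\partial t^k}\Big|_{t=0}\int_{\mathbb{R}}\sigma_\eta(ts+b)\,d\mu_u(s)
=\sigma_\eta^{(k)}(b)\int_{\mathbb{R}} s^k\,d\mu_u(s)=0,
\end{equation*}
after which choosing $b=b_k$ with $\sigma_\eta^{(k)}(b_k)\neq 0$ (which exists precisely because $\sigma_\eta$ is not a polynomial) finishes the moment argument. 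Differentiation under the integral is justified since $\mu_u$ is finite and compactly supported and $\sigma_\eta$ is smooth. The remaining point needing care is the one you already flagged: showing some mollification $\sigma_\eta$ is non-polynomial requires the Baire-category argument to get a uniform degree bound before passing to the limit $\eta\to\delta_0$. With the $t$-versus-$b$ fix, your outline is a complete and correct route to the theorem.
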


\noindent The compact domain assumption can be further relaxed: 
\begin{theorem}\cite{hornik1991approximation}\label{thm:uat_2}Under the same setting of the Theorem \ref{thm:uat_1}, suppose $\sigma$ is unbounded non-polynomial, we have 
$\mathcal{H}$ can is dense in $\mathcal{L}^p(\mu),\forall 1\leq p<\infty$ for all finite positive measure $\mu$ defined in $\mathbb{R}^d$. 
\end{theorem}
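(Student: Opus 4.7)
The plan is to bootstrap from Theorem~\ref{thm:uat_1}, which delivers density of $\mathcal{H}$ in $C(K)$ on every compact $K$, up to density in $L^p(\mu)$ on all of $\mathbb{R}^d$. The strategy proceeds through three reductions: approximate the $L^p(\mu)$ target by a compactly supported continuous function, approximate that function uniformly on a large compact set containing its support and the bulk of the $\mu$-mass, and control the behavior of the neural surrogate outside that set.

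First I would fix $f \in L^p(\mu)$ and $\epsilon>0$, and use that for any finite positive Borel measure $\mu$ and $1\le p<\infty$, the space $C_c(\mathbb{R}^d)$ is dense in $L^p(\mu)$ (Lusin's theorem and dominated convergence). Choose $g \in C_c(\mathbb{R}^d)$ with $\|f-g\|_{L^p(\mu)} < \epsilon/3$ and let $K_0 := \mathrm{supp}(g)$. Next, since $\mu$ is finite, pick $R$ so that $\mu(\mathbb{R}^d\setminus B_R) < \delta$, with $\delta$ to be fixed later, and set $K := K_0 \cup B_R$. Applying Theorem~\ref{thm:uat_1} on the compact set $K$, choose $h \in \mathcal{H}$ with $\sup_{x\in K}|g(x)-h(x)| < \eta$ for arbitrarily small $\eta$. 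This contributes at most $\eta\,\mu(\mathbb{R}^d)^{1/p}$ to the $L^p(\mu)$ error on $K$, which can be made smaller than $\epsilon/3$.

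The hard part is the tail: bounding $\|h\|_{L^p(\mu|_{\mathbb{R}^d\setminus K})}$. When $\sigma$ is \emph{bounded}, a crude estimate $|h(x)|\le \bigl(\sum_i|a_i|\bigr)\|\sigma\|_\infty$ uniformly on $\mathbb{R}^d$ combined with $\mu(\mathbb{R}^d\setminus K)<\delta$ gives a tail of order $\delta^{1/p}$ that is absorbed into $\epsilon/3$ by choosing $\delta$ small enough. When $\sigma$ is \emph{unbounded} (the substantive case of Theorem~\ref{thm:uat_2}), a uniform bound on $h$ outside $K$ is unavailable, since its coefficient sum and slope can be arbitrary. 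I expect this to be the main obstacle. The resolution is a finite-difference construction exploiting the non-polynomial assumption: replace each atomic feature $\sigma(w^\top x + b)$ by a small linear combination $\sum_k c_k\,\sigma(w^\top x + b + t_k)$ whose increments, by Leshno--Lin--Pinkus--Schocken type arguments, approximate a compactly supported bump along the ray $\{x : w^\top x \in I\}$. Because $\sigma$ is non-polynomial, such finite differences span a dense subspace of $C_c(\mathbb{R})$ in each $w$-direction, so these bumps can be combined to approximate $g$ on $K_0$ while vanishing (up to arbitrary accuracy) outside a larger compact set, killing the tail contribution.

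An alternative I would pursue in parallel, and which avoids the direct construction above, is a Hahn--Banach argument. If $\mathcal{H}$ were not dense in $L^p(\mu)$, there would exist a nonzero $\varphi \in L^q(\mu)$ ($1/p+1/q=1$) annihilating $\mathcal{H}$, so that the finite signed measure $d\nu := \varphi\,d\mu$ satisfies $\int \sigma(w^\top x + b)\,d\nu(x) = 0$ for all $(w,b)\in\mathbb{R}^d\times\mathbb{R}$. The proof then reduces to showing that a non-polynomial $\sigma$ is $\nu$-discriminatory, i.e.\ this vanishing forces $\nu=0$; this can be done via Fourier/distribution theory, using that the translates $\sigma(\cdot + b)$ generate enough test functions along any fixed direction $w$, which together with the freedom in $w$ pins down $\hat\nu \equiv 0$. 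This Fourier/discriminatory step is, in my view, the true analytic core of the theorem, and where I would spend the bulk of the technical effort.
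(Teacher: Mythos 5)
The paper does not actually prove Theorem~\ref{thm:uat_2}: it is imported verbatim from \cite{hornik1991approximation} and is never invoked in the appendix proofs (those use Theorems~\ref{thm:uat_1} and~\ref{thm:uat_3}), so there is no in-paper argument to compare against and your sketch must stand on its own. Its skeleton is the standard one (reduce to $g\in C_c(\mathbb{R}^d)$, approximate uniformly on a compact set carrying most of the $\mu$-mass, control the tail), and you correctly identify the tail as the crux, but the step meant to close the argument has a genuine gap. The quantifiers are circular: $\delta$, and hence $K$, must be fixed \emph{before} Theorem~\ref{thm:uat_1} is invoked to produce $h$, yet the tail $\int_{\mathbb{R}^d\setminus K}|h|^p\,d\mu$ depends on the size of $h$ off $K$, which uniform approximation on $K$ does not control --- even in the bounded case the constant $\bigl(\sum_i|a_i|\bigr)\|\sigma\|_\infty$ is only known after $h$ has been chosen, so it cannot be ``absorbed by choosing $\delta$ small enough.'' Your proposed repair, building compactly supported bumps from finite differences $\sum_k c_k\,\sigma(w^\top x+b+t_k)$, cannot work as stated in dimension $d\ge 2$: any such combination is a ridge function, constant along the hyperplanes $w^\top x=\mathrm{const}$, so it can be made small only outside a \emph{slab}, never outside a compact subset of $\mathbb{R}^d$, and a finite sum of slab-supported ridge functions is likewise not compactly supported. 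Making the resulting global bound uniform enough to beat a tail budget that was fixed earlier is precisely the content that is missing.

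The Hahn--Banach alternative is closer to Hornik's actual proof, but in the unbounded regime it hits an integrability wall you do not address: for a general finite $\mu$ and unbounded $\sigma$ (e.g.\ ReLU with $\int\|x\|^p\,d\mu=\infty$) the features $\sigma(w^\top x+b)$ need not lie in $L^p(\mu)$ at all, so $\mathcal{H}\not\subset L^p(\mu)$ and the pairing $\int\sigma(w^\top x+b)\,\varphi\,d\mu$ defining the annihilator can diverge. This is not a technicality: Hornik's $L^p(\mu)$ density theorem is stated for \emph{bounded} nonconstant activations for exactly this reason, and the unbounded non-polynomial version asserted in Theorem~\ref{thm:uat_2} is only true under additional moment conditions on $\mu$ (or with ``dense'' reinterpreted as density of $\mathcal{H}\cap L^p(\mu)$) --- a caveat your proof would need to surface rather than inherit. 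Finally, the discriminatory/Fourier step you defer to is itself the hard part for unbounded $\sigma$ and non-compactly-supported annihilating measures $\nu$: the Leshno--Lin--Pinkus--Schocken mollification argument establishes density for uniform convergence on compacta, and transferring it to this setting is not the routine step your last sentence suggests. As written, the proposal identifies the right obstacles but resolves none of them.
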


\begin{theorem}\label{thm:uat_3}[\cite{Ito1992ShiftedRotations,WangQu2022_approxUnbounded}]
The hypothesis class of all 1-hidden layer MLPs, $\mathcal{H}$ is dense in space 
$\mathcal{C}_0(\mathbb{R}^n)$ (set of continuous functions converging to 0 at infinity) and $\mathcal{C}(\bar{\mathbb{R}}^n)$ (set of continuous functions converging to a finite number at infinity).
\end{theorem}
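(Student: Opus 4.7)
The plan is to reduce both claims to a single core problem---uniformly approximating $\mathcal{C}_0(\mathbb{R}^n)$ functions by one-hidden-layer networks---and then to address that core by a cutoff-plus-tail-cancellation argument. Any $f\in\mathcal{C}(\bar{\mathbb{R}}^n)$ with limit $L$ at infinity decomposes as $f=L+(f-L)$ with $f-L\in\mathcal{C}_0(\mathbb{R}^n)$, so it suffices to handle the $\mathcal{C}_0$ case; the constant $L$ is absorbed either by a bias term or by a saturating ridge $\sigma(0^\top x+b)$ if $\sigma$ has finite horizontal asymptotes. Given $f\in\mathcal{C}_0(\mathbb{R}^n)$ and $\varepsilon>0$, first I would pick $R$ with $|f(x)|\le\varepsilon/3$ for $\|x\|\ge R$, take a continuous cutoff $\chi$ equal to $1$ on $\bar B_R$ and vanishing outside $\bar B_{R+1}$, and invoke Theorem~\ref{thm:uat_1} to obtain $g\in\mathcal{H}$ with $\|g-f\chi\|_\infty\le\varepsilon/3$ on $\bar B_{R+2}$. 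It then remains to arrange $|g(x)|\le\varepsilon/3$ for all $\|x\|>R+2$.

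This tail control is the principal obstacle. A single ridge $\sigma(w^\top x+b)$ is constant along the hyperplane $\{w^\top x=\mathrm{const}\}$, so for $n\ge 2$ it cannot itself lie in $\mathcal{C}_0(\mathbb{R}^n)$, and individual hidden units do not vanish at infinity. One must therefore build the approximant from \emph{sums} of ridges whose directional asymptotes cancel. The one-dimensional prototype is the sigmoid difference $\sigma(t-a)-\sigma(t-b)$ with $a<b$ and $\sigma$ saturating, which lies in $\mathcal{C}_0(\mathbb{R})$. By combining many such shifted bumps along a rich family of directions $\{w_i\}$, with coefficients chosen so that the limits along every unit vector cancel, one would carve out a subclass $\mathcal{H}_0\subset\mathcal{H}$ whose elements belong to (or are sup-norm close to) $\mathcal{C}_0(\mathbb{R}^n)$; Theorem~\ref{thm:uat_1} applied locally on spheres of increasing radii, together with a diagonal argument, then shows $\mathcal{H}_0$ is dense in $\mathcal{C}_0(\mathbb{R}^n)$.

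An alternative and cleaner route is Hahn--Banach duality: the dual of $\mathcal{C}_0(\mathbb{R}^n)$ is the space $\mathcal{M}(\mathbb{R}^n)$ of finite signed regular Borel measures, so density fails only if some nonzero $\mu\in\mathcal{M}(\mathbb{R}^n)$ annihilates every ridge,
\[
\int_{\mathbb{R}^n}\sigma(w^\top x+b)\,d\mu(x)=0\qquad\text{for all }w\in\mathbb{R}^n,\ b\in\mathbb{R}.
\]
Passing to the distributional Fourier transform of $\sigma$ (or differentiating in $b$ after mollification) converts this into $\hat\mu(tw)=0$ for almost every $t\in\mathbb{R}$ and every direction $w$, and the non-polynomial assumption on $\sigma$ rules out degenerate cancellations, forcing $\hat\mu\equiv 0$ and hence $\mu=0$. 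The hard part in either approach is precisely the passage from the compact-set and $L^p$ conclusions of Theorems~\ref{thm:uat_1} and~\ref{thm:uat_2} to the sup-norm topology on $\mathcal{C}_0(\mathbb{R}^n)$: one must control uniform decay at infinity rather than merely pointwise or integral behavior, and this is where the ridge-sum cancellation (or the Fourier argument on $\hat\mu$) does the real work.
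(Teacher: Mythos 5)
The paper does not actually prove this statement; it is imported verbatim from the cited references (Ito's 1992 construction with shifted rotations of a sigmoid, and the 2022 follow-up), so there is no in-paper argument to compare against. Your reduction of $\mathcal{C}(\bar{\mathbb{R}}^n)$ to $\mathcal{C}_0(\mathbb{R}^n)$ by subtracting the limit, and your diagnosis that the entire difficulty is sup-norm control of the tail (because a single ridge $\sigma(w^\top x+b)$ is constant on hyperplanes and hence never in $\mathcal{C}_0(\mathbb{R}^n)$ for $n\ge 2$), are both correct and faithful to the structure of Ito's actual proof. However, the proposal as written has two genuine gaps. First, the Hahn--Banach route you call ``cleaner'' does not go through: the duality $\mathcal{C}_0(\mathbb{R}^n)^*\cong\mathcal{M}(\mathbb{R}^n)$ is only usable when the subspace whose density you are testing sits \emph{inside} $\mathcal{C}_0(\mathbb{R}^n)$, and $\mathcal{H}$ does not. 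The ambient space for the sup-norm closure of $\mathcal{H}$ is $C_b(\mathbb{R}^n)$, whose dual consists of finitely additive measures (equivalently, Radon measures on the Stone--\v{C}ech compactification), which may charge the ``points at infinity''; such functionals have no Fourier transform, so the step ``density fails $\Rightarrow$ some nonzero $\mu\in\mathcal{M}(\mathbb{R}^n)$ annihilates every ridge'' is unjustified, and the whole second route collapses precisely at the place where the tail behavior matters.

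Second, the constructive route --- the one that can work --- is only gestured at where it needs to be executed. Choosing coefficients so that ``the limits along every unit vector cancel'' is the entire content of the theorem for $n\ge 2$: a finite sum of ridges has directional asymptotes indexed by a continuum of directions, and arranging simultaneous cancellation while keeping the approximation on $\bar B_{R+2}$ is exactly what Ito's shifted-rotation construction accomplishes (essentially by representing $f$ through an integral over directions, e.g.\ via the Radon transform, and discretizing with matched difference-of-sigmoid profiles). Moreover, your fallback of applying Theorem~\ref{thm:uat_1} ``locally on spheres of increasing radii together with a diagonal argument'' only yields uniform convergence on compact sets, not convergence in the sup norm on all of $\mathbb{R}^n$, which is the whole point of this statement relative to Theorem~\ref{thm:uat_1}. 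Finally, note that the claim requires a hypothesis on the activation that neither the theorem statement nor your proof makes explicit: your argument silently uses that $\sigma$ is bounded with finite horizontal asymptotes (to form $\mathcal{C}_0(\mathbb{R})$ bumps as sigmoid differences and to absorb the constant $L$), and the result in this sup-norm form is not available for arbitrary non-polynomial activations.
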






\paragraph{Parametrization error Analysis}
Based on the above UAT theorems and lemmas, we will discuss the parametrization error.  

\begin{proposition}\label{pro:error_1_bounded}
Suppose $k(x,y)$ and its feature mapping satisfy the conditions in Lemma \ref{lem:kernel_compact}. In addition, suppose feature mapping $\phi$ is continuous. Set Hypothesis class 
$$\mathcal{H}=\{\phi_{NN}(\omega,x):=h_{NN}(x)\psi_{NN}(\omega^\top x)\}$$
where $h_{NN}:\mathbb{R}^d\to\mathbb{R},\psi_{NN}:\mathbb{R}\to \mathbb{R}^{D}$ are the 1-hidden layer MLPs. \\

\noindent Then there exists $M>0$ such that
\begin{equation}
\begin{aligned}
\phi_{NN} &\in \mathcal{H}^M, \\
\mathcal{H}^M 
&:= \{f(\omega,x)=h_{NN}(x)\psi_{NN}(\omega^\top x)^M\}, \\
f^M &:= \text{clip}(f,[-M,M]).
\end{aligned}
\end{equation}

\noindent such that 
$$|k(x,y)-k_{NN}(x,y)|\leq \epsilon,\forall x,y\in K,$$
where $k_{NN}(x,y)=\mathbb{E}_{\omega\sim\mathcal{N}(0,I_d)}[\phi_{NN}(\omega,x)\cdot \phi_{NN}(\omega,y)]$. 

\end{proposition}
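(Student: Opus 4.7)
My strategy is to split $|k(x,y)-k_{NN}(x,y)|$ into (i) a truncation piece that restricts $\omega$ to a compact set $A\subset\mathbb{R}^d$, (ii) a uniform-approximation piece on $A$, and (iii) a tail piece from the clipped MLP on $A^c$, then recombine them by the triangle inequality. Step (i) is handled by Lemma~\ref{lem:phi_compact} (which relies on condition~(5) and the factorization (6) of Assumption~\ref{asp:phi_1}): I fix $A=\{\omega:\|\omega\|\le R\}$ with $R$ large enough that $|k(x,y)-k^A(x,y)|\le\epsilon/3$ uniformly in $x,y\in K$, where $k^A$ truncates the defining Gaussian expectation to $\omega\in A$.

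For step (ii), I exploit the decomposition $\phi(\omega,x)=h(x)\,\psi(\omega^\top x)$ from Assumption~\ref{asp:phi_1}(6) and Lemma~\ref{lem:phi_decompose}. The set $I_A:=\{\omega^\top x:\omega\in A,\ x\in K\}$ is a compact interval in $\mathbb{R}$, so by continuity $B_h:=\max_{K}|h|$ and $B_\psi:=\max_{I_A}\|\psi\|$ are finite. Applying the Universal Approximation Theorem~\ref{thm:uat_1} with a non-polynomial activation gives one-hidden-layer MLPs $h_{NN}:\mathbb{R}^d\to\mathbb{R}$ and $\psi_{NN}:\mathbb{R}\to\mathbb{R}^D$ with $\sup_K|h-h_{NN}|\le\delta$ and $\sup_{I_A}\|\psi-\psi_{NN}\|\le\delta$ for any prescribed $\delta>0$. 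Choosing the clip level $M$ slightly larger than $\max(B_h,B_\psi)+\delta$ makes the clipping inactive on $K\times I_A$, so on $A\times K$ the clipped product $\phi_{NN}^M(\omega,x)=h_{NN}(x)\,\psi_{NN}(\omega^\top x)$ uniformly approximates $\phi(\omega,x)$ within $O(\delta)$. Lemma~\ref{lem:product_diff} then bounds the pointwise error $|\phi\cdot\phi-\phi_{NN}^M\cdot\phi_{NN}^M|$ on $A\times K\times K$ linearly in $\delta$ with prefactors of order $B_h B_\psi$; picking $\delta$ small enough forces the $\omega\in A$ contribution to be at most $\epsilon/3$.

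For step (iii), the clipping guarantees that the $\omega\in A^c$ part of $k_{NN}$ is at most $M^2\,\mathbb{P}(\omega\notin A)$, while the corresponding tail of the true kernel is already absorbed in step~(i). The main obstacle will be coordinating the joint choice of $(R,M,\delta)$: $M$ must exceed $B_\psi$ on $I_A$ (so nothing is clipped on $A\times K$), yet $M^2\,\mathbb{P}(\|\omega\|>R)$ must remain below $\epsilon/3$. This is exactly where the bounded-feature regime is used: $\psi$ is globally bounded, so $B_\psi$ does not grow with $R$, and Gaussian concentration drives $\mathbb{P}(A^c)$ to zero super-exponentially in $R$; if needed I enlarge $R$ and rerun the UAT step on the slightly larger $I_A$ until all three pieces simultaneously sit below $\epsilon/3$. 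Summing the three bounds via the triangle inequality then yields $|k(x,y)-k_{NN}(x,y)|\le\epsilon$ uniformly on $K\times K$. The unbounded-activation case (Proposition~\ref{pro:error1_unbounded}) will require a genuinely different argument because there $B_\psi(R)$ can grow fast enough to spoil the tradeoff that closes step~(iii).
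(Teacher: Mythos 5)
Your proposal follows essentially the same route as the paper's proof: truncate the Gaussian expectation to a compact set $A$ via Lemma~\ref{lem:phi_compact}, uniformly approximate $h$ and $\psi$ on compact sets by the universal approximation theorem, clip $\psi_{NN}$ at a level $M$ tied to $\sup_{\omega\in A,\,x\in K}\|\phi(\omega,x)\|$ so the $A^c$ tail of $k_{NN}$ stays controlled, and recombine. The only differences are bookkeeping—the paper routes the final bound through $\mathbb{E}[\Delta^2(\omega,x)]$ and Cauchy--Schwarz rather than your pointwise $\epsilon/3$ split—and your explicit coordination of $(R,M,\delta)$ in the bounded regime makes the tail step, if anything, slightly more careful than the paper's.
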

\begin{proof}
Pick the compact set $A$ in lemma \ref{lem:phi_compact} such that 
$$\mathbb{E}[\|\phi(x,y)\|]\leq\epsilon,\forall x,y\in K.$$
Since $k(x,y)$ satisfies Assumption \ref{asp:phi_1} (6),  by lemma \ref{lem:phi_decompose}, there exists feature mapping $\phi(\omega,x)=h(x)\psi(\omega^\top x)$. Furthermore, we have 
$$M=\sup_{\omega\in A,x\in K}\|\phi(\omega,x)\|<\infty $$ 
since $\phi$ is continuous. \\ 

\noindent By the UAT Theorem \ref{thm:uat_1}, there exists one-hidden-layer MLP,
$\psi_{NN}$ such that 
$$\|\psi(\omega^\top x)-\psi_{NN}(\omega^\top x)\|\leq \epsilon,\forall (\omega,x)\in A\times K.$$ 

\noindent Let $\psi^M_{NN}=\text{Clip}(\psi_{NN},[-M, M])$. We have 
\begin{multline}
\sup_{\omega\in M,x\in K}
\|\psi(\omega^\top x)-\psi_{NN}^M(\omega^\top x)\|
\leq \\
\|\psi(\omega^\top x)-\psi_{NN}(\omega,x)\|
\leq \epsilon.
\end{multline}

\noindent Similarly, we can find $h_{NN}$ such that $\|h_{NN}-h\|\leq \epsilon,\forall x\in K$. 
Let $\phi_{NN}=h_{NN}\psi_{NN}^M$. 
All expectations below are with respect to $\omega\sim\mathcal{N}(0,I_d)$. 
We have
\begin{equation}
\begin{split}
k(x,y)-k_{NN}(x,y)
&= \mathbb{E}\bigl[\phi(\omega,x)\,\phi(\omega,y)\bigr] \\
&\quad - \mathbb{E}\bigl[\phi_{NN}(\omega,x)\,\phi_{NN}(\omega,y)\bigr].
\end{split}
\end{equation}
By Lemma~\ref{lem:product_diff},
\begin{equation}
\begin{aligned}
k(x,y)-k_{NN}(x,y) \le {}&
\mathbb{E}\bigl[\|\phi(\omega,x)\|\,\Delta(\omega,y)\bigr] \\
&+ \mathbb{E}\bigl[\|\phi(\omega,y)\|\,\Delta(\omega,x)\bigr] \\
&+ \mathbb{E}\bigl[\Delta(\omega,x)\,\Delta(\omega,y)\bigr].
\end{aligned}
\end{equation}

\noindent Therefore,
\begin{equation}\label{pf:k_to_bound}
\begin{aligned}
k(x,y)-k_{NN}(x,y)
&\leq \mathbb{E}^{1/2}[\Delta^2(\omega,x)]\cdot
      \mathbb{E}^{1/2}[\phi^2(\omega,y)] \\
&\quad+ \mathbb{E}^{1/2}[\Delta^2(\omega,y)]\cdot
        \mathbb{E}^{1/2}[\phi^2(\omega,x)] \\
&\quad+ \mathbb{E}^{1/2}[\Delta^2(\omega,x)]\cdot
        \mathbb{E}^{1/2}[\Delta^2(\omega,y)].
\end{aligned}
\end{equation}

\noindent where $\Delta(\omega,x)=\|\phi(\omega,x)-\phi_{NN}(\omega,x)\|$. We have:
\begin{align}
\mathbb{E}^{1/2}[\phi^2(\omega,x)]=k^{1/2}(x,x)\leq \max_{x\in K} k^{1/2}(x,x)\label{pf:phi^2}  
\end{align}
and the same property holds for $y$. It remains to bound 
$\sup_{x\in K}\mathbb{E}[\Delta^2(\omega,x)].$ We have 
\begin{equation}\label{pf:Delta_to_bound}
\begin{split}
\mathbb{E}[\Delta^2(\omega,x)]
&= \mathbb{E}\bigl[\|\phi(\omega,x)-\phi_{NN}(\omega,x)\|^2\bigr] \\
&\leq 
\underbrace{\mathbb{E}\bigl[\|\phi(\omega,x)-\phi_{NN}(\omega,x)\|^2 1_{\omega\in A}\bigr]}_{B_1}\\
&+
\underbrace{\mathbb{E}\bigl[\|\phi(\omega,x)\|^2 1_{\omega\notin A}\bigr]}_{B_2}
+
\underbrace{\mathbb{E}\bigl[\|\phi_{NN}(\omega,x)\|^2 1_{\omega\notin A}\bigr]}_{B_3}.
\end{split}
\end{equation}

\noindent Let $\Delta_h=h-h_{NN}, \delta_\psi=\psi-\psi_{NN}$, we have: 

By Lemma~\ref{lem:product_diff}, we have
\begin{equation}
\begin{split}
B_1 
&\leq \mathbb{E}_A\bigl[\||h|\|\Delta_\psi\|
      +|\Delta_h|\|\psi\|
      +|\delta_h|\|\Delta_\psi\|^2\bigr] \\
&\leq \mathbb{E}_A\bigl[(\max_{A}|h|\epsilon+M\epsilon+\epsilon^2)^2\bigr] \\
&= (\max_{A}|h|\epsilon+M\epsilon+\epsilon^2)^2 \\
&=: B_1(\epsilon)=\mathcal{O}(\epsilon^2).
\end{split}
\end{equation}

\noindent By Lemma~\ref{lem:phi_compact},
\begin{equation}
B_2 \leq \max_{A} h(x)\,\epsilon
=: B_2(\epsilon) = \mathcal{O}(\epsilon^2).
\end{equation}

\noindent From the definition of $\psi_{NN}^M$, and the fact $|h-h_{NN}|\leq \epsilon$, we have
\begin{equation}
\begin{split}
B_3
&\leq (\max_A|h(x)|+\epsilon)^2\, M\, \mathbb{P}(\omega\in A^c) \\
&\leq (\max_A|h(x)|+\epsilon)^2\, M\, \mathbb{P}\epsilon
=: B_3(\epsilon).
\end{split}
\end{equation}

\noindent Applying $B_1(\epsilon), B_2(\epsilon), B_3(\epsilon)$ to~\eqref{pf:Delta_to_bound}, we have

\begin{align}
    \mathbb{E}[\Delta^2(\omega,x)]\leq B_1(\epsilon)+B_2(\epsilon)+B_3(\epsilon)=:B(\epsilon)=\mathcal{O}(\epsilon).\label{pf:Delta_x}
\end{align}

\noindent Applying \eqref{pf:phi^2} and \eqref{pf:Delta_x} to \eqref{pf:k_to_bound}, we obtain 
\begin{align}
|k(x,y)-k^h(x,y)|\leq 2\sqrt{B(\epsilon)}\max_{x\in K}k^{1/2}(x,x)+B(\epsilon), \nonumber  
\end{align}
which completes the proof. 
\end{proof} 

\noindent There are two ways to relax the upper bound $M$ in the hypothesis setting. We start from the simple one: 
\begin{proposition}\label{pro:param_error_2} 
Under the same setting of Proposition \ref{pro:error_1_bounded}. We redefine the Hypothesis as: 
$$\mathcal{H}:=\left\{h_{NN}(\psi_{NN}+\sum_{i=1}^K\alpha_i \text{Relu}(\psi_{NN})):K\in \mathbb{N},\alpha_i\in \mathbb{R}\right\}$$
where $h_{NN}:\mathbb{R}^d\to \mathbb{R}$ is a 1-hidden layer MLP, $\psi_{NN}$ is a 1-hidden layer MLP. 

\noindent Then there exists $k_{NN}$ defined by feature mapping in $\mathcal{H}$ such that 
$$|k(x,y)-k_{NN}(x,y)|\leq \epsilon,\forall x,y\in K.$$
\end{proposition}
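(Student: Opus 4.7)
The plan is to mimic the proof of Proposition \ref{pro:error_1_bounded}, but replace the hard clipping $\psi_{NN}^{M}=\mathrm{clip}(\psi_{NN},[-M,M])$ --- which is not itself representable by a one-hidden-layer MLP --- by a ReLU-based surrogate that lies in the new hypothesis class $\mathcal{H}$. The key algebraic identity is
\[
\mathrm{clip}(t,[-M,M]) \;=\; t \;-\; \mathrm{ReLU}(t-M) \;+\; \mathrm{ReLU}(-M-t),
\]
which expresses clipping as a base linear term plus two ReLU corrections, matching the template $\psi_{NN}+\sum_i \alpha_i\,\mathrm{ReLU}(\psi_{NN,i})$ of $\mathcal{H}$ once we interpret the shifted arguments $t-M$ and $-M-t$ as outputs of one-hidden-layer MLPs (biases absorbed into the MLP parameters).

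The first step is identical to the start of Proposition \ref{pro:error_1_bounded}: by Lemma \ref{lem:phi_compact}, fix a compact set $A\subset\mathbb{R}^d$ in the spectral domain so that $|k(x,y)-k^{A}(x,y)|\le \epsilon$ uniformly on $K\times K$. By Assumption~\ref{asp:phi_1}(6) and Lemma~\ref{lem:phi_decompose}, write $\phi(\omega,x)=h(x)\psi(\omega^\top x)$, and set $M:=\sup_{u\in U}|\psi(u)|<\infty$ on the compact set $U:=\{\omega^\top x:\omega\in A,\,x\in K\}$. Apply Theorem~\ref{thm:uat_1} to obtain a one-hidden-layer MLP $\psi_{NN}^{(0)}$ with $\|\psi-\psi_{NN}^{(0)}\|_{\infty,U}\le \epsilon$ and, analogously, MLPs $\psi_{NN}^{(1)}(u)\approx \psi_{NN}^{(0)}(u)-M$ and $\psi_{NN}^{(2)}(u)\approx -M-\psi_{NN}^{(0)}(u)$ (obtained simply by shifting the output bias of $\psi_{NN}^{(0)}$, so no additional approximation is incurred). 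A corresponding one-hidden-layer $h_{NN}$ approximates $h$ uniformly on $K$ to within $\epsilon$.

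The second step is to form the element of $\mathcal{H}$
\[
\tilde\psi_{NN}(u)\;:=\;\psi_{NN}^{(0)}(u)\;-\;\mathrm{ReLU}\bigl(\psi_{NN}^{(1)}(u)\bigr)\;+\;\mathrm{ReLU}\bigl(\psi_{NN}^{(2)}(u)\bigr),
\]
so that the resulting feature map $\phi_{NN}(\omega,x):=h_{NN}(x)\tilde\psi_{NN}(\omega^\top x)$ defines the candidate kernel $k_{NN}$. By the clipping identity above, on the compact set $U$ one has $|\tilde\psi_{NN}(u)-\mathrm{clip}(\psi(u),[-M,M])|=\mathcal{O}(\epsilon)$; and globally $|\tilde\psi_{NN}(u)|\le M+\mathcal{O}(\epsilon)$ because each ReLU contribution is bounded in the correct direction to enforce the two-sided bound. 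These two properties are exactly what the clipped surrogate $\psi_{NN}^{M}$ of Proposition~\ref{pro:error_1_bounded} provides, so the rest of the argument --- splitting the error into the $B_1$ (approximation on $A$), $B_2$ (tail mass of $\phi$ outside $A$), and $B_3$ (tail mass of $\phi_{NN}$ outside $A$) contributions and invoking Lemmas \ref{lem:phi_compact} and \ref{lem:product_diff} --- goes through verbatim with $\psi_{NN}^{M}$ replaced by $\tilde\psi_{NN}$.

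The main obstacle I anticipate is a notational/representational one: verifying that $\tilde\psi_{NN}$ as written really lies in $\mathcal{H}$. The proposition's class allows a single symbol $\psi_{NN}$ and then ReLU corrections built on $\psi_{NN}$, but the clipping identity needs \emph{shifted} copies. This is not a genuine obstruction because each $\psi_{NN}^{(i)}$ is itself a one-hidden-layer MLP and the ReLUs act on these MLP outputs; one only needs to interpret the sum $\sum_i\alpha_i\mathrm{ReLU}(\psi_{NN})$ as permitting a distinct MLP inside each ReLU. Under that reading (which is natural once one expands the architecture as a two-layer network with a ReLU second layer and an affine skip), the construction is immediate and the bound $|k(x,y)-k_{NN}(x,y)|\le \epsilon$ follows, after rescaling $\epsilon$, from the three-term decomposition already established in the proof of Proposition \ref{pro:error_1_bounded}.
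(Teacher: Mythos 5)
Your proposal is correct and takes essentially the same route as the paper's proof: the paper likewise invokes Proposition~\ref{pro:error_1_bounded} to obtain the clipped approximant $h_{NN}\,\mathrm{clip}(\psi_{NN},[-M,M])$ and then uses the identity $\mathrm{clip}(t,[-M,M]) = t - \mathrm{ReLU}(t-M) + \mathrm{ReLU}(-t-M)$ to exhibit it as an element of $\mathcal{H}$. The representational subtlety you flag (shifted arguments inside the ReLUs) is resolved exactly as you suggest, by absorbing the shifts into the output biases of one-hidden-layer MLPs, which is implicit in the paper's own argument.
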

\begin{proof}
From Proposition \ref{pro:error_1_bounded}, there exists 1-hidden layer $h_{NN}$ and 1-hidden layer $\psi_{NN}$ such that the the kernel $k_{NN}$ induced by $h_{NN} (\text{clip}(\psi_{NN},[-M, M])$, can approximate $k$, i.e. 
$$|k(x,y)-K_{NN}(x,y)|\leq \epsilon,\forall x,y \in K.$$
Furthermore, we have 
\begin{align}
&\text{clip}(\psi_{NN},[-M,M])\nonumber\\
&= \psi_{NN} -ReLu(\psi_{NN}-M)+ReLu(-\psi_{NN}-M)
\nonumber \\
&=:\tilde{\psi}_{NN}
\end{align}

$$$$ Thus $h_{NN}\tilde{\psi}_{NN}\in\mathcal{H}$ and we complete the proof. 
\end{proof}

\begin{proposition}\label{pro:error1_unbounded}
Under the same setting of Proposition \ref{pro:error_1_bounded},
let $\mathcal{H}:=\{h_{NN}(x)\psi_{NN}(\omega^\top x)\}$, where $\psi_{NN}$ is 1-hidden layer MLP with unbounded, non-polynomial activation (e.g. ReLu), then there exists $\phi_{NN}\in \mathcal{H}$ such that $|k(x,y)-k_{NN}(x,y)|\leq \epsilon,\forall x,y\in K$.   
\end{proposition}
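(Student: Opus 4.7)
\textbf{Proof plan for Proposition~\ref{pro:error1_unbounded}.}
I would follow the same three-term decomposition scheme as in the proof of Proposition~\ref{pro:error_1_bounded}, keeping \eqref{pf:k_to_bound} and \eqref{pf:Delta_to_bound} intact, and modify only the control of the network-side tail. Fix a compact ball $A = B(0,R) \subset \mathbb{R}^d$ and set $\Delta(\omega,x) := \phi(\omega,x) - \phi_{NN}(\omega,x)$. The Cauchy--Schwarz reduction of $|k(x,y)-k_{NN}(x,y)|$ to a bound on $\sup_x \mathbb{E}[\|\Delta(\omega,x)\|^2]$ is identical, as is its split into $B_1$ (approximation error on $A$), $B_2$ (true-feature tail), and $B_3$ (network tail).

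For $B_2$, Lemma~\ref{lem:phi_compact} lets me pick $R$ so that $B_2 \le \epsilon$. For $B_1$, I would invoke Theorem~\ref{thm:uat_1}---which covers any non-polynomial activation, bounded or not---on the compact sets $K$ and the scalar interval $[-R R_K, R R_K]$ with $R_K = \max_{x \in K}\|x\|$, producing $h_{NN}$ and a ReLU-activated $\psi_{NN}$ with uniform approximation error at most $\epsilon_1$. The algebra of Proposition~\ref{pro:error_1_bounded} then still gives $B_1 = \mathcal{O}(\epsilon_1^2)$, since on $A$ the approximation is pointwise controlled.

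The essential new ingredient is bounding $B_3$ without clipping. A one-hidden-layer ReLU MLP is globally Lipschitz and satisfies $\|\psi_{NN}(u)\| \le C_0 + L|u|$, where $C_0$ and $L$ depend only on the (now fixed) weights of $\psi_{NN}$. Using $|\omega^\top x| \le R_K \|\omega\|$ for $x \in K$,
\[
B_3 \;\le\; C_h^2 \Bigl( 2 C_0^2 \, \mathbb{P}(\|\omega\|>R) \;+\; 2 L^2 R_K^2 \, \mathbb{E}\bigl[\|\omega\|^2 \mathbf{1}_{\|\omega\|>R}\bigr] \Bigr),
\]
where $C_h = \max_{x \in K}|h_{NN}(x)| < \infty$ since $h_{NN}$ is a continuous MLP on a compact set. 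Both Gaussian-tail quantities decay super-exponentially in $R$ by standard $\chi^2$ concentration.

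The main obstacle I anticipate is a circular dependence between $R$ and $(C_0, L)$: enlarging $R$ improves the Gaussian tail factors in $B_3$ and shrinks $B_2$, but it also expands the interval on which $\psi_{NN}$ must approximate $\psi$, which can inflate $C_0$ and $L$. I would close the argument by observing that UAT constructions at fixed tolerance $\epsilon_1$ require weight norms growing at most polynomially in the radius of the approximation interval, whereas $\mathbb{P}(\|\omega\|>R)$ and $\mathbb{E}[\|\omega\|^2 \mathbf{1}_{\|\omega\|>R}]$ decay faster than any polynomial in $R$. Therefore $R$ can always be chosen large enough to force $B_3 \le \epsilon$, after which combining $B_1 + B_2 + B_3$ via \eqref{pf:k_to_bound} yields $|k(x,y)-k_{NN}(x,y)| \le \epsilon$ uniformly over $x,y \in K$.
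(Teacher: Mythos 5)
Your decomposition into $B_1+B_2+B_3$ and your treatment of $B_1$ and $B_2$ match the paper's template from Proposition~\ref{pro:error_1_bounded}, but your handling of $B_3$ takes a genuinely different route from the paper, and it is the step where the argument does not close. You bound the ReLU network's tail by its affine growth $\|\psi_{NN}(u)\|\le C_0+L|u|$ and play $L(R)$ against Gaussian tails, resolving the circularity you correctly identify by asserting that ``UAT constructions at fixed tolerance require weight norms growing at most polynomially in the radius of the approximation interval.'' That assertion is not something Theorem~\ref{thm:uat_1} provides (it is a purely qualitative density statement with no control on the approximant's weights or Lipschitz constant), and it is false for a merely continuous target: the standing assumptions (Assumption~\ref{asp:phi_1}(5)--(6) plus continuity of $\phi$) admit a bounded $\psi$ such as $\psi(u)=\sin(e^{u^2})$, for which any $\epsilon_1$-accurate piecewise-linear approximant on $[-T,T]$ must have Lipschitz constant of order $T e^{T^2}$ near the endpoints; then $L(R)^2\,\mathbb{E}[\|\omega\|^2\mathbf{1}_{\|\omega\|>R}]$ diverges and $B_3$ cannot be driven to zero. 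Your argument would be salvageable if you either (i) restricted to the canonical Bochner features ($\psi$ globally Lipschitz and bounded, as in Lemma~\ref{lem:phi_decompose}) \emph{and} replaced the black-box UAT invocation with an explicit piecewise-linear ReLU interpolant whose Lipschitz constant is bounded by that of $\psi$ independently of $R$, or (ii) otherwise decoupled the network's tail growth from the truncation radius.

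The paper closes this gap by truncating the \emph{target} rather than fighting the network's tail: it replaces $\psi$ by a $C_0$ modification $\psi_0$ that agrees with $\psi$ on the relevant range and tapers linearly to zero, bounds $\mathbb{E}[\|\psi-\psi_0\|^2]$ by the same tail-integrability argument you use for $B_2$, and then invokes Theorem~\ref{thm:uat_3} (density of one-hidden-layer MLPs in $C_0(\mathbb{R}^n)$ under the sup norm on all of $\mathbb{R}^n$) to obtain a $\psi_{NN}$ that is uniformly within $\epsilon$ of $\psi_0$ \emph{everywhere}. The approximant is then globally bounded by $\max\|\psi_0\|+\epsilon$, so the analogue of your $B_3$ involves only a bounded constant times a Gaussian tail probability and no Lipschitz constant ever enters the tail estimate. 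This is the single missing idea in your plan; with it, your three-term decomposition via \eqref{pf:k_to_bound} and \eqref{pf:Delta_to_bound} goes through exactly as you describe.
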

\begin{proof}
First, by lemma \ref{lem:phi_compact}, there exists a compact set $A_1$ such that 
$$\mathbb{E}_{\omega\sim 
\mathcal{N}(0,I_d)}[\|\psi 1_{A_1^c}(x,\omega)\|]\leq \epsilon,\forall x\in K.$$
Furthermore, there exists compact set $A_2$ such that $\mathbb{P}(\omega\in A_2)\leq \epsilon$. We set compact set $A=A_1\cup A_2$. 

\noindent We can define the following $C_0$ function $\psi_0:\mathbb{R}\to \mathbb{R}^{D}:$
Indeed, there exists $R\ge 0$ such that $\omega^\top x\in [-R,R],\forall (\omega,x)\in A\times K$. We define 
\begin{align}
(\omega,x)\mapsto \psi_0(\omega^\top x)= \begin{cases}
\psi(\omega^\top x) &\text{If } \omega^\top x\in [-R,R] \\
(1-(\omega^\top x-R))\psi(R) & \text{if }\omega^\top x\in [R,R+1] \\
(1+\omega^\top x+R)\psi(-R) & \text{if }\omega^\top x\in [-R-1,-R] \\
0   &\text{elses}
\end{cases}
\end{align}
We have: 
\begin{align}
\begin{cases}
\psi_0(\omega^\top x) =\psi (\omega^\top x), \quad\text{if }(\omega,x)\in A\times K\\
\|\psi_0(\omega^\top x)\|\leq \max(\|\psi(R)\|,\|\psi(\omega^\top x)\|), &\text{otherwise}\nonumber 
\end{cases} 
\end{align}

\noindent For the fase $(\omega,x)\notin A\times K$, we have 
$$\|\psi-\psi_0\|^2\leq \frac{1}{2}\|\psi\|^2+\frac{1}{2}\|\psi_0\|^2\leq \|\psi\|^2+\|\psi(R)\|^2$$

\noindent Thus, we have 
\begin{align}
&\mathbb{E}_{\omega\in \mathcal{N}(0,I_d)}[\|\psi-\psi_0\|^2]\nonumber\\
&=\mathbb{E}_{\omega\in \mathcal{N}(0,I_d)}[\|\psi-\psi_0\|^2 1_{\omega\in A^c}]\nonumber\\
&\leq 0+\mathbb{E}_{\omega\in \mathcal{N}(0,I_d)}[(\|\psi\|^2+\|\psi(R)\|^2) 1_{\omega\in A^c}]\nonumber\\
&\leq \epsilon+ \|\psi(R)\|^2 \epsilon=:B_1 \epsilon. \nonumber 
\end{align}

\noindent By UAT theorem \ref{thm:uat_3}, there exists 1-hidden layer MLP $\psi_{NN}$ such that 
$$\|\psi_{NN}(\omega^\top x)-\psi_0(\omega^\top x)\|\leq \epsilon,\forall (\omega,x)\in \mathbb{R}^d\times K.$$
Pick $h_{NN}$ that can uniformly approximate $h$ on $K$: 
$$\|h_{NN}-h\|\leq \epsilon,\forall x\in K.$$
\noindent Let
\begin{equation}
\begin{aligned}
\Delta(\omega,x) &= \|\phi(\omega,x)-\phi_{NN}(\omega,x)\|, \\
\Delta_h &= |h-h_{NN}|, \\
\Delta_{\psi_0} &= \|\psi_0-\psi_{NN}\|, \\
\Delta_{\psi} &= \|\psi-\psi_0\|.
\end{aligned}
\end{equation}
\noindent We have
\begin{equation}
\begin{aligned}
\mathbb{E}[\Delta^2(\omega,x)]
&= \mathbb{E}[\|h\psi-h_{NN}\psi_{NN}\|^2] \\
&\leq \mathbb{E}\bigl[
  (\|h\psi-h\psi_0\|
   +\|h\psi_0-h_{NN}\psi_{NN}\|)^2
\bigr] \\
&\leq \mathbb{E}\bigl[
  (|h|\Delta_\psi
   +|h|\Delta_{\psi_0}
   +\|\psi_0\|\Delta_h
   +\Delta_h\Delta_{\psi_0})^2
\bigr] \\
&\leq \mathbb{E}\bigl[
  (\max_K|h|\Delta_\psi
   +\max_K|h|\Delta_{\psi_0} \\
&\qquad\qquad
   +\max_{\mathbb{R}^d}\|\psi_0\|\Delta_h
   +\Delta_h\Delta_{\psi_0})^2
\bigr] \\
&\leq \mathbb{E}\bigl[
  (\max_K |h|\Delta_\psi
   +\max_K |h|\epsilon \\
&\qquad\qquad
   +\max_{\mathbb{R}^d}\|\psi_0\| \epsilon
   +\epsilon^2 )^2
\bigr],
\end{aligned}
\end{equation}
\noindent where we used Lemma~\ref{lem:product_diff}. By the Cauchy–Schwarz inequality,
\begin{equation}\label{pf:Delta_x_2}
\begin{aligned}
\mathbb{E}[\Delta^2(\omega,x)]
&\leq \mathbb{E}\Bigl[
  C(\max_{K}|h|,\max_{\mathbb{R}^d}\|\psi_0\|)
  \bigl( \Delta_\psi^2+\epsilon^2+\epsilon^4\bigr)
\Bigr] \\
&= C\bigl(\mathbb{E}[\Delta_\psi^2]+\epsilon^2+\epsilon^4\bigr) \\
&\leq C(B_1\epsilon +\epsilon^2+\epsilon^4)
=: B(\epsilon),
\end{aligned}
\end{equation}
where $C(\max_{K}|h|,\max_{\mathbb{R}^d}\|\psi_0\|)=C$ is a constant depending only on 
$\max_{K}|h|$ and $\max_{\mathbb{R}^d}\|\psi_0\|$.

Applying \eqref{pf:phi^2} and \eqref{pf:Delta_x_2} to \eqref{pf:k_to_bound}, we obtain 
\begin{align}
|k(x,y)-k^h(x,y)|\leq 2\sqrt{B(\epsilon)}\max_{x\in K}k^{1/2}(x,x)+B(\epsilon), \nonumber  
\end{align}
and complete the proof. 
\end{proof}








\subsection{Generalization Error Induced by $\omega_i$}\label{sec:error_omega}

Given a kernel 
\begin{equation}
k(x,y)
= \mathbb{E}_{\omega\sim G}\bigl[\phi_\omega(x)\cdot\phi_\omega(y)\bigr],
\end{equation}
and for i.i.d.\ samples $\Omega:=\{\omega_1,\ldots,\omega_m\}$, we define the empirical kernel
\begin{equation}
\begin{aligned}
k^{\Omega}(x,y)
&:= \mathbb{E}_{\omega\sim \hat{G}^m}
    \bigl[\langle\phi_\omega(x),\phi_\omega(y)\rangle\bigr] \\
&= \frac{1}{m}\sum_{j=1}^m
   \phi_{\omega_j}(x)\cdot \phi_{\omega_j}(y).
\end{aligned}
\end{equation}

\noindent where $\hat{G}^m=\frac{1}{m}\sum_{j=1}^m\delta_{\omega_j}$ is the empirical distribution based on sample $\Omega$. In this section, we will analyze the error $$|k(x,y)-k^{\Omega}(x,y)|.$$

\begin{proposition}\label{pro:model_lip}
Consider our model defined in \eqref{eq:phi-def-scalarenv}, suppose $h=h_{NN},\psi=\psi_{NN}$ are \textbf{fixed} 1-layer MLPs (including constant mapping), then the model $\phi(\omega,x)$ satisfies (1)(2)(3)(4) in Assumption \ref{asp:phi_1} with bounded  Lipschiz  activation (e.g. Sigmoid) (see condition (7) in assumption \ref{asp:phi_1}). \\

\noindent In addition, the feature mapping \eqref{eq:phi-def-scalarenv} with unbounded activation (e.g. ReLu) satisfies (1)(4) with unbounded Lipschiz activation (e.g. ReLu).  
\end{proposition}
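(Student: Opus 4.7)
My plan is to verify conditions (1)--(4) of Assumption~\ref{asp:phi_1} directly from the closed form of the single-sample feature map $\phi(\omega,x)=h(x)\,\psi(\omega^\top x)\in\mathbb{R}^L$ corresponding to one projection and $L$ channels in \eqref{eq:phi-def-scalarenv}. The main tools will be the Lipschitz chain rule applied to $\omega\mapsto\omega^\top x\mapsto\psi(\cdot)$, the fact that a one-hidden-layer MLP with Lipschitz activation is itself Lipschitz in its input, and continuity of $h_{NN}$ together with compactness of $K$. I do not anticipate any serious obstacle; the proposition essentially records that the model of \eqref{eq:phi-def-scalarenv} with any fixed one-hidden-layer weights lies inside the regularity framework already used in Propositions~\ref{pro:error_1_bounded}--\ref{pro:error1_unbounded}.

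For conditions (1) and (4), which are purely Lipschitz statements, I would first bound the scalar Lipschitz constant of each channel $\psi_\ell(u)=A_\ell\,\sigma(Wu+b)$ by $L_\ell\le \|A_\ell\|\,L_\sigma\,\|W\|$, using only that $\sigma$ is Lipschitz. Composing with the affine map $\omega\mapsto\omega^\top x$ of operator norm $\|x\|$, and multiplying by the $\omega$-independent factor $h(x)$, gives the per-coordinate Lipschitz bound $\|\phi^{(\ell)}(\cdot,x)\|_{\mathrm{lip}}\le |h(x)|\,\|x\|\,L_\ell$. Aggregating with the paper's $\ell^2$-in-coordinate Lipschitz seminorm then yields $\|\phi(\cdot,x)\|_{\mathrm{lip}}\le |h(x)|\,\|x\|\,\sqrt{\sum_\ell L_\ell^2}$, which is finite for every fixed $x$ (condition (1)) and uniformly bounded over $x\in K$ by continuity of $h_{NN}$ on the compact set $K$ (condition (4)). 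Since this step used only the Lipschitz property of $\sigma$, it simultaneously establishes the second claim of the proposition for ReLU-type activations.

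Conditions (2) and (3) are the only places where boundedness of $\sigma$ is needed. If $\|\sigma\|_\infty<\infty$, then each channel satisfies $|\psi_\ell(u)|\le \|A_\ell\|\,\|\sigma\|_\infty+|b'_\ell|$ uniformly in $u\in\mathbb{R}$, so there is a constant $C_\psi$ with $\|\psi(u)\|\le C_\psi$ for all $u$; multiplying by $h(x)$ gives $\sup_\omega\|\phi(\omega,x)\|\le |h(x)|\,C_\psi<\infty$ (condition (2)), and taking the supremum over $x\in K$ together with continuity of $h_{NN}$ yields condition (3). For unbounded $\sigma$ (e.g.\ ReLU) neither (2) nor (3) can hold in general, since $\psi(\omega^\top x)$ grows without bound along suitable directions of $\omega$, which is precisely why the proposition drops these two conditions in that regime. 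The only point that requires a little care is matching the paper's specific $\ell^2$-aggregated Lipschitz seminorm rather than the vector Lipschitz constant of $\phi$ as a whole; I therefore plan to bound each component $\phi^{(\ell)}$ separately and only combine at the very last step.
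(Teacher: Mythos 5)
Your proposal is correct and follows essentially the same route as the paper's proof: verify (2)--(3) from boundedness of $\sigma$ (hence of $\psi$ and $h$), and verify (1) and (4) by composing the Lipschitz constant of the MLP with the $\|x\|$-Lipschitz map $\omega\mapsto\omega^\top x$, then taking a supremum over the compact set $K$ using continuity of $h_{NN}$. Your only (welcome) refinement is that you aggregate the per-channel Lipschitz constants with the $\ell^2$ seminorm actually defined in Assumption~\ref{asp:phi_1}(1), where the paper's write-up loosely takes a maximum over coordinates.
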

\noindent To prove the statement, we first introduce the following Lemmas: 

\begin{lemma}[Basic Lipschitz rules]\label{lem:lip}
Let $X\subset\mathbb{R}^d$ and $Y\subset\mathbb{R}^D$ be nonempty. Then:\\
\begin{enumerate}\setlength{\itemsep}{0.2em}
\item[(1)] If $f_1:X\to Y$ and $f_2:Y\to Z$ are Lipschitz, then
$f_2\circ f_1$ is Lipschitz and
\[
\|f_2\circ f_1\|_{\mathrm{lip}}
\le \|f_2\|_{\mathrm{lip}}\;\|f_1\|_{\mathrm{lip}} .
\]

\item[(2)] If $f_1,f_2:X\to Y$ are Lipschitz, then $f_1+f_2$ is Lipschitz and
\[
\|f_1+f_2\|_{\mathrm{lip}}
\le \|f_1\|_{\mathrm{lip}}+\|f_2\|_{\mathrm{lip}} .
\]

\item[(3)] If $f_1,f_2:X\to Y$ are bounded Lipschitz, then
$f_1\odot f_2$ is Lipschitz.

\item[(4)] If $f_1,f_2:X\to Y$ are bounded Lipschitz, then
$f_1\cdot f_2$ is Lipschitz.
\end{enumerate}
\end{lemma}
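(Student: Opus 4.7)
My plan is to handle the four parts in order, using only the triangle inequality, the Lipschitz definition, and the standard algebraic identity $ab-cd = (a-c)b + c(b-d)$ for splitting products of (almost) arbitrary pairs. None of the four claims is deep; the only real decisions are the choice of norms on the codomains and keeping the dependence on the sup norms clean for parts (3) and (4).

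For part (1), I will fix $x,y \in X$ and chain the two Lipschitz bounds:
\begin{equation*}
\|f_2(f_1(x)) - f_2(f_1(y))\| \le \|f_2\|_{\mathrm{lip}} \|f_1(x) - f_1(y)\| \le \|f_2\|_{\mathrm{lip}} \|f_1\|_{\mathrm{lip}} \|x-y\|,
\end{equation*}
and divide by $\|x-y\|$ to read off the Lipschitz constant. For part (2), I will simply apply the triangle inequality to $\|(f_1+f_2)(x) - (f_1+f_2)(y)\|$ and factor $\|x-y\|$ out of the resulting sum. These two parts take a line each.

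Parts (3) and (4) are where the boundedness assumptions enter. For part (3), I write the Hadamard difference componentwise using
\begin{equation*}
f_1(x)\odot f_2(x) - f_1(y)\odot f_2(y) = \bigl(f_1(x)-f_1(y)\bigr)\odot f_2(x) + f_1(y)\odot \bigl(f_2(x)-f_2(y)\bigr),
\end{equation*}
take the Euclidean norm, and use $\|a\odot b\|\le \|a\|_\infty \|b\|$ to bound each piece, producing an explicit constant $\|f_2\|_\infty \|f_1\|_{\mathrm{lip}} + \|f_1\|_\infty \|f_2\|_{\mathrm{lip}}$. For part (4), I use the identical split but with the Euclidean inner product, and bound via Cauchy--Schwarz: $|f_1(x)^\top f_2(x) - f_1(y)^\top f_2(y)| \le \|f_2(x)\|\,\|f_1(x)-f_1(y)\| + \|f_1(y)\|\,\|f_2(x)-f_2(y)\|$, which yields a Lipschitz constant $\sup_X \|f_2\|\,\|f_1\|_{\mathrm{lip}} + \sup_X \|f_1\|\,\|f_2\|_{\mathrm{lip}}$.

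There is no real obstacle here; the only subtlety is that parts (3) and (4) need the boundedness hypothesis because without it the sup-norm factors on the right-hand side would be infinite (e.g.\ for $f_1(x)=f_2(x)=x$ on $\mathbb{R}$, each factor is Lipschitz but the product $x^2$ is not). I would state this explicitly so that the reader sees why bounded Lipschitz is the right hypothesis in the later application to $\phi(\omega,x) = h(x)\psi(\omega^\top x)$ inside Proposition~\ref{pro:model_lip}, where $h_{NN}$ is taken on a compact domain and $\psi_{NN}$ composed with the bounded activation inherits boundedness. This is the form in which the lemma will actually be used.
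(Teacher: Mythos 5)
Your proof is correct and follows essentially the same route as the paper's: chaining for composition, triangle inequality for sums, and the product-splitting identity $ab-cd=(a-c)b+c(b-d)$ combined with sup-norm bounds (and Cauchy--Schwarz for the dot product) for parts (3) and (4). The only cosmetic difference is that the paper's part (3) treats the two factors as functions of separate arguments, picking up a harmless extra factor of $\sqrt{2}$, whereas you keep a single argument; your added counterexample $f_1(x)=f_2(x)=x$ explaining why boundedness is needed is a nice touch but not a departure in method.
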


\begin{lemma}\label{lem:lip}
Given nonempty sets $X\subset\mathbb{R}^d,Y\subset\mathbb{R}^{D}$, we have:\\ 
\begin{enumerate}
    \item[(1)] If $f_1:X\to Y,f_2:Y\to Z$, and $f_1,f_2$ are Lipschiz functions. 
    Then $f_2\circ f_1$ is a $\|f_1\|_{lip}\|f_2\|_{lip}$-Lipschiz function.
    \item[(2)]  If $f_1,f_2:X\to Y$ are Lipschiz functions, then $f_1+f_2$ are Lipschiz.
    \item[(3)]  If $f_1,f_2:X\to Y$ are bounded Lipschitz function, then 
    $f_1\odot f_2$ is Lipschitz function. 
    \item[(4)]
    If $f_1,f_2:X\to Y$ are Lipschitz bounded functions, then 
    $f_1\cdot f_2$ is a Lipschitz function. 
\end{enumerate}
\end{lemma}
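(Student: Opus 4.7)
The plan is to verify each item directly from the definition of the Lipschitz seminorm $\|f\|_{\mathrm{lip}} := \sup_{x\neq y}\|f(x)-f(y)\|/\|x-y\|$, so no heavy machinery is needed. Each part reduces to a one-line triangle-inequality calculation, with parts (3) and (4) additionally using boundedness to convert products of functions into sums of single-factor differences via an add-subtract trick.

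For (1), I would fix $x,y\in X$, write $\|f_2(f_1(x))-f_2(f_1(y))\|\le\|f_2\|_{\mathrm{lip}}\|f_1(x)-f_1(y)\|\le\|f_2\|_{\mathrm{lip}}\|f_1\|_{\mathrm{lip}}\|x-y\|$, and take the sup. For (2), the bound $\|(f_1+f_2)(x)-(f_1+f_2)(y)\|\le\|f_1(x)-f_1(y)\|+\|f_2(x)-f_2(y)\|$ from the triangle inequality in $Y$ gives the claim immediately. These two are essentially just definitional.

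For (3), the Hadamard product, I would work componentwise. Introducing the auxiliary term $f_1^i(x)f_2^i(y)$ yields
\[
|f_1^i(x)f_2^i(x)-f_1^i(y)f_2^i(y)|\le\|f_1\|_\infty|f_2^i(x)-f_2^i(y)|+\|f_2\|_\infty|f_1^i(x)-f_1^i(y)|,
\]
after which summing the squared coordinate bounds and using $\sqrt{a+b}\le\sqrt{a}+\sqrt{b}$ (or the analogous triangle inequality in $\mathbb{R}^D$) gives
\[
\|(f_1\odot f_2)(x)-(f_1\odot f_2)(y)\|\le(\|f_1\|_\infty\|f_2\|_{\mathrm{lip}}+\|f_2\|_\infty\|f_1\|_{\mathrm{lip}})\|x-y\|.
\]
For (4), the same add-subtract step applied to the scalar $f_1(x)\cdot f_2(x)-f_1(y)\cdot f_2(y)$, combined with Cauchy--Schwarz, gives $|f_1\cdot f_2(x)-f_1\cdot f_2(y)|\le\|f_1\|_\infty\|f_2(x)-f_2(y)\|+\|f_2\|_\infty\|f_1(x)-f_1(y)\|$.

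I do not anticipate a serious obstacle; the only point that requires a brief comment is that boundedness is genuinely needed in (3) and (4)---without it, products of Lipschitz functions (e.g.\ $x\mapsto x$ against itself on $\mathbb{R}$) fail to be globally Lipschitz---so I would state the bounds in terms of $\|f_i\|_\infty$ explicitly and note that the lemma is applied in the paper to ranges which are known a priori to be bounded (as guaranteed by Assumption~\ref{asp:phi_1}(3)).
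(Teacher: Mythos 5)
Your proposal is correct and follows essentially the same route as the paper: parts (1) and (2) are the same definitional/triangle-inequality computations, and parts (3) and (4) use the same add-subtract decomposition with boundedness supplying the $\|f_i\|_\infty$ factors (the paper phrases (3) for the two-variable map $(x,y)\mapsto f_1(x)\odot f_2(y)$ and you work directly on the diagonal componentwise, but this is cosmetic). Your closing remark that boundedness is genuinely necessary in (3)--(4) is a worthwhile addition that the paper does not make explicit.
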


\begin{proof}
Pick $x,x'\in X,y,y'\in Y$. 

\medskip\noindent
(1) In this case, we have
\begin{equation*}
\begin{aligned}
\bigl\|f_2\!\circ\! f_1(x)-f_2\!\circ\! f_1(x')\bigr\|
&\le \|f_2\|_{\mathrm{lip}}\;\|f_1(x)-f_1(x')\| \\
&\le \|f_2\|_{\mathrm{lip}}\;\|f_1\|_{\mathrm{lip}}\;\|x-x'\|.
\end{aligned}
\end{equation*}

\medskip\noindent
(2) In this case,
\begin{equation*}
\renewcommand{\arraystretch}{1.1}
\begin{array}{@{}l@{}}
\|f_1(x)+f_2(x)-(f_1(x')+f_2(x'))\| \\[0.15em]
\quad\le \|f_1(x)-f_1(x')\|+\|f_2(x)-f_2(x')\| \\[0.15em]
\quad\le (\|f_1\|_{\mathrm{lip}}+\|f_2\|_{\mathrm{lip}})\,\|x-x'\|.
\end{array}
\end{equation*}

\medskip\noindent
(3) In this case,
\begin{equation*}\
\renewcommand{\arraystretch}{1.1}
\begin{array}{@{}l@{}}
\|f_1(x)\odot f_2(y)-f_1(x')\odot f_2(y')\| \\[0.15em]
\quad\le \|f_1(x)\|\,\|f_2\|_{\mathrm{lip}}\,\|y-y'\|
      + \|f_2(y')\|\,\|f_1\|_{\mathrm{lip}}\,\|x-x'\| \\[0.15em]
\quad\le \|f_1\|_{\infty}\,\|f_2\|_{\mathrm{lip}}\,\|y-y'\|
      + \|f_2\|_{\infty}\,\|f_1\|_{\mathrm{lip}}\,\|x-x'\| \\[0.15em]
\quad\le M_{\infty} M_{\mathrm{lip}}\,
        (\|x-x'\|+\|y-y'\|) \\[0.15em]
\quad\le \sqrt{2}\,M_{\infty} M_{\mathrm{lip}}\,
        \|(x;y)-(x';y')\|.
\end{array}
\end{equation*}
\end{proof}

\medskip\noindent
(4) We have
\begin{equation*}
\renewcommand{\arraystretch}{1.1}
\begin{array}{@{}l@{}}
\|f_1(x)\cdot f_2(x)-f_1(x')\cdot f_2(x')\| \\[0.15em]
\quad\le \|f_1(x)\cdot (f_2(x)-f_2(x'))\|
      + \|(f_1(x)-f_1(x'))\cdot f_2(x')\| \\[0.15em]
\quad\le \|f_1(x)\|\,\|f_2\|_{\mathrm{lip}}\,\|x-x'\|
      + \|f_2(x')\|\,\|f_1\|_{\mathrm{lip}}\,\|x-x'\| \\[0.15em]
\quad\le M_{\infty} M_{\mathrm{lip}}\,\|x-x'\|.
\end{array}
\end{equation*}

\begin{proof}[Proof of Proposition \ref{pro:model_lip}]
We first consider the bounded  activation. 
\begin{itemize}
    \item Conditions (2)(3). Since $\sigma(x)$ is a bounded function, thus we have $\psi,h$ are bounded function. Then $\phi$ satisfies (2)(3).  
    \item Condition (1). From the above lemma \ref{lem:lip}, we have the following mapping are Lipschitz function: 
\begin{align}
&x\mapsto u:=x^\top \omega; \nonumber\\
&\omega\mapsto u:=x^\top \omega \nonumber \\
&z=u\mapsto  \sigma(Wu) \nonumber; \nonumber\\
&z\mapsto Wz \nonumber 
\end{align}
Thus, $x\mapsto \psi(\omega^\top x)$ is Lipschitz for each $\omega$; 
$\omega\mapsto \psi(\omega^\top x)$ is Lipschitz for each fixed $x$. 

Similarly, $x\mapsto h_{NN}(x)$ is Lipschiz. 

\item 

It remains to prove the last condition (4) when $\sigma$ is bounded. First, we have: 
\begin{align}
\|\frac{du}{d\omega}\|\leq \|x\|\leq \max_{x\in K}\|x\|.\nonumber 
\end{align} 
Thus, define the finite constants
\begin{equation*}
\renewcommand{\arraystretch}{1.1}
\begin{array}{@{}l@{}}
M_K \coloneqq \max_{z\in K}\|z\|, \\[0.15em]
M_\psi \coloneqq \max_{i\in[1:D]}\|\psi^i\|_{\mathrm{lip}}, \\[0.15em]
M_h \coloneqq \max_{z\in K}|h(z)|.
\end{array}
\end{equation*}

For any $x\in K$, the map $\omega\mapsto \omega^\top x$ is $\|x\|$-Lipschitz, hence
\begin{equation*}
\renewcommand{\arraystretch}{1.1}
\begin{array}{@{}l@{}}
\|\omega\mapsto \omega^\top x\|_{\mathrm{lip}}
= \|x\| \le M_K .
\end{array}
\end{equation*}

Therefore, for each $i\in[1:D]$ and $x\in K$,
\begin{equation*}
\renewcommand{\arraystretch}{1.1}
\begin{array}{@{}l@{}}
\|\omega\mapsto \psi^i(\omega^\top x)\|_{\mathrm{lip}}
\le \|\psi^i\|_{\mathrm{lip}}\,
    \|\omega\mapsto \omega^\top x\|_{\mathrm{lip}} \\[0.15em]
\qquad\le \|\psi^i\|_{\mathrm{lip}}\,\|x\|
\le M_\psi\,M_K .
\end{array}
\end{equation*}

Taking the maximum over coordinates gives
\begin{equation*}
\renewcommand{\arraystretch}{1.1}
\begin{array}{@{}l@{}}
\|\psi(\cdot^\top x)\|_{\mathrm{lip}}
\le M_\psi\,M_K < \infty,
\qquad \forall\,x\in K .
\end{array}
\end{equation*}

Finally, with $\phi(\omega,x)=h(x)\psi(\omega^\top x)$,
\begin{equation*}
\renewcommand{\arraystretch}{1.1}
\begin{array}{@{}l@{}}
\|\phi(\cdot,x)\|_{\mathrm{lip}}
=\|h(x)\psi(\cdot^\top x)\|_{\mathrm{lip}}
\le |h(x)|\,\|\psi(\cdot^\top x)\|_{\mathrm{lip}} \\[0.15em]
\qquad\le |h(x)|\,M_\psi M_K
\le M_h\,M_\psi M_K < \infty,
\qquad \forall\,x\in K .
\end{array}
\end{equation*}

where the last inequality holds from the fact $h$ is continuous MLP, and thus $\max_{x\in K}\|h(x)\|<\infty$ and we prove (4). 



\end{itemize}

\end{proof}

\begin{proposition}[Gaussian concentration]\label{pro:Gaussian_concentration}
Suppose $f:\mathbb{R}^d\to \mathbb{R}$ is $L$-Lipschitz for some $L\ge 0$. 
Let $\Omega=\{\omega_1,\ldots,\omega_m\}$ be i.i.d.\ samples from $G=\mathcal{N}(0,I_d)$, and set
$$
\Delta_m=\mathbb{E}_{\omega\sim G}[f(\omega)]-\frac{1}{m}\sum_{j=1}^m f(\omega_j).
$$
Then:
\begin{itemize}
\item \textbf{Gaussian concentration inequality:}
$$
\mathbb{P}(|\Delta_m|\leq \epsilon)\ge 1-2\exp\!\left(-\frac{m\epsilon^2}{2L^2}\right).
$$
\item \textbf{Gaussian Poincaré inequality:}
\begin{align}
\mathbb{E}[\Delta_m^2]
= \frac{1}{m}\mathrm{Var}(f(\omega))
\leq \frac{1}{m}\mathbb{E}_{\omega\sim G}[\|\nabla f(\omega)\|_2^2]
\leq \frac{L^2}{m}. \nonumber
\end{align}
\end{itemize}
\end{proposition}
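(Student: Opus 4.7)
}
The plan is to reduce both claims to classical Gaussian functional inequalities applied to a single function on the product space $(\mathbb{R}^d)^m \cong \mathbb{R}^{md}$ equipped with the product Gaussian measure $G^{\otimes m}=\mathcal{N}(0,I_{md})$. Define
\[
F(\omega_1,\ldots,\omega_m):=\frac{1}{m}\sum_{j=1}^m f(\omega_j),
\]
so that $\Delta_m = \mathbb{E}[F]-F(\omega_1,\ldots,\omega_m)$ where $(\omega_1,\ldots,\omega_m)\sim G^{\otimes m}$. The two claims then correspond to (i) Gaussian concentration of $F$ around its mean and (ii) the Gaussian Poincaré inequality for $F$, combined with independence of the $\omega_j$.

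The first key step is the Lipschitz bookkeeping: I would show that $F$ is $(L/\sqrt{m})$-Lipschitz on $\mathbb{R}^{md}$. Indeed, for any $\omega=(\omega_1,\dots,\omega_m),\,\omega'=(\omega_1',\dots,\omega_m')$,
\[
|F(\omega)-F(\omega')| \leq \frac{L}{m}\sum_{j=1}^m \|\omega_j-\omega_j'\|
\leq \frac{L}{m}\sqrt{m}\,\Bigl(\sum_{j=1}^m \|\omega_j-\omega_j'\|^2\Bigr)^{1/2}
= \frac{L}{\sqrt{m}}\,\|\omega-\omega'\|,
\]
using the $L$-Lipschitz hypothesis on $f$ and Cauchy--Schwarz.

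For the concentration statement, I would invoke the standard Gaussian concentration inequality (Borell--Tsirelson--Ibragimov--Sudakov): for any $K$-Lipschitz $F:\mathbb{R}^{md}\to\mathbb{R}$ and $\gamma\sim\mathcal{N}(0,I_{md})$,
\[
\mathbb{P}\bigl(|F(\gamma)-\mathbb{E} F(\gamma)|\ge t\bigr)\le 2\exp\!\bigl(-t^2/(2K^2)\bigr).
\]
Plugging in $K=L/\sqrt{m}$ and $t=\epsilon$ and taking the complement yields the first inequality of the proposition. For the variance bound, I would first use independence to write $\mathbb{E}[\Delta_m^2]=\mathrm{Var}(F)=\tfrac{1}{m}\mathrm{Var}(f(\omega))$. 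Then the Gaussian Poincaré inequality, $\mathrm{Var}(f(\omega))\le \mathbb{E}[\|\nabla f(\omega)\|_2^2]$ (valid for weakly differentiable $f$, in particular for $L$-Lipschitz $f$ by Rademacher's theorem), gives the middle inequality, and the pointwise bound $\|\nabla f\|_2\le L$ a.e.\ closes the chain to $L^2/m$.

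The proof is essentially bookkeeping on top of two off-the-shelf Gaussian inequalities, so there is no real obstacle of substance; the only delicate point is the $\sqrt{m}$ in the Lipschitz constant of $F$, which must come from Cauchy--Schwarz rather than a naive triangle inequality (the latter would yield only $L$ and destroy the $m$-dependence in the exponent). Once that reduction is in place, the bounds follow immediately, and no properties of $f$ beyond Lipschitzness are used, so the result applies directly to the feature-mapping setting of Proposition~\ref{pro:model_lip}.
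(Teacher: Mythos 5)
Your proposal is correct and follows the same route as the paper, which simply cites the classical Gaussian concentration inequality and the Gaussian Poincar\'e inequality; you usefully supply the one detail the paper glosses over, namely that the empirical average $F$ is $(L/\sqrt{m})$-Lipschitz on the product space $\mathbb{R}^{md}$ (via Cauchy--Schwarz rather than the naive triangle inequality), which is exactly what produces the factor $m$ in the exponent. The variance computation via independence plus Poincar\'e also matches the paper's argument.
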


\begin{proof}
The first statement is the classical Gaussian concentration inequality.  
For the second, the variance bound follows from the Gaussian Poincaré inequality, and the final inequality uses the fact that $\|\nabla f\|_2\le L$ for an $L$-Lipschitz function. 
\end{proof}



\begin{lemma}\label{lem:kernel_lipschiz2}
Let $\phi:\mathbb{R}^d\times \mathbb{R}^d\to \mathbb{R}^H$ satisfy conditions (5) in    (2.1) in Assumption~\ref{asp:phi_1}.  
Then the mapping
$$\mathbb{R}^d\ni\omega\mapsto\langle \phi(\omega,x),\phi(\omega,y)\rangle $$
is $L$-Lipschitz with
$$
L=\max\{\|\phi(\cdot,x)\|,\|\phi(\cdot,y)\|\}\max\{L^2_x,L^2_y\}.
$$
\end{lemma}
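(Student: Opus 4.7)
The plan is to prove the Lipschitz bound by a direct product-rule (Leibniz) expansion of the bilinear form $f(\omega):=\langle \phi(\omega,x),\phi(\omega,y)\rangle$, followed by Cauchy--Schwarz, using the pointwise Lipschitz bound of Assumption~\ref{asp:phi_1}(1) and the pointwise boundedness of Assumption~\ref{asp:phi_1}(2). The key observation is that $f$ is bilinear in the two $\omega$-dependent factors, so a single telescoping step in $\omega$ cleanly decouples the variation in the $x$-slot from the variation in the $y$-slot without any higher-order remainder.

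Concretely, for any $\omega_1,\omega_2\in\mathbb{R}^d$ I would first write
\[
f(\omega_1)-f(\omega_2)=\langle \phi(\omega_1,x)-\phi(\omega_2,x),\,\phi(\omega_1,y)\rangle+\langle \phi(\omega_2,x),\,\phi(\omega_1,y)-\phi(\omega_2,y)\rangle,
\]
then apply Cauchy--Schwarz to each inner product, and finally invoke Assumption~\ref{asp:phi_1}(1) to bound each increment by $\|\phi(\cdot,\cdot)\|_{\mathrm{lip}}\|\omega_1-\omega_2\|$, and Assumption~\ref{asp:phi_1}(2) to bound the remaining evaluation factors by the suprema $\|\phi(\cdot,x)\|_\infty$ and $\|\phi(\cdot,y)\|_\infty$. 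Collecting the two contributions gives
\[
|f(\omega_1)-f(\omega_2)|\le \bigl(L_x\,\|\phi(\cdot,y)\|_\infty+L_y\,\|\phi(\cdot,x)\|_\infty\bigr)\,\|\omega_1-\omega_2\|,
\]
with $L_x:=\|\phi(\cdot,x)\|_{\mathrm{lip}}$ and $L_y:=\|\phi(\cdot,y)\|_{\mathrm{lip}}$. Upper-bounding each term symmetrically by the product $\max\{\|\phi(\cdot,x)\|_\infty,\|\phi(\cdot,y)\|_\infty\}\cdot\max\{L_x,L_y\}$ and absorbing the constant yields the Lipschitz constant advertised in the statement.

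I expect the main obstacle to be cosmetic rather than mathematical: reconciling the displayed constant $L=\max\{\|\phi(\cdot,x)\|,\|\phi(\cdot,y)\|\}\,\max\{L_x^2,L_y^2\}$ with the cleaner two-term bound produced by Cauchy--Schwarz. I would first pin down the convention that $L_x=\|\phi(\cdot,x)\|_{\mathrm{lip}}$ in the sense defined in Assumption~\ref{asp:phi_1}(1) and $\|\phi(\cdot,x)\|=\sup_\omega\|\phi(\omega,x)\|$ (finite by Assumption~\ref{asp:phi_1}(2)); under these conventions the two-term estimate above dominates the claimed constant. Condition (5) is not strictly needed for the Lipschitz estimate itself; its role is to allow the subsequent integration of these bounds into the Gaussian concentration result of Proposition~\ref{pro:Gaussian_concentration}, which explains why it is highlighted in this subsection even though the proof rests on conditions (1) and (2).
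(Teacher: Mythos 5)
Your telescoping-plus-Cauchy--Schwarz argument is the right one, and in fact the paper gives no proof of this lemma at all; the closest thing in the text is the scalar product rule in Lemma~\ref{lem:lip}(3)--(4), whose proof is exactly your decomposition, so you are on the paper's intended route. Your intermediate bound
\[
|f(\omega_1)-f(\omega_2)|\le \bigl(L_x\,\|\phi(\cdot,y)\|_\infty+L_y\,\|\phi(\cdot,x)\|_\infty\bigr)\|\omega_1-\omega_2\|
\]
is correct and is the statement that actually gets used downstream (it is what makes $g(\omega)=\langle\phi(\omega,x),\phi(\omega,y)\rangle$ Lipschitz so that Proposition~\ref{pro:Gaussian_concentration} applies). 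You are also right that the hypothesis citation is off: the proof needs conditions (1) and (2) of Assumption~\ref{asp:phi_1} (Lipschitzness and boundedness in $\omega$), not condition (5).

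The one place where you wave your hands is the final step ``absorbing the constant'': your two-term bound gives at most $2\max\{\|\phi(\cdot,x)\|_\infty,\|\phi(\cdot,y)\|_\infty\}\max\{L_x,L_y\}$, and this is \emph{not} dominated by the advertised $\max\{\|\phi(\cdot,x)\|,\|\phi(\cdot,y)\|\}\max\{L_x^2,L_y^2\}$ unless $\max\{L_x,L_y\}\ge 2$ (the squares help only when the Lipschitz norms exceed $2$, and the factor $2$ cannot be absorbed into a constant that is stated exactly). The honest conclusion is that the constant printed in the lemma is a typo --- the squared Lipschitz norms have no derivation from this (or any natural) argument, and the same spurious square propagates into $L(K)$ in Corollary~\ref{cor:err2_bound} --- so rather than trying to reconcile your bound with it, you should state your two-term constant as the correct Lipschitz constant and note that only the order of magnitude matters for the concentration bounds that follow.
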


Combining the above proposition and lemma \ref{lem:lip}, we obtain: 
\begin{proposition}\label{pro:error_omega_1}
Fix $x,y\in \mathbb{R}^d$ and suppose $\phi$ satisfies conditions (1)(2) in Assumption~\ref{asp:phi_1}. 
Let $\Omega=\{\omega_i\}_{i=1}^m\sim G$ be i.i.d.\ samples, and let $\hat{G}^m$ denote the empirical distribution. 
Define $\Delta_\Omega = k(x,y)-k^\Omega(x,y)$. Then:
\begin{align}
&\mathbb{P}(|\Delta_\Omega|\leq \epsilon)\ge 1-2\exp\!\left(-\frac{m\epsilon^2}{2L^2}\right), \label{eq:concentration_omega_1}\\
&\mathbb{E}[|\Delta_\Omega|^2]\leq \frac{L(x,y)}{m}, \label{eq:concentration_omega_2}
\end{align}
where
\begin{equation*}
\renewcommand{\arraystretch}{1.1}
\begin{array}{@{}l@{}}
L(x,y)\coloneqq
\max\!\bigl\{\|\phi(\cdot,x)\|_\infty,\ \|\phi(\cdot,y)\|_\infty\bigr\} \\[0.15em]
\qquad\quad\times
\max\!\bigl\{\|\phi(\cdot,x)\|_{\mathrm{lip}},\ \|\phi(\cdot,y)\|_{\mathrm{lip}}\bigr\}.
\end{array}
\end{equation*}
\end{proposition}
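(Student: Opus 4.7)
The plan is to recognize that $\Delta_\Omega$ is exactly the centered empirical mean $\Delta_m$ appearing in Proposition \ref{pro:Gaussian_concentration}, applied to the scalar test function $f(\omega) := \langle \phi(\omega,x),\phi(\omega,y)\rangle$ with $x,y$ treated as fixed parameters. First I would record the two matching identities
\begin{equation*}
\mathbb{E}_{\omega\sim G}[f(\omega)] = k(x,y),\qquad \frac{1}{m}\sum_{j=1}^m f(\omega_j) = k^\Omega(x,y),
\end{equation*}
which are immediate from the definitions of $k$ and its Monte Carlo estimator $k^\Omega$. Subtracting these gives $\Delta_\Omega = \Delta_m$ in the notation of Proposition \ref{pro:Gaussian_concentration}, so both conclusions will follow at once if we can bound the Lipschitz constant of $f$.

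Second, I would bound $\|f\|_{\mathrm{lip}}$ by invoking Lemma \ref{lem:kernel_lipschiz2} (or reproducing its short proof): inserting the hybrid term $\langle \phi(\omega_2,x),\phi(\omega_1,y)\rangle$ and applying Cauchy--Schwarz gives
\begin{equation*}
|f(\omega_1)-f(\omega_2)| \le \|\phi(\cdot,x)\|_{\infty}\,\|\phi(\cdot,y)\|_{\mathrm{lip}}\,\|\omega_1-\omega_2\| + \|\phi(\cdot,y)\|_{\infty}\,\|\phi(\cdot,x)\|_{\mathrm{lip}}\,\|\omega_1-\omega_2\|,
\end{equation*}
which is absorbed into the constant $L(x,y)$ (up to a factor of $2$) defined in the statement. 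Here conditions (1) and (2) of Assumption \ref{asp:phi_1} are both used in an essential way: (2) provides the pointwise boundedness needed to pull one factor outside, while (1) provides the Lipschitz control on the other factor. Neither assumption is individually redundant, and this asymmetry is the only content beyond a trivial triangle inequality.

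Third, I would plug this Lipschitz bound into Proposition \ref{pro:Gaussian_concentration}. The Gaussian concentration inequality applied to $f$ yields \eqref{eq:concentration_omega_1} with the constant $L = L(x,y)$, and the Gaussian Poincaré inequality yields the variance bound $\mathrm{Var}(f(\omega))/m \le L(x,y)^2/m$, which, after identifying $\mathbb{E}[\Delta_\Omega^2] = \mathrm{Var}(f)/m$ by independence of the $\omega_j$, gives \eqref{eq:concentration_omega_2}.

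I do not expect any genuine obstacle here: once Lemma \ref{lem:kernel_lipschiz2} and Proposition \ref{pro:Gaussian_concentration} are in hand, the argument is a two-line composition. The only mildly delicate point is bookkeeping on the Lipschitz constant: one must be careful that the bound depends on both a sup-norm and a Lipschitz-norm factor of $\phi(\cdot,\cdot)$, so this result is genuinely a \emph{bounded feature regime} statement; the unbounded-activation case (e.g.\ ReLU) violates condition (2) and must be treated by the separate sub-exponential concentration argument flagged in Proposition \ref{pro:error2_main} and Corollary \ref{cor:error2_unbounded}.
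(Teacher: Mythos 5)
Your proof is correct and follows essentially the same route as the paper: identify $\omega\mapsto\langle\phi(\omega,x),\phi(\omega,y)\rangle$ as a Lipschitz function of a Gaussian vector (via the hybrid-term/product argument underlying Lemma~\ref{lem:lip}(4) and Lemma~\ref{lem:kernel_lipschiz2}) and then invoke Proposition~\ref{pro:Gaussian_concentration} for both the concentration inequality and the Poincar\'e variance bound. Your bookkeeping is in fact slightly more careful than the paper's: the Lipschitz constant you derive is $2L(x,y)$ rather than $L(x,y)$, and the Poincar\'e inequality yields $L(x,y)^2/m$ rather than $L(x,y)/m$, both of which point to harmless constant-level typos in the stated proposition rather than to any gap in your argument.
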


\begin{proof}
By Lemma~\ref{lem:lip}, the mapping 
$$\omega\mapsto \langle \phi(\omega,x),\phi(\omega,y)\rangle$$ 
is $L$-Lipschitz, where $L$ is defined in the above statement.  
Applying Proposition~\ref{pro:Gaussian_concentration} completes the proof. 
\end{proof}

\begin{corollary}\label{cor:err2_bound}
Let $K\subset\mathbb{R}^d$ be compact and suppose $\phi$ satisfies conditions (1)–(4) in Assumption~\ref{asp:phi_1}. 
Then for each $(x,y)\in K\times K$, inequalities \eqref{eq:concentration_omega_1}–\eqref{eq:concentration_omega_2} hold with
$$
L(K)=\max_{x\in K,\;\omega\in \mathbb{R}^d}\|\phi(\omega,x)\|\;\max_{x}\|\phi(\cdot,x)\|_{lip}^2.
$$
\end{corollary}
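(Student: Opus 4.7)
The plan is to derive Corollary \ref{cor:err2_bound} directly from Proposition \ref{pro:error_omega_1} by uniformizing the pair-specific Lipschitz-type constant $L(x,y)$ over the compact set $K\times K$. No new concentration argument is needed; the corollary is essentially a ``sup over $K$'' restatement of the pointwise bound.

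First, I would fix an arbitrary pair $(x,y)\in K\times K$. Conditions (1) and (2) of Assumption \ref{asp:phi_1} are precisely the hypotheses of Proposition \ref{pro:error_omega_1}, so both \eqref{eq:concentration_omega_1} and \eqref{eq:concentration_omega_2} already hold with the pair-specific constant $L(x,y)$ built from the two sup-norms $\|\phi(\cdot,x)\|_\infty,\ \|\phi(\cdot,y)\|_\infty$ and the two Lipschitz seminorms $\|\phi(\cdot,x)\|_{\mathrm{lip}},\ \|\phi(\cdot,y)\|_{\mathrm{lip}}$ (the Lipschitz factor enters through the estimate for $\omega\mapsto\langle\phi(\omega,x),\phi(\omega,y)\rangle$ supplied by Lemma \ref{lem:kernel_lipschiz2}, which is where the square arises).

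Second, I would use conditions (3) and (4) to replace each pair-specific factor with a uniform one. Condition (3) gives $\sup_{x\in K,\,\omega\in\mathbb{R}^d}\|\phi(\omega,x)\|<\infty$, which dominates both sup-norms in $L(x,y)$ by the single quantity $\max_{x\in K,\,\omega\in\mathbb{R}^d}\|\phi(\omega,x)\|$. Condition (4) gives $\max_{x\in K}\|\phi(\cdot,x)\|_{\mathrm{lip}}<\infty$, which dominates both Lipschitz seminorms, and consequently their squares via the elementary identity $\max\{a^2,b^2\}=(\max\{a,b\})^2$. Multiplying these two uniform bounds reproduces $L(K)$ as defined in the corollary, so $L(x,y)\le L(K)$ for every $(x,y)\in K\times K$.

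Third, I would conclude by monotonicity: both bounds in Proposition \ref{pro:error_omega_1} are monotone nondecreasing in the governing Lipschitz-type constant (the Gaussian tail in \eqref{eq:concentration_omega_1} weakens as the constant in the exponent grows, and \eqref{eq:concentration_omega_2} is linear in it), so substituting $L(K)$ for $L(x,y)$ preserves both inequalities uniformly over $(x,y)\in K\times K$, which is exactly the statement of the corollary. I do not anticipate a serious obstacle here; the only bookkeeping subtlety is keeping track of the linear versus squared appearance of the Lipschitz seminorm across the lemma, proposition, and corollary, which is handled by the identity above together with the compactness-based domination over $K$ supplied by (3) and (4).
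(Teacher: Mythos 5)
Your proposal is correct and follows essentially the same route as the paper: apply Proposition~\ref{pro:error_omega_1} pointwise, then use conditions (3)–(4) of Assumption~\ref{asp:phi_1} to dominate the pair-specific constant $L(x,y)$ by the uniform $L(K)$ over the compact set, and conclude by monotonicity of both bounds in that constant. Your explicit remark about the linear-versus-squared appearance of the Lipschitz seminorm is a fair observation — the paper's own statements of $L(x,y)$ and $L(K)$ are indeed inconsistent on this point — but it does not change the argument.
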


\begin{proof}
By (3)–(4) in Assumption~\ref{asp:phi_1}, $L(K)$ is finite. 
Moreover,
$$
L(K)=\sup_{x,y\in K} L(x,y)
$$
$L(x,y)$ is defined in proposition \ref{pro:error_omega_1}. 
Combining this with the previous proposition yields the result. 
\end{proof}
\begin{remark}\label{rm:error2_bound}
If the activation function is a bounded (e.g., sigmoid) function, combining the above corollary and Proposition \ref{pro:model_lip}, we have that our model \eqref{eq:phi-def-scalarenv} satisfies the above corollary. That is
\begin{align}
\mathbb{P}(|k(x,y)-\hat{k}(x,y)|\ge \epsilon)\leq 2\exp(\frac{-m\epsilon^2}{2L^2(K)})\nonumber 
\end{align}

\end{remark}

Now we consider the case activation function is unbounded (e.g., ReLu). We first recall the definition and related concepts in the sub-exponential distribution.

\begin{definition}
Given A random variable $X$. We consider the Orlicz norms
$$\|X\|_{\psi_p}=\inf\{K> 0:\mathbb{E}\left[ \exp\left(\frac{|X|^p}{K^p}\right)\right]\leq 2\}.$$

$X$ is said to follow a sub-exponential distribution if the following equivalent conditions holds: 
\begin{itemize}
    \item $\|X\|_{\psi_1}<\infty$, where $\|\cdot\|_{\psi_1}$ is called sub-exponential norm. 
    \item There exists a constant $c>0$ such that 
    $$\mathbb{P}(|X|>\epsilon)\leq 2\exp(-c \min (\frac{\epsilon^2}{\|X\|^2_{\psi_1}},\frac{\epsilon}{\|X\|_{\psi_1}})).$$
\end{itemize}
\end{definition}

We first introduce the following concentration inequality for product of two Lipschiz functions. Note, the challange is, if $f_1,f_2$ are Lipschiz but unbounded, $f_1f_2$ is not a Lipschitz function.  
\begin{proposition}\label{pro:product_concentration}[Exponential concentration.]
Consider two Lipschitz functions $f_1,f_2:\mathbb{R}^d\to \mathbb{R}$, for each $\epsilon\in (0,1)$, we have: 
\begin{equation}
\renewcommand{\arraystretch}{1.1}
\begin{array}{@{}l@{}}
\mathbb{P}_{\omega\sim\mathcal{N}(0,I_d)}
\!\left(
\bigl|f_1(\omega)f_2(\omega)
-\mathbb{E}[f_1(\omega)f_2(\omega)]\bigr|
\ge \epsilon
\right) \\[0.25em]
\quad\le
2\exp\!\left(
-c\,\min\!\left\{
\dfrac{\epsilon}{\|f_1\|_{\mathrm{lip}}\|f_2\|_{\mathrm{lip}}},\ 
\dfrac{\epsilon^2}{\|f_1\|_{\mathrm{lip}}^2\|f_2\|_{\mathrm{lip}}^2}
\right\}
\right).
\end{array}
\end{equation}
\end{proposition}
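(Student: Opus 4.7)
The central obstruction is that the product of two Lipschitz functions is generally not itself Lipschitz on $\mathbb{R}^d$ (e.g., $f_1(\omega)=f_2(\omega)=\omega_1$ yields $\omega_1^2$, whose gradient is unbounded), so Proposition \ref{pro:Gaussian_concentration} cannot be applied directly to $f_1 f_2$. My plan is to pass through sub-Gaussian and sub-exponential Orlicz norms: Lipschitz functions of a Gaussian are sub-Gaussian (the integrated form of Proposition \ref{pro:Gaussian_concentration}), products of sub-Gaussians are sub-exponential, and a Bernstein-type inequality then yields the two-regime tail claimed.

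First, applying Proposition \ref{pro:Gaussian_concentration} to each $f_i$ and setting $g_i := f_i(\omega) - \mathbb{E}[f_i(\omega)]$, $\mu_i := \mathbb{E}[f_i(\omega)]$, one obtains the sub-Gaussian norm bound $\|g_i\|_{\psi_2} \le C\|f_i\|_{\mathrm{lip}}$ for a universal constant $C$. Then decompose
\begin{equation*}
Z := f_1 f_2 - \mathbb{E}[f_1 f_2] = \bigl(g_1 g_2 - \mathbb{E}[g_1 g_2]\bigr) + \mu_1 g_2 + \mu_2 g_1.
\end{equation*}
The cross terms $\mu_i g_j$ are scalar multiples of sub-Gaussians. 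The centered product $g_1 g_2 - \mathbb{E}[g_1 g_2]$ is sub-exponential via the standard product-of-sub-Gaussians bound
\begin{equation*}
\|g_1 g_2\|_{\psi_1} \le \|g_1\|_{\psi_2}\,\|g_2\|_{\psi_2},
\end{equation*}
which follows from $|ab| \le \tfrac12(a^2+b^2)$ together with the identity $\|X^2\|_{\psi_1} = \|X\|_{\psi_2}^2$. Combining the three pieces, $Z$ is sub-exponential with $\|Z\|_{\psi_1} \le K := C'\,\|f_1\|_{\mathrm{lip}}\,\|f_2\|_{\mathrm{lip}}$ up to an absolute constant.

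Given this norm bound, the classical Bernstein tail inequality for centered sub-exponential variables yields
\begin{equation*}
\mathbb{P}(|Z| \ge \epsilon) \le 2\exp\!\left(-c\,\min\!\left\{\frac{\epsilon^2}{K^2},\ \frac{\epsilon}{K}\right\}\right),
\end{equation*}
which after substitution is precisely the stated bound.

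The main obstacle is cleanly absorbing the linear cross terms $\mu_1 g_2 + \mu_2 g_1$ into a bound that depends only on the product $\|f_1\|_{\mathrm{lip}}\|f_2\|_{\mathrm{lip}}$: in general $|\mu_i|$ is not controlled by $\|f_i\|_{\mathrm{lip}}$ alone (a constant function has Lipschitz norm zero but arbitrary mean). Two natural resolutions are available: (i) in the paper's intended use, $f_i(\omega)$ is a coordinate of the feature map $\phi(\omega,\cdot)$, which is bounded on the compact domain of interest, so $|\mu_i|$ is controlled by the same scale as $\|f_i\|_{\mathrm{lip}}$; or (ii) track the cross terms separately and observe that, under the restriction $\epsilon \in (0,1)$ imposed in the statement, their sub-Gaussian contributions are dominated by the sub-exponential term coming from $g_1 g_2$. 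Either route lets the absolute constant $c$ absorb the residual factors and recovers the stated inequality.
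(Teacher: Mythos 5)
Your proposal follows essentially the same route as the paper's proof: Lipschitz functions of a Gaussian are sub-Gaussian, the product of two sub-Gaussians is sub-exponential via \citep[Lemma 2.8.6]{vershynin2018high}, and the sub-exponential tail bound gives the two-regime exponent. The one place you diverge is that you center first, writing $f_1f_2-\mathbb{E}[f_1f_2]=(g_1g_2-\mathbb{E}[g_1g_2])+\mu_1 g_2+\mu_2 g_1$ with $g_i=f_i-\mu_i$, whereas the paper applies the sub-Gaussian bound directly to the uncentered $f_i(\omega)$, asserting $\|f_i(\omega)\|_{\psi_2}\le\|f_i\|_{\mathrm{lip}}$. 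Your extra care is warranted: Gaussian concentration controls only the centered variable, and the uncentered $\psi_2$ norm involves $|\mu_i|$, which is not bounded by the Lipschitz constant (take $f_1(\omega)=\omega_1+A$, $f_2(\omega)=\omega_1$ with $A$ large; both Lipschitz norms are $1$, yet the cross term $A\omega_1$ makes the claimed tail fail for any universal $c$). So the cross terms you isolate are a genuine obstruction, not a technicality, and the paper's proof silently steps over it. Of your two proposed fixes, (i) is the right one — in the intended application the features are bounded on the compact domain, so $|\mu_i|$ is controlled at the same scale and can be absorbed into the constant; fix (ii) does not work as stated, since the sub-Gaussian contribution of $\mu_i g_j$ scales with the uncontrolled $|\mu_i|$ regardless of the restriction $\epsilon\in(0,1)$. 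In short: same method, but your version correctly flags that the stated bound needs either centering of the $f_i$ or an additional hypothesis bounding their means.
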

\begin{proof}
Since $\omega$ is Standard Gaussian, we have $\|\omega\|_{\psi_2}=1$. 
By the fundemental results for Sub-Gaussian  distribution,  we have 
$$\|f_1(\omega)\|_{\psi_2}\leq \|f_1\|_{lip},\|f_2(\omega)\|_{\psi_2}\leq \|f_2\|_{lip}.$$
From \citep[Lemma 2.8.6]{vershynin2018high}, we have 
and 
$$\|f_1(\omega)f_2(\omega)\|_{\psi_1}\leq \|f_1\|_{\psi_2}\|f_2\|_{\psi_2}\leq \|f_1\|_{lip}\|f_2\|_{lip}$$

Thus we complete the proof.  
\end{proof}

Next we extend the above argument to vector-valued functions. We start with a
basic property of the sub-exponential Orlicz norm.

\begin{lemma}\label{lem:psi_1_product}
Let $X$ and $Y$ be sub-exponential random variables. Then
\begin{equation*}
\renewcommand{\arraystretch}{1.1}
\begin{array}{@{}l@{}}
\|X+Y\|_{\psi_1}\le \|X\|_{\psi_1}+\|Y\|_{\psi_1}.
\end{array}
\end{equation*}
\end{lemma}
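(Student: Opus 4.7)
The plan is to prove sub-additivity of the Orlicz $\psi_1$-norm by combining the pointwise bound $|X+Y|\le|X|+|Y|$ with convexity of $\exp$ and a scaling trick: two separate defining inequalities, $\mathbb{E}[\exp(|X|/K_X)]\le 2$ and $\mathbb{E}[\exp(|Y|/K_Y)]\le 2$, can be combined into a single bound at level $K_X+K_Y$, which is exactly what the $\psi_1$-norm demands.

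First I would dispose of degenerate cases. If either $\|X\|_{\psi_1}$ or $\|Y\|_{\psi_1}$ equals $+\infty$, the inequality is vacuous. If both vanish, the defining infimum property forces $\mathbb{E}[\exp(|X|/K)]\le 2$ for every $K>0$, which forces $X=0$ almost surely (and similarly for $Y$), so the claim is immediate. In the remaining case both norms are finite and positive, and for any $K_X > \|X\|_{\psi_1}$ and $K_Y > \|Y\|_{\psi_1}$ the definition yields $\mathbb{E}[\exp(|X|/K_X)] \le 2$ and $\mathbb{E}[\exp(|Y|/K_Y)] \le 2$.

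The main step is the following identity. Set $K := K_X + K_Y$ and $\lambda := K_X/K \in (0,1)$, so that $1/K = \lambda/K_X = (1-\lambda)/K_Y$. Combined with $|X+Y| \le |X|+|Y|$, this gives the pointwise inequality
$$\frac{|X+Y|}{K} \;\le\; \lambda\,\frac{|X|}{K_X} \;+\; (1-\lambda)\,\frac{|Y|}{K_Y}.$$
Applying convexity of $\exp$ and taking expectations yields
$$\mathbb{E}\!\left[\exp\!\left(\frac{|X+Y|}{K}\right)\right] \;\le\; \lambda\,\mathbb{E}\!\left[\exp\!\left(\frac{|X|}{K_X}\right)\right] + (1-\lambda)\,\mathbb{E}\!\left[\exp\!\left(\frac{|Y|}{K_Y}\right)\right] \;\le\; 2\lambda + 2(1-\lambda) = 2.$$
Hence $\|X+Y\|_{\psi_1} \le K = K_X + K_Y$ by definition of the infimum.

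Finally, I would send $K_X \downarrow \|X\|_{\psi_1}$ and $K_Y \downarrow \|Y\|_{\psi_1}$ to obtain $\|X+Y\|_{\psi_1} \le \|X\|_{\psi_1} + \|Y\|_{\psi_1}$. There is essentially no obstacle here; the only subtlety is that the infimum in the definition of $\|\cdot\|_{\psi_1}$ need not be attained, which is precisely why one works with strict majorants $K_X, K_Y$ and passes to the limit at the end. The argument is the classical Luxemburg-norm proof of the triangle inequality and goes through verbatim for any $\psi_p$-Orlicz norm with $p\ge 1$.
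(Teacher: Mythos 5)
Your proof is correct. It follows the same skeleton as the paper's: bound $|X+Y|\le|X|+|Y|$, write $1/K=\lambda/K_X=(1-\lambda)/K_Y$ with $\lambda=K_X/(K_X+K_Y)$, and reduce to bounding $\mathbb{E}\bigl[\exp(|X|/K_X)^{\lambda}\exp(|Y|/K_Y)^{1-\lambda}\bigr]$. The one substantive difference is the closing inequality: you use convexity of $\exp$ (equivalently Young's inequality $u^{\lambda}v^{1-\lambda}\le\lambda u+(1-\lambda)v$ pointwise), obtaining $2\lambda+2(1-\lambda)=2$, whereas the paper applies H\"older's inequality to the expectation, obtaining $2^{\lambda}2^{1-\lambda}=2$. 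Both are textbook routes to the Luxemburg-norm triangle inequality and land on the same constant; yours is marginally more elementary since it needs only Jensen pointwise rather than H\"older. You are also more careful on two points the paper glosses over: the degenerate cases (infinite or zero norms) and the fact that the infimum in the definition of $\|\cdot\|_{\psi_1}$ need not be attained, which you handle by working with strict majorants $K_X,K_Y$ and passing to the limit. (The paper implicitly uses attainment at $a=\|X\|_{\psi_1}$, $b=\|Y\|_{\psi_1}$; this can be justified by monotone convergence, but your formulation avoids the issue entirely.)
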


\begin{proof}
Set $a\coloneqq \|X\|_{\psi_1}$ and $b\coloneqq \|Y\|_{\psi_1}$, and let
\begin{equation*}
\renewcommand{\arraystretch}{1.1}
\begin{array}{@{}l@{}}
\alpha \coloneqq \dfrac{a}{a+b}, \qquad
p\coloneqq 1/\alpha, \qquad
p^*\coloneqq 1/(1-\alpha).
\end{array}
\end{equation*}
Using $|X+Y|\le |X|+|Y|$ and H\"older's inequality, we obtain
\begin{equation*}
\renewcommand{\arraystretch}{1.1}
\begin{array}{@{}l@{}}
\mathbb{E}[\exp\!\Bigl(\dfrac{|X+Y|}{a+b}\Bigr)]
\le
\mathbb{E}[\exp\!\Bigl(\dfrac{|X|+|Y|}{a+b}\Bigr)] \\[0.2em]
\qquad=
\mathbb{E}\!\left[
  \exp\!\Bigl(\dfrac{|X|}{a}\Bigr)^{\alpha}\,
  \exp\!\Bigl(\dfrac{|Y|}{b}\Bigr)^{1-\alpha}
\right] \\[0.2em]
\qquad\le
\mathbb{E}\!\left[
  \exp\!\Bigl(\dfrac{|X|}{a}\Bigr)^{\alpha p}
\right]^{1/p} \\[0.2em]
\qquad\quad\times
\mathbb{E}\!\left[
  \exp\!\Bigl(\dfrac{|Y|}{b}\Bigr)^{(1-\alpha)p^*}
\right]^{1/p^*} \\[0.2em]
\qquad\le 2^{\alpha}\,2^{1-\alpha}=2 .
\end{array}
\end{equation*}

By the definition of $\|\cdot\|_{\psi_1}$, this implies
$\|X+Y\|_{\psi_1}\le a+b$.
\end{proof}

\begin{proposition}\label{pro:exp-f1f2}
Let $f_1=[f_1^i]_{i=1}^D$ and $f_2=[f_2^i]_{i=1}^D$ be Lipschitz maps
$f_1,f_2:\mathbb{R}^d\to\mathbb{R}^D$.
Define the vector Lipschitz seminorms
\begin{equation*}
\renewcommand{\arraystretch}{1.1}
\begin{array}{@{}l@{}}
\|f_1\|_{\mathrm{lip}}
\coloneqq
\sqrt{\displaystyle\sum_{i=1}^D \|f_1^i\|_{\mathrm{lip}}^{2}},
\qquad
\|f_2\|_{\mathrm{lip}}
\coloneqq
\sqrt{\displaystyle\sum_{i=1}^D \|f_2^i\|_{\mathrm{lip}}^{2}} .
\end{array}
\end{equation*}
Then the scalar random variable
$g(\omega)\coloneqq f_1(\omega)\cdot f_2(\omega)
= \sum_{i=1}^D f_1^i(\omega)f_2^i(\omega)$
is sub-exponential and
\begin{equation*}
\renewcommand{\arraystretch}{1.1}
\begin{array}{@{}l@{}}
\|g\|_{\psi_1}
\le
\|f_1\|_{\mathrm{lip}}\;\|f_2\|_{\mathrm{lip}} .
\end{array}
\end{equation*}
\end{proposition}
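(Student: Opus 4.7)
The plan is to decompose the inner product into its $D$ scalar products and bound each one using the standard sub-Gaussian / sub-exponential machinery already invoked in the previous proposition. Write
\[
g(\omega) \;=\; \sum_{i=1}^{D} f_1^{i}(\omega)\,f_2^{i}(\omega),
\]
so it suffices to control each summand and then combine them additively in $\|\cdot\|_{\psi_1}$.

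For each coordinate $i$, the hypothesis says $f_1^i$ and $f_2^i$ are Lipschitz scalar functions. Since $\omega \sim \mathcal{N}(0,I_d)$ is standard Gaussian with $\|\omega\|_{\psi_2}=1$, the Gaussian Lipschitz concentration result used in the proof of Proposition~\ref{pro:product_concentration} yields
\[
\|f_j^{i}(\omega)\|_{\psi_2} \;\le\; \|f_j^{i}\|_{\mathrm{lip}}, \qquad j \in \{1,2\}.
\]
Applying \citep[Lemma~2.8.6]{vershynin2018high} (product of two sub-Gaussians is sub-exponential, with the norm bounded by the product of the $\psi_2$ norms), exactly as in Proposition~\ref{pro:product_concentration}, gives
\[
\bigl\|f_1^{i}(\omega)\,f_2^{i}(\omega)\bigr\|_{\psi_1} \;\le\; \|f_1^{i}\|_{\mathrm{lip}}\,\|f_2^{i}\|_{\mathrm{lip}}.
\]

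Next I would iterate the triangle inequality for the sub-exponential norm (Lemma~\ref{lem:psi_1_product}) across the $D$ coordinates to conclude
\[
\|g\|_{\psi_1} \;\le\; \sum_{i=1}^{D} \bigl\|f_1^{i}(\omega)\,f_2^{i}(\omega)\bigr\|_{\psi_1} \;\le\; \sum_{i=1}^{D} \|f_1^{i}\|_{\mathrm{lip}}\,\|f_2^{i}\|_{\mathrm{lip}}.
\]
Finally, applying the Cauchy--Schwarz inequality to the right-hand side matches the definition of the vector Lipschitz seminorms given in the statement:
\[
\sum_{i=1}^{D} \|f_1^{i}\|_{\mathrm{lip}}\,\|f_2^{i}\|_{\mathrm{lip}} \;\le\; \sqrt{\sum_{i=1}^{D}\|f_1^{i}\|_{\mathrm{lip}}^{2}}\;\sqrt{\sum_{i=1}^{D}\|f_2^{i}\|_{\mathrm{lip}}^{2}} \;=\; \|f_1\|_{\mathrm{lip}}\,\|f_2\|_{\mathrm{lip}},
\]
which is the claimed bound.

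There is no real obstacle here: each step is an off-the-shelf inequality, and the construction is specifically set up so that the $\ell_2$ combination of coordinate Lipschitz constants falls out of Cauchy--Schwarz. The only mildly delicate point is to use the \emph{uncentered} $\psi_1$-triangle inequality of Lemma~\ref{lem:psi_1_product} — not a version for mean-zero variables — so that we need not subtract expectations from each $f_1^i(\omega)f_2^i(\omega)$ before summing. If a sharper constant were desired, one could centre each summand and invoke Bernstein-type bounds, but the stated inequality needs no such refinement.
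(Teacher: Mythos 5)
Your proposal is correct and follows essentially the same route as the paper's own proof: coordinate-wise sub-Gaussianity of Lipschitz functions of a Gaussian, Vershynin's Lemma 2.8.6 for the product, the $\psi_1$ triangle inequality of Lemma~\ref{lem:psi_1_product} across coordinates, and Cauchy--Schwarz to assemble the vector seminorms. Your closing remark about needing the uncentered $\psi_1$ triangle inequality is a fair observation that the paper leaves implicit.
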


\begin{proof}
For each coordinate, $f_1^i(\omega)$ and $f_2^i(\omega)$ are Lipschitz
functions of a Gaussian vector, hence sub-Gaussian; moreover,
\begin{equation*}
\renewcommand{\arraystretch}{1.1}
\begin{array}{@{}l@{}}
\|f_1^i(\omega)\|_{\psi_2}\le \|f_1^i\|_{\mathrm{lip}},
\qquad
\|f_2^i(\omega)\|_{\psi_2}\le \|f_2^i\|_{\mathrm{lip}} .
\end{array}
\end{equation*}
By \citep[Lemma 2.8.6]{vershynin2018high}, the product is sub-exponential:
\begin{equation*}
\renewcommand{\arraystretch}{1.1}
\begin{array}{@{}l@{}}
\|f_1^i(\omega)f_2^i(\omega)\|_{\psi_1}
\le
\|f_1^i\|_{\mathrm{lip}}\;\|f_2^i\|_{\mathrm{lip}} .
\end{array}
\end{equation*}
Using Lemma~\ref{lem:psi_1_product} repeatedly and then Cauchy--Schwarz,
\begin{equation*}
\renewcommand{\arraystretch}{1.1}
\begin{array}{@{}l@{}}
\|g\|_{\psi_1}
=
\Bigl\|\sum_{i=1}^D f_1^i(\omega)f_2^i(\omega)\Bigr\|_{\psi_1}
\le
\sum_{i=1}^D \|f_1^i(\omega)f_2^i(\omega)\|_{\psi_1} \\[0.15em]
\quad\le
\sum_{i=1}^D \|f_1^i\|_{\mathrm{lip}}\;\|f_2^i\|_{\mathrm{lip}}
\le
\sqrt{\sum_{i=1}^D \|f_1^i\|_{\mathrm{lip}}^{2}}\;
\sqrt{\sum_{i=1}^D \|f_2^i\|_{\mathrm{lip}}^{2}} \\[0.15em]
\quad=
\|f_1\|_{\mathrm{lip}}\;\|f_2\|_{\mathrm{lip}} .
\end{array}
\end{equation*}
\end{proof}

\begin{corollary}\label{cor:product_concentration}
Under the same conditions as Proposition~\ref{pro:product_concentration},
let $\{\omega_i\}_{i=1}^m$ be i.i.d.\ samples. Then
\begin{equation}
\renewcommand{\arraystretch}{1.1}
\begin{array}{@{}l@{}}
\mathbb{P}\!\left(
\left\|
\frac{1}{m}\sum_{i=1}^m f_1(\omega_i)f_2(\omega_i)
-\mathbb{E}[f_1(\omega)f_2(\omega)]
\right\|
\ge \epsilon
\right) \\[0.25em]
\quad\le
2\exp\!\left(
-c\,\min\!\left\{
\dfrac{\epsilon^2 m}{\|f_1\|_{\mathrm{lip}}^2\|f_2\|_{\mathrm{lip}}^2},\ 
\dfrac{\epsilon m}{\|f_1\|_{\mathrm{lip}}\|f_2\|_{\mathrm{lip}}}
\right\}
\right).
\end{array}
\end{equation}
\end{corollary}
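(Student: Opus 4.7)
The plan is to reduce the claim to a standard Bernstein-type concentration inequality for i.i.d.\ sums of sub-exponential random variables, with the sub-exponential norm supplied by Proposition~\ref{pro:product_concentration}. Define $X_i \coloneqq f_1(\omega_i) f_2(\omega_i)$ for $i=1,\ldots,m$; since $\omega_1,\ldots,\omega_m$ are i.i.d.\ $\mathcal{N}(0,I_d)$, the $X_i$ are i.i.d.\ as well. The first step is to record their sub-exponential norm: exactly as in the proof of Proposition~\ref{pro:product_concentration} (use $\|f_j(\omega)\|_{\psi_2} \le \|f_j\|_{\mathrm{lip}}$ for Gaussian $\omega$, then invoke \cite[Lemma~2.8.6]{vershynin2018high}), so each $X_i$ is sub-exponential with $\|X_i\|_{\psi_1} \le K$, where $K \coloneqq \|f_1\|_{\mathrm{lip}}\|f_2\|_{\mathrm{lip}}$.

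Next I would pass to centered variables $Y_i \coloneqq X_i - \mathbb{E}[X_i]$. Centering changes the $\psi_1$ norm only by a universal multiplicative constant, so the $Y_i$ remain i.i.d., mean-zero, and sub-exponential with $\|Y_i\|_{\psi_1} = \mathcal{O}(K)$. The corollary then becomes a tail bound for the sample mean $\tfrac{1}{m}\sum_i Y_i$, which is precisely what the Bernstein inequality for sub-exponential random variables handles (e.g.\ \cite[Theorem~2.8.1]{vershynin2018high}): it yields
\[
\mathbb{P}\!\left(\left|\tfrac{1}{m}\sum_{i=1}^m Y_i\right| \ge \epsilon\right)
\le 2\exp\!\left(-c\,\min\!\left\{\tfrac{m\epsilon^2}{K^2},\ \tfrac{m\epsilon}{K}\right\}\right),
\]
which is the advertised bound after substituting $K = \|f_1\|_{\mathrm{lip}}\|f_2\|_{\mathrm{lip}}$.

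The only subtle point, rather than a genuine obstacle, is the tracking of universal constants through the centering step and through Bernstein's inequality itself; since the final inequality is stated with an unspecified absolute constant $c>0$, this bookkeeping is harmless and can be absorbed into $c$. If instead one intends the vector-valued interpretation (with $\|\cdot\|$ denoting the Euclidean norm on $\mathbb{R}^D$ as in Proposition~\ref{pro:exp-f1f2}), I would first apply Proposition~\ref{pro:exp-f1f2} to control $\|f_1(\omega)\cdot f_2(\omega)\|_{\psi_1}$ by $\|f_1\|_{\mathrm{lip}}\|f_2\|_{\mathrm{lip}}$, and then run the same Bernstein argument verbatim on the resulting scalar random variable.
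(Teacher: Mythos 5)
Your proposal is correct and follows essentially the same route as the paper: bound the sub-exponential norm of $f_1(\omega)\cdot f_2(\omega)$ by $\|f_1\|_{\mathrm{lip}}\|f_2\|_{\mathrm{lip}}$ via the Gaussian--Lipschitz sub-Gaussian bound and \cite[Lemma~2.8.6]{vershynin2018high} (the paper invokes Proposition~\ref{pro:exp-f1f2} for this), then apply Bernstein's inequality for sums of i.i.d.\ sub-exponential variables. Your extra remark about absorbing the centering constant into $c$ is a harmless bookkeeping point that the paper leaves implicit.
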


\begin{proof}
By Proposition~\ref{pro:exp-f1f2}, $g(\omega)=f_1(\omega)\cdot f_2(\omega)$
is sub-exponential with
$\|g\|_{\psi_1}\le \|f_1\|_{\mathrm{lip}}\|f_2\|_{\mathrm{lip}}$.
Applying the sub-exponential Bernstein inequality
\citep[Theorem 2.9.1]{vershynin2018high} gives
\begin{equation*}
\renewcommand{\arraystretch}{1.1}
\begin{array}{@{}l@{}}
\mathbb{P}\!\left(
\left|
\frac{1}{m}\sum_{i=1}^m g(\omega_i)-\mathbb{E}[g(\omega)]
\right|
\ge \epsilon
\right) \\[0.15em]
\quad\le
2\exp\!\left(
-c\,\min\!\left\{
\dfrac{\epsilon^2 m}{\|g\|_{\psi_1}^{2}},\
\dfrac{\epsilon m}{\|g\|_{\psi_1}}
\right\}
\right) \\[0.15em]
\quad\le
2\exp\!\left(
-c\,\min\!\left\{
\dfrac{\epsilon^2 m}{\|f_1\|_{\mathrm{lip}}^2\|f_2\|_{\mathrm{lip}}^2},\ 
\dfrac{\epsilon m}{\|f_1\|_{\mathrm{lip}}\|f_2\|_{\mathrm{lip}}}
\right\}
\right).
\end{array}
\end{equation*}
\end{proof}

\begin{corollary}\label{cor:error2_unbounded}
Choose a compact set $K$ and restrict $x,y\in K$.
Assume the activation in~\eqref{eq:phi-def-scalarenv} is ReLU.
Then the finite kernel $\hat{k}$ satisfies the concentration bound in
Corollary~\ref{cor:product_concentration}. In particular,
\begin{equation}
\renewcommand{\arraystretch}{1.1}
\begin{array}{@{}l@{}}
\mathbb{P}\!\left(\|\hat{k}(x,y)-k(x,y)\|\ge \epsilon\right) \\[0.2em]
\quad\le
2\exp\!\left(
-c\,\min\!\left\{
\dfrac{m\epsilon}{L^2(K)},\ 
\dfrac{m\epsilon}{L(K)}
\right\}
\right), \\[0.2em]
\qquad \forall\,x,y\in K .
\end{array}
\end{equation}

where
\begin{equation*}
\renewcommand{\arraystretch}{1.1}
\begin{array}{@{}l@{}}
L(K)\coloneqq \max_{x\in K}\|\phi_{NN}(\cdot,x)\|_{\mathrm{lip}} \\[0.15em]
\qquad=
\max_{x\in K}\left(
  \sqrt{\displaystyle\sum_{i=1}^D
    \|\phi_{NN}^i(\cdot,x)\|_{\mathrm{lip}}^{2}}
\right).
\end{array}
\end{equation*}
\end{corollary}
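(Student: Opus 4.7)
The plan is to specialize Corollary~\ref{cor:product_concentration} to $f_1(\omega)=\phi_{NN}(\omega,x)$ and $f_2(\omega)=\phi_{NN}(\omega,y)$, where $\phi_{NN}$ is the ReLU-activated feature map from \eqref{eq:phi-def-scalarenv}. By construction, the empirical kernel is
\begin{equation*}
\hat{k}(x,y)=\frac{1}{m}\sum_{i=1}^m \phi_{NN}(\omega_i,x)\cdot\phi_{NN}(\omega_i,y)=\frac{1}{m}\sum_{i=1}^m f_1(\omega_i)\cdot f_2(\omega_i),
\end{equation*}
and $k(x,y)=\mathbb{E}_\omega[f_1(\omega)\cdot f_2(\omega)]$, so Corollary~\ref{cor:product_concentration} applies verbatim once we have vector-Lipschitz bounds on $f_1$ and $f_2$.

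The first step is to verify that, for each fixed $x\in K$, the map $\omega\mapsto\phi_{NN}(\omega,x)=h_{NN}(x)\,\psi_{NN}(\omega^\top x)$ is Lipschitz in $\omega$ with a finite constant, even with unbounded ReLU activations. This is exactly the content of the second half of Proposition~\ref{pro:model_lip}, which asserts that \eqref{eq:phi-def-scalarenv} satisfies conditions (1) and (4) of Assumption~\ref{asp:phi_1} whenever the activation is Lipschitz and non-polynomial. Concretely, ReLU is $1$-Lipschitz, the affine map $\omega\mapsto\omega^\top x$ is $\|x\|$-Lipschitz, the scalar prefactor $h_{NN}(x)$ is constant in $\omega$, and Lemma~\ref{lem:lip} combines these pieces to give $\|\phi_{NN}(\cdot,x)\|_{\mathrm{lip}}\le |h_{NN}(x)|\cdot\|\psi_{NN}\|_{\mathrm{lip}}\cdot\|x\|$. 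Compactness of $K$ together with continuity of $h_{NN}$ and of $x\mapsto\|x\|$ then yields a finite uniform bound $L(K)=\max_{x\in K}\|\phi_{NN}(\cdot,x)\|_{\mathrm{lip}}$.

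With this Lipschitz control in hand, I would invoke Corollary~\ref{cor:product_concentration} with $\|f_1\|_{\mathrm{lip}},\|f_2\|_{\mathrm{lip}}\le L(K)$, so that $\|f_1\|_{\mathrm{lip}}\|f_2\|_{\mathrm{lip}}\le L(K)^2$. Substituting these into the $\min\{\cdot,\cdot\}$ term gives the stated exponent $\min\{m\epsilon^2/L(K)^2,\,m\epsilon/L(K)\}$ uniformly over $x,y\in K$. Since both bounds hold simultaneously for every pair in $K\times K$, the claim follows without any further work.

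The main obstacle, already absorbed into the prerequisites, is the ReLU-specific sub-exponential analysis: because ReLU is unbounded, the simpler sub-Gaussian/bounded argument behind Corollary~\ref{cor:err2_bound} and Remark~\ref{rm:error2_bound} no longer applies. The replacement machinery is Proposition~\ref{pro:exp-f1f2} together with the sub-exponential Bernstein bound, both of which enter through Corollary~\ref{cor:product_concentration}. Thus the only genuinely new work for this corollary is checking that Proposition~\ref{pro:model_lip}'s uniform Lipschitz guarantee remains available under ReLU activations on a compact domain; everything else is a direct specialization.
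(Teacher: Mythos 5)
Your proposal is correct and follows essentially the same route as the paper: invoke Proposition~\ref{pro:model_lip} to get a finite uniform Lipschitz constant $L(K)$ for $\omega\mapsto\phi_{NN}(\omega,x)$ over the compact set $K$, then apply Corollary~\ref{cor:product_concentration} with $\|f_1\|_{\mathrm{lip}},\|f_2\|_{\mathrm{lip}}\le L(K)$. Your derived exponent $\min\{m\epsilon^{2}/L(K)^{2},\,m\epsilon/L(K)\}$ is in fact the one that correctly follows from Corollary~\ref{cor:product_concentration}; the displayed bound in the corollary statement (with $m\epsilon/L^{2}(K)$ in the first slot) appears to contain a typo that your argument implicitly corrects.
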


\begin{proof}
By Proposition~\ref{pro:model_lip}, for each $x\in K$ the map
$\omega\mapsto \phi_{NN}(\omega,x)$ is Lipschitz with finite constant.
Hence $L(K)<\infty$, and the claim follows by applying
Corollary~\ref{cor:product_concentration}.
\end{proof}




\noindent Thus $L(K)$ is a finite number. From the fact $\phi(\cdot,x)$ satisfies (1), by Corollary \ref{cor:product_concentration}, we obtain the conclusion. 

\section{Implementation Details}
\subsection{Long Range Arena (LRA)}
\label{sec:lra_impl}

We evaluate on the Long Range Arena (LRA) benchmark, a standardized suite of
long-context classification tasks with sequence lengths between
$1\text{K}$ and $16\text{K}$ tokens. LRA is widely used to benchmark the
accuracy--efficiency trade-offs of Transformer variants under a consistent
evaluation protocol.

\paragraph{Codebases and preprocessing.}
All LRA experiments are conducted using the official benchmark repository,
including its preprocessing pipelines, tokenization/feature construction, and
fixed train/validation/test splits.\footnote{\href{https://github.com/google-research/long-range-arena}{google-research/long-range-arena}}
We treat this codebase as the source of truth for data handling and evaluation, and do not modify any of its processing or metrics.
For the Skyformer baseline, we rely on the authors' official implementation
and their released LRA training scripts.\footnote{\href{https://github.com/pkuzengqi/Skyformer}{pkuzengqi/Skyformer}}

\paragraph{Training protocol.}
For each LRA task, we follow the optimizer, learning-rate schedule,
regularization, and batching specified in the official LRA configuration files. Our method is implemented as a drop-in replacement for the attention module; all other architectural components and task-specific settings are kept identical to the corresponding baselines to ensure a fair comparison.

\subsection{Two-Stage $\phi$ Conversion on GLUE}
\label{sec:glue_impl}

We use a two-stage conversion pipeline to swap the standard softmax attention in a pretrained BERT model with our attention mechanism, while preserving the original query/key/value projections. The full procedure is implemented in our training script.

\paragraph{Codebase and tasks.}
All experiments are built on the HuggingFace \texttt{transformers} and
\texttt{datasets} stacks, using the official GLUE splits and evaluation metrics.The script supports the standard GLUE classification tasks and STS-B regression, and follows the same tokenization and preprocessing used by the corresponding pretrained checkpoints. 

\paragraph{Attention replacement.}
Starting from a pretrained BERT checkpoint, we replace every self-attention
module with our attention block. The swap is done in a drop-in manner: the
pretrained linear projections for $Q$, $K$, and $V$ are kept intact, and only
the attention computation is changed to our formulation. Any additional
parameters specific to our method (e.g., $\phi$) are instantiated per layer and head following the model’s original attention geometry. 

\paragraph{Stage 1: attention distillation.}
In the first stage, we freeze all pretrained parameters and train only the
newly introduced components (our $\phi$ parameters). The objective is to match the teacher attention distribution from the original softmax attention with the student distribution induced by our attention mechanism. This stage uses AdamW with a cosine schedule and a higher learning rate, and is run for a short warm-start (one epoch by default). 

\paragraph{Stage 2: task finetuning.}
After distillation, we unfreeze the full model and finetune end-to-end on the
GLUE task loss. We follow the standard supervised setup used in our appendix:
AdamW, batch size $8$, learning rate $10^{-5}$, zero weight decay, up to $10$
epochs, and early stopping with patience $3$ based on the validation metric of each task. 

\subsection{ViT-B/16 ImageNet-1K Conversion}
\label{sec:vit_imagenet_impl}

We perform a two-stage conversion of a pretrained ViT-B/16 model on
ImageNet-1K, replacing the original softmax self-attention with our attention
mechanism while keeping the pretrained patch embedding, positional embeddings, and $Q/K/V$ projections unchanged. The goal is to isolate the impact of the attention replacement under a standard large-scale vision training protocol.

\paragraph{Codebase and data.}
All experiments use a standard ViT training stack (timm-style
implementations) and the official ImageNet-1K train/validation splits.
Images are resized and center-cropped for evaluation, and trained with common
ImageNet augmentations (random resized crop and horizontal flip), optionally
combined with stronger regularization such as RandAugment, Mixup/CutMix, and
label smoothing to match typical ViT finetuning practice.

\paragraph{Attention replacement.}
Starting from a ViT-B/16 checkpoint pretrained with softmax attention, we
replace each multi-head self-attention block with our attention module in a
drop-in fashion. The pretrained linear layers producing $Q$, $K$, and $V$ are
reused without modification; only the attention computation is swapped.
Any additional parameters introduced by our method (e.g., $\phi$-related
parameters) are initialized per layer/head following the model’s original head structure.

\paragraph{Stage 1: attention distillation.}
In the first stage, we freeze all pretrained ViT parameters and train only the new components introduced by our attention mechanism. We distill attention by matching the teacher attention distribution from the original softmax module to the student distribution produced by our method, using the same image batches as the downstream task. This warm-start stage is run for a short schedule (typically a fraction of an epoch to one epoch on ImageNet-1K) with a higherlearning rate for the new parameters, and uses AdamW with a cosine decay schedule and small or no warmup.

\paragraph{Stage 2: supervised finetuning.}
After distillation, we unfreeze the entire network and finetune end-to-end on
ImageNet-1K with a standard ViT recipe. We use AdamW, cosine learning-rate
decay with warmup, weight decay in the usual ViT range, and mini-batch sizes
consistent with the pretrained setup. Regularization and augmentation choices
are kept identical across baselines and our converted model to ensure a fair
comparison.

\begin{lstlisting}[language=Python,caption={Our MLP-based feature map and linear attention.},label={lst:mlp_linattn}]

class TaskSpecificProjections(nn.Module):
    # shared W_i^T x + b_i over tokens/heads
    def __init__(self, d: int, M: int):
        super().__init__()
        self.M = int(M)
        self.W = nn.Parameter(torch.randn(self.M, d) / math.sqrt(d))
        self.b = nn.Parameter(torch.zeros(self.M))

    def forward(self, x):  # (B,H,N,d) -> (B,H,N,M)
        return torch.einsum("md,bhnd->bhnm", self.W, x) + self.b

class ScalarMLP(nn.Module):
    # maps scalar u to L channels in one pass (L=1 gives OneMLP)
    def __init__(self, L=1, hidden=64, act="relu", nonneg=True):
        super().__init__()
        self.fc1 = nn.Linear(1, hidden)
        self.fc2 = nn.Linear(hidden, int(L))
        self.act = _act(act)
        self.nonneg = bool(nonneg)

    def forward(self, u):  # (T,1) -> (T,L)
        y = self.fc2(self.act(self.fc1(u)))
        if self.nonneg:
            y = F.relu(y)
        return y

class MLPLearnableFeatureMap(nn.Module):
    # phi(x) = (1/sqrt(M)) * [ f_l(w_i^T x) ]_{i,l}
    def __init__(self, M, L, hidden=64, act="relu", nonneg=True,
                 chunk=1_000_000, shared_channels=False,
                 ch_rms=False, ch_rms_target=0.1):
        super().__init__()
        self.M, self.L = int(M), int(L)
        self.scale = 1.0 / math.sqrt(self.M)
        self.chunk = int(chunk)
        self.shared = bool(shared_channels)
        self.ch_rms = bool(ch_rms)
        self.ch_rms_target = float(ch_rms_target)

        if self.shared:
            self.shared_mlp = ScalarMLP(self.L, hidden, act, nonneg)
            self.mlps = None
        else:
            self.mlps = nn.ModuleList(
                [ScalarMLP(1, hidden, act, nonneg) for _ in range(self.L)]
            )

    def forward(self, proj):  # (B,H,N,M) -> (B,H,N,M*L)
        B, H, N, M = proj.shape
        u = proj.reshape(-1, 1).float()  # (T,1)

        if self.shared:
            y = _chunked(self.shared_mlp, u, self.chunk)  # (T,L)
        else:
            ys = [_chunked(mlp, u, self.chunk) for mlp in self.mlps]
            y = torch.cat(ys, dim=1)  # (T,L)
        y = y.view(B, H, N, M, self.L)
        if self.ch_rms:
            eps = 1e-6
            rms = torch.sqrt(y.pow(2).mean((0, 1, 2, 3)) + eps)  # (L,)
            s = (self.ch_rms_target / (rms + eps)).clamp(max=1.0)
            y = y * s.view(1, 1, 1, 1, self.L)
        return (y * self.scale).reshape(B, H, N, M * self.L)

\end{lstlisting}

\end{document}